 \newcounter{theorem}
\newtheorem{problem}[theorem]{Problem}
\newlist{steps}{enumerate}{1}
\setlist[steps, 1]{leftmargin=45pt, label = \textbf{Step \arabic*}:}
\newcommand{%
	\def\svgwidth{1\columnwidth}
	\import{./figures/}{.pdf_tex}
}[2][1]{%
	\def\svgwidth{#1\columnwidth}
	\import{./figures/}{#2.pdf_tex}
}
\DeclareMathOperator{\interior}{int}
\DeclareMathOperator*{\argmin}{arg\,min}
\begin{document}

\ifbool{conf}{
	\title{\LARGE \bf
		A Rapidly-Exploring Random Trees Motion Planning Algorithm for Hybrid Dynamical Systems*
	}
	\author{Nan Wang and Ricardo G. Sanfelice% <-this % stops a space
		\thanks{*Research partially supported by NSF Grants no. ECS-1710621, CNS-2039054, and CNS-2111688, by AFOSR Grants no. FA9550-19-1-0053, FA9550-19-1-0169, and FA9550-20-1-0238, and by ARO Grant no. W911NF-20-1-0253. }% <-this % stops a space
		\thanks{Nan Wang and Ricardo G. Sanfelice are with the Department of Electrical and Computer Engineering, University of California, Santa Cruz, CA 95064, USA;
			{\tt\small nanwang@ucsc.edu, ricardo@ucsc.edu}}%
	}
	\maketitle
	}{
	\ititle{A Rapidly-Exploring Random Trees Motion Planning Algorithm for Hybrid Dynamical Systems}
	\iauthor{
		Nan Wang \\
		{\normalsize nanwang@ucsc.edu} \\
		Ricardo Sanfelice \\
		{\normalsize ricardo@ucsc.edu}
	}
	\idate{\today{}} 
	\iyear{2019}
	\irefnr{14}
	\makeititle
}
\begin{abstract}
	This paper proposes a rapidly-exploring random trees (RRT) algorithm to solve the motion planning problem for hybrid systems. At each iteration, the proposed algorithm, called HyRRT, randomly picks a state sample and extends the search tree by flow or jump, which is also chosen randomly when both regimes are possible. Through a definition of concatenation of functions defined on hybrid time domains, we show that HyRRT is probabilistically complete, namely, the probability of failing to find a motion plan approaches zero as the number of iterations of the algorithm increases. This property is guaranteed under mild conditions on the data defining the motion plan, which include a relaxation of the usual positive clearance assumption imposed in the literature of classical systems.
	The motion plan is computed through the solution of two optimization problems, one associated with the flow and the other with the jumps of the system. The proposed algorithm is applied to \ifbool{conf}{a walking robot}{an actuated bouncing ball system and a walking robot} so as to highlight its generality and computational features.
\end{abstract}
 
\section{Introduction}
Motion planning consists of finding a state trajectory and associated inputs, connecting the initial and final state while satisfying the dynamics of the system as well as a given safety criterion. Motion planning problems for purely continuous-time systems and purely discrete-time systems have been well studied in the literature; see, e.g., \cite{lavalle2006planning}. In recent years, various planning algorithms have been developed to solve motion planning problems, from  graph search algorithms \cite{likhachev2005anytime} to artificial potential field methods \cite{khatib1986real}. A main drawback of graph search algorithms is that the number of vertices grows exponentially as the dimension of states grows, which makes computing motion plans inefficient for high-dimensional systems. The artificial potential field method suffers from getting stuck at local minimum. Arguably, the most successful algorithm to solve  motion planning problems for purely continuous-time systems and purely discrete-time systems is the sampling-based RRT algorithm \cite{lavalle1998rapidly}. This algorithm incrementally constructs a tree of state trajectories toward random samples in the state space. Similar to graph search algorithms, RRT suffers from the curse of dimensionality, but, in practice, achieves rapid exploration in solving high-dimensional motion planning problems \cite{cheng2005sampling}. Compared with the artificial potential field method, RRT is probabilistically complete \cite{lavalle2001randomized}, which means that the probability of failing to find a motion plan converges to zero, as the number of samples approaches infinity.

While RRT algorithms have been used to solve motion planning problems for purely continuous-time systems \cite{lavalle2001randomized} and purely discrete-time systems \cite{branicky2003rrts}, fewer efforts have been devoted to applying RRT-type algorithms to solve motion planning problems for systems with combined continuous and discrete behavior. In \cite{branicky2003sampling}, a hybrid RRT algorithm is proposed for motion planning problems for a special class of hybrid systems, which follows the classical RRT scheme but does not establish key properties of the algorithm, such as probabilistic completeness.

This paper focuses on motion planning problems for hybrid systems modeled as hybrid equations \cite{goebel2009hybrid}. In this modeling framework, differential and difference equations with constraints are used to describe the continuous and discrete behavior of the hybrid system, respectively. This general hybrid system framework can capture most hybrid systems emerging in robotic applications, not only the class of hybrid systems considered in \cite{branicky2003sampling}, but also systems with memory states, timers, impulses, and constraints. 
For this broad class of hybrid systems, a motion planning algorithm is proposed in this paper. Following \cite{lavalle2001randomized}, the proposed algorithm, called HyRRT, incrementally constructs search trees, rooted in the initial state set and toward the random samples. At first, HyRRT draws samples from the state space. Then, it selects the vertex such that the state associated with this vertex has minimal distance to the sample. Next, HyRRT propagates the state trajectory from the state associated with the selected vertex. Following \cite{kleinbort2018probabilistic}, it is established that, under certain assumptions, HyRRT is probabilistically complete. To the authors' best knowledge, HyRRT is the first RRT-type algorithm for systems with hybrid dynamics that is probabilistically complete. The proposed algorithm is applied to \ifbool{conf}{a walking robot example}{an actuated bouncing ball system and a walking robot example} so as to assess its capabilities. \ifbool{conf}{}{}

The remainder of the paper is structured as follows. Section \ref{section:preliminary} presents notation and preliminaries. Section \ref{section:problemstatement} presents the problem statement and introduction of  \ifbool{conf}{application}{applications}. Section \ref{section:hybridRRT} presents the HyRRT algorithm. Section \ref{section:pc} presents the analysis of the probabilistic completeness of HyRRT algorithm. Section \ref{section:illustration} presents the illustration of HyRRT in \ifbool{conf}{the example}{examples}. \ifbool{conf}{Due to space constraints, proofs will be published elsewhere.}{}
\section{Notation and Preliminaries}
\label{section:preliminary}
%This section introduces the notations and preliminaries used in this paper.
\subsection{Notation}
The real numbers are denoted as $\mathbb{R}$ and its nonnegative subset is denoted as $\mathbb{R}_{\geq 0}$. The set of nonnegative integers is denoted as $\mathbb{N}$. The notation $\interior I$ denotes the interior of the interval $I$. The notation $\overline{S}$ denotes the closure of the set $S$. The notation $\partial S$ denotes the boundary of the set $S$. Given sets $P\subset\mathbb{R}^n$ and $Q\subset\mathbb{R}^n$, the Minkowski sum of $P$ and $Q$, denoted as $P + Q$, is the set $\{p + q: p\in P, q\in Q\}$. The notation $|\cdot|$ denotes the Euclidean norm. 
The notation $\rge f$ denotes the range of the function $f$.
Given a point $x\in \mathbb{R}^{n}$ and a subset $S\subset \mathbb{R}^{n}$, the distance between $x$ and $S$ is denoted $\text{dist}(x, S) := \inf_{s\in S} |x - s|$. The notation $\mathbb{B}$ denotes the closed unit ball of appropriate dimension in the Euclidean norm. 
\ifbool{conf}{}{The probability of the event $M$ is denoted as $Pr(M)$. Given a set $S$, the notation $\mu(S)$ denotes its Lebesgue measure.}
\ifbool{conf}{}{The Lebesgue measure of the $n$-th dimensional unit ball, denoted $\zeta_{n}$, is such that
\begin{equation}
	\label{equation:zetan}
		\zeta_{n} := \left\{\begin{aligned}
		&\frac{\pi^{k}}{k!} &\text{ if } n = 2k, k\in \mathbb{N}\\
		&\frac{2(k!)(4\pi)^{k}}{(2k + 1)!} &\text{ if } n = 2k + 1, k\in \mathbb{N}.\\
		\end{aligned}\right.
\end{equation}; see \cite{gipple2014volume}.}

\subsection{Preliminaries}
A hybrid system $\mathcal{H}$ with inputs is modeled as  \cite{goebel2009hybrid}
\begin{equation}
\mathcal{H}: \left\{              
\begin{aligned}               
\dot{x} & = f(x, u)     &(x, u)\in C\\                
x^{+} & =  g(x, u)      &(x, u)\in D\\                
\end{aligned}   \right. 
\label{model:generalhybridsystem}
\end{equation}
where $x\in \mathbb{R}^n$ is the state, $u\in \mathbb{R}^m$ is the input, $C\subset \mathbb{R}^{n}\times\mathbb{R}^{m}$ represents the flow set, $f: \mathbb{R}^{n}\times\mathbb{R}^{m} \to \mathbb{R}^{n}$ represents the flow map, $D\subset \mathbb{R}^{n}\times\mathbb{R}^{m}$ represents the jump set, and $g:\mathbb{R}^{n}\times\mathbb{R}^{m} \to \mathbb{R}^{n}$ represents the jump map, respectively. The continuous evolution of $x$ is captured by the flow map $f$. The discrete evolution of $x$ is captured by the jump map $g$. The flow set $C$ collects the points where the state can evolve continuously. The jump set $D$ collects the points where jumps can occur.

Given a flow set $C$, the set $U_{C} := \{u\in \mathbb{R}^{m}: \exists x\in \mathbb{R}^{n}\text{ such that } (x, u)\in C\}$ includes all possible input values that can be applied during flows. Similarly, given a jump set $D$, the set $U_{D} := \{u\in \mathbb{R}^ {m}: \exists x\in \mathbb{R}^{n}\text{ such that } (x, u)\in D\}$ includes all possible input values that can be applied at jumps. These sets satisfy $C\subset \mathbb{R}^{n}\times U_{C}$ and $D\subset \mathbb{R}^{n}\times U_{D}$. Given a set $K\subset \mathbb{R}^{n}\times U_{\star}$, where $\star$ is either $C$ or $D$, we define
	$
	\Pi_{\star}(K) := \{x: \exists u\in U_{\star} \text{ s.t. } (x, u)\in K\}
	$
	as the projection of $K$ onto $\mathbb{R}^{n}$, and define \ifbool{conf}{$C' := \Pi_{C}(C)$ and $D' := \Pi_{D}(D)$.}{
	\begin{equation}
	\label{equation:Cprime}
	C' := \Pi_{C}(C)
	\end{equation} and 
	\begin{equation}
	\label{equation:Dprime}
	D' := \Pi_{D}(D).
	\end{equation}}

In addition to ordinary time $t\in \mathbb{R}_{\geq 0}$, we employ $j\in \mathbb{N}$ to denote the number of jumps of the evolution of $x$ and $u$ for $\mathcal{H}$ in (\ref{model:generalhybridsystem}), leading to hybrid time $(t, j)$ for the parameterization of its solutions and inputs. The domain of a solution to $\mathcal{H}$ is given by a hybrid time domain. A hybrid time domain is defined as a subset $E$ of $\mathbb{R}_{\geq 0}\times \mathbb{N}$ that, for each $(T, J)\in E$, $E\cap ([0, T]\times \{0, 1,..., J\})$ can be written as $\cup_{j = 0}^{J}([t_{j}, t_{j+1}],j)$ for some finite sequence of times $0=t_{0}\leq t_{1}\leq t_{2}\leq ... \leq t_{J+1} = T$. A hybrid arc $\phi: \dom \phi \to \mathbb{R}^{n}$ is a function on a hybrid time domain that, for each $j\in \mathbb{N}$, $t\mapsto \phi(t,j)$ is locally absolutely continuous on each interval $I^{j}:=\{t:(t, j)\in \dom \phi\}$ with nonempty interior. The definition of solution pair to a hybrid system is given as follows. For more details, see \cite{goebel2009hybrid}.
\begin{definition} 
	\label{definition:solution}
	(Solution pair to a hybrid system) Given a pair of functions $\phi:\dom \phi\to \mathbb{R}^{n}$ and $u:\dom u\to \mathbb{R}^{m}$, $(\phi, u)$ is a solution pair to (\ref{model:generalhybridsystem}) if $\dom (\phi, u) := \dom \phi = \dom u$ is a hybrid time domain, $(\phi(0,0), u(0,0))\in \overline{C} \cup D$, and the following hold:
	\begin{enumerate}[label=\arabic*)]
		\item For all $j\in \mathbb{N}$ such that $I^{j}$ has nonempty interior, 
		\begin{enumerate}[label=\alph*)]
			\item the function $t\mapsto \phi(t, j)$ is locally absolutely continuous,
			\item $(\phi(t, j),u(t, j))\in C$ for all $t\in \interior I^j$,
			\item the function $t\mapsto u(t,j)$ is Lebesgue measurable and locally bounded,
			\item for almost all $t\in I^j$, \ifbool{conf}{$\dot{\phi}(t,j) = f(\phi(t,j), u(t,j))$.}{
			\begin{equation}
			\dot{\phi}(t,j) = f(\phi(t,j), u(t,j)).
			\end{equation}
		}
		\end{enumerate}
		\item For all $(t,j)\in \dom (\phi, u)$ such that $(t,j + 1)\in \dom (\phi, u)$, \ifbool{conf}{
			$$
			(\phi(t, j), u(t, j))\in D \quad
			\phi(t,j+ 1) = g(\phi(t,j), u(t, j)).
			$$}{
		\begin{equation}
		\begin{aligned}
		(\phi(t, j), u(t, j))&\in D\\
		\phi(t,j+ 1) &= g(\phi(t,j), u(t, j)).
		\end{aligned}
		\end{equation}
	}
	\end{enumerate}
\end{definition}

HyRRT requires concatenating solution pairs. The concatenation operation of solution pairs is defined next.
\begin{definition}
	\label{definition:concatenation}
	(Concatenation operation) Given two functions $\phi_{1}: \dom \phi_{1} \to \mathbb{R}^{n}$ and $\phi_{2}:\dom \phi_{2} \to \mathbb{R}^{n}$, where $\dom \phi_{1}$ and $\dom \phi_{2}$ are hybrid time domains, $\phi_{2}$ can be concatenated to $\phi_{1}$ if $ \phi_{1}$ is compact and $\phi: \dom \phi \to \mathbb{R}^n$ is the concatenation of $\phi_{2}$ to $\phi_{1}$, denoted $\phi = \phi_{1}|\phi_{2}$, namely,
	\begin{enumerate}[label=\arabic*)]
		\item $\dom \phi = \dom \phi_{1} \cup (\dom \phi_{2} + \{(T, J)\}) $, where $(T, J) = \max \dom \phi_{1}$ and the plus sign denotes Minkowski addition;
		\item $\phi(t, j) = \phi_{1}(t, j)$ for all $(t, j)\in \dom \phi_{1}\backslash \{(T, J)\}$ and $\phi(t, j) = \phi_{2}(t - T, j - J)$ for all $(t, j)\in \dom \phi_{2} + \{(T, J)\}$.
	\end{enumerate}
\end{definition}

\ifbool{conf}{}{
The proof in this paper requires truncating functions defined on hybrid time domains. The truncation operation is defined next.
\begin{definition}
	\label{definition: truncation}
	(Truncation and translation operation) Given a function $\phi: \dom \phi \to \mathbb{R}^{n}$, where $\dom \phi$ is hybrid time domain, and pairs of hybrid time $(T_{1}, J_{1})\in \dom \phi$ and $(T_{2}, J_{2})\in \dom \phi$ such that $T_{1}\leq T_{2}$ and $J_{1} \leq J_{2}$, the function $\widetilde\phi: \dom \widetilde\phi \to \mathbb{R}^{n}$ is the truncation of $\phi$ between $(T_{1}, J_{1})$ and $(T_{2}, J_{2})$ and translation by $(T_{1}, J_{1})$  if
	\begin{enumerate}
		%	\item $\max_{t}\dom \widetilde\phi = T$, $\max_{j}\dom \widetilde\phi = J$;
		\item $\dom \widetilde\phi =  \dom \phi \cap ([T_{1}, T_{2}]\times \{J_{1}, J_{1} + 1, ..., J_{2}\}) - \{(T_{1}, J_{1})\}$, where the minus sign denotes Minkowski difference;
		\item $\widetilde\phi(t, j) = \phi(t + T_{1}, j + J_{1})$ for all $(t, j)\in \dom \widetilde\phi$.
	\end{enumerate}
\end{definition}

%A sample of truncation of solution pairs is illustrated in Figure \ref{fig:truncation}.  In these figures, the solid lines denote the flows and the dotted lines denote the jumps. The trajectory in Figure \ref{fig:truncation1} denotes the original function $\phi$. In Figure \ref{fig:truncation2}, the trajectory $\widetilde{\phi}$ denotes the truncation of $\phi$ between $(T_{1}, J_{1})$ and $(T_{2}, J_{2})$ following a translation by $(T_{1}, J_{1})$.
%\begin{figure}[htbp] 
%	\centering
%	\subfigure[The original trajectory $\phi$.]{\incfig[0.5]{}{1} \label{fig:truncation1} }
%	\centering
%	\subfigure[The truncation $\widetilde{\phi}$ of $\phi$ between $(T_{1}, J_{1})$ and $(T_{2}, J_{2})$ following a translation by $(T_{1}, J_{1})$.]{\incfig[0.25]{}{1} \label{fig:truncation2} }
%	\caption{The truncation of the original trajectory $\phi$ between $(T_{1}, J_{1})$ and $(T_{2}, J_{2})$ following a translation by $(T_{1}, J_{1})$.}
%	\label{fig:truncation}
%\end{figure} 
}
In the main result of this paper, the following definition of closeness between hybrid arcs is used; see \cite{goebel2009hybrid}.
\begin{definition}\label{definition:closeness}
	($(\tau, \epsilon)$-closeness of hybrid arcs) Given $\tau, \epsilon>0$, two hybrid arcs $\phi_{1}$ and $\phi_{2}$ are $(\tau, \epsilon)$-close if
	\begin{enumerate}
		\item for all $(t, j)\in \dom \phi_{1}$ with $t + j \leq \tau$, there exists $s$ such that $(s, j)\in \dom \phi_{2}$, $|t - s|< \epsilon$, and $|\phi_{1}(t, j) - \phi_{2}(s, j)| < \epsilon$;
		\item for all $(t, j)\in \dom \phi_{2}$ with $t + j \leq \tau$, there exists $s$ such that $(s, j)\in \dom \phi_{1}$, $|t - s|< \epsilon$, and $|\phi_{2}(t, j) - \phi_{1}(s, j)| < \epsilon$.
	\end{enumerate}
\end{definition}

\section{Problem Statement and Applications}
\label{section:problemstatement}
The motion planning problem for hybrid systems studied in this paper is  formulated as follows.
\begin{problem}
	\label{problem:motionplanning}
	Given a hybrid system $\mathcal{H}$ with input $u\in \mathbb{R}^{m}$ and state $x\in \mathbb{R}^{n}$, the initial state set $X_{0}\subset\mathbb{R}^{n}$, the final state set $X_{f}\subset\mathbb{R}^{n}$, and the unsafe set $X_{u}\subset\mathbb{R}^{n}\times \mathbb{R}^{m}$, find a pair $(\phi, u): \dom (\phi, u)\to \mathbb{R}^{n}\times \mathbb{R}^{m}$, namely, \emph{a motion plan}, such that for some $(T, J)\in \dom (\phi, u)$, the following hold:
	\begin{enumerate}[label=\arabic*)]
		\item $\phi(0,0) \in X_{0}$, namely, the initial state of the solution belongs to the given initial state set $X_{0}$;
		\item $(\phi, u)$ is a solution pair to $\mathcal{H}$ as defined in Definition \ref{definition:solution};
		\item $(T,J)$ is such that $\phi(T,J)\in X_{f}$, namely, the solution belongs to the final state set at hybrid time $(T, J)$;
		\item $(\phi(t,j), u(t, j))\notin X_{u}$ for each $(t,j)\in \dom (\phi, u)$ such that $t + j \leq T+ J$, namely, the solution pair does not intersect with the unsafe set before its state trajectory reaches the final state set.
	\end{enumerate}
Therefore, given sets $X_{0}$, $X_{f}$ and $X_{u}$, and a hybrid system $\mathcal{H}$ with data $(C, f, D, g)$, a motion planning problem $\mathcal{P}$ is formulated as
$
	\mathcal{P} = (X_{0}, X_{f}, X_{u}, (C, f, D, g)).
$
\end{problem}

\ifbool{conf}{There are some interesting special  cases of Problem \ref{problem:motionplanning}. For example, when $D = \emptyset$ ($C = \emptyset$) and $C$ ($D$) is nonempty, then $\mathcal{P}$ denotes the motion planning problem for purely continuous-time (discrete-time, respectively) systems under constraints. Therefore, Problem \ref{problem:motionplanning} covers the motion planning problems for purely continuous-time and purely discrete-time system studied in \cite{lavalle2001randomized} and \cite{lavalle2006planning}. Moreover, note that the unsafe set $X_{u}$ can be used to constrain both states and inputs.}{There are some special cases of Problem \ref{problem:motionplanning}. For example, when $D = \emptyset$ and $C$ is nonempty, then $\mathcal{P}$ denotes the motion planning problem for purely continuous-time systems under constraints. In addition, when $C = \emptyset$ and D is nonempty, then $\mathcal{P}$ denotes the motion planning problem for purely discrete-time systems under constraints. Therefore, Problem \ref{problem:motionplanning} covers the motion planning problems for purely continuous-time and purely discrete-time system studied in \cite{lavalle2001randomized} and \cite{lavalle2006planning}. Moreover, the unsafe set $X_{u}$ can be used to constrain both states and inputs.}

Problem \ref{problem:motionplanning} is illustrated in the following \ifbool{conf}{example}{examples}.
\ifbool{conf}{
\begin{example}(Walking robot)\label{example:biped}
		The state $x$ of the compass model of a walking robot is composed of the angle vector $\theta$ and the velocity vector $\omega$ \cite{grizzle2001asymptotically}. % as is shown in Figure \ref{fig:bipedsystem}. 
		The angle vector $\theta$ contains the planted leg angle $\theta_{p}$, the swing leg angle $\theta_{s}$, and the torso angle $\theta_{t}$. The velocity vector $\omega$ contains the planted leg angular velocity $\omega_{p}$, the swing leg angular velocity $\omega_{s}$, and the torso angular velocity $\omega_{t}$. The input $u$ is the input torque, where $u_{p}$ is the torque applied on the planted leg from the ankle, $u_{s}$ is the torque applied on the swing leg from the hip, and $u_{t}$ is the torque applied on the torso from the hip.
		The continuous dynamics of $x = (\theta, \omega)$ comes from the Lagrangian method and is given by \ifbool{conf}{$\dot{\theta} = \omega, \dot{\omega} = D_{f}(\theta)^{-1}( - C_{f}(\theta, \omega)\omega - G_{f}(\theta) + Bu) = : \alpha(x, u)$}{
		\begin{equation}
		\label{equation:omegaflowmap1}
		\dot{\theta} = \omega\quad \dot{\omega} = D_{f}(\theta)^{-1}( - C_{f}(\theta, \omega)\omega - G_{f}(\theta) + Bu) = : \alpha(x, u)
		\end{equation}}
		where $D_{f}$ and $C_{f}$ are the inertial and Coriolis matrices, respectively, and $B$ is the actuator  relationship matrix.
		\ifbool{conf}{}{
			\begin{figure}[htbp] 
				\centering
	\def\svgwidth{0.35\columnwidth}
	\import{./figures/}{bipedmodel_for_conf.pdf_tex}

				\caption{The biped system in Example \ref{example:biped}. The angle vector $\theta$ contains the planted leg angle $\theta_{p}$, the swing leg angle $\theta_{s}$, and the torso angle $\theta_{t}$. The velocity vector $\omega$ contains the planted leg angular velocity $\omega_{p}$, the swing leg angular velocity $\omega_{s}$, and the torso angular velocity $\omega_{t}$. The input $u$ is the input torque, where $u_{p}$ is the torque applied on the planted leg from the ankle, $u_{s}$ is the torque applied on the swing leg from the hip, and $u_{t}$ is the torque applied on the torso from the hip.} \label{fig:bipedsystem} 
			\end{figure} 
		}
		In \cite{short2018hybrid}, the input torques that produce an acceleration  $a$ for a special state $x$ are determined by a function $\mu$, defined as \ifbool{conf}{$\mu(x, a) := B^{-1}(D_{f}(\theta)a + C_{f}(\theta, \omega)\omega + G_{f}(\theta)).$}{
		\begin{equation}
		\label{equation:bipedpreforwardcontrol}
		\mu(x, a) := B^{-1}(D_{f}(\theta)a + C_{f}(\theta, \omega)\omega + G_{f}(\theta)).
		\end{equation}}
		By applying $u = \mu(x, a)$ to $\dot{\omega} = \alpha(x, u)$, we obtain
		$
		\label{equation:omegaflowmap}
		\dot{\omega} = a.
		$
		Then, the flow map $f$ is defined as 
		$$
		\label{model:bipedflowmap}
		f(x, a) := \left[\begin{matrix}
		\omega\\
		a
		\end{matrix}\right] \quad \forall (x, a)\in C.
		$$
		
		Flow is allowed when only one leg is in contact with the ground. To determine if the biped has reached the end of a step, a function $h$ is defined as
		$
		h(x) := \phi_{s} - \theta_{p}$ for all $x\in \mathbb{R}^{6}
		$
		where $\phi_{s}$ denotes the step angle.
		The condition $h(x) \geq 0$ indicates that only one leg is in contact with the ground. Thus, the flow set is given as 
		$
		\label{model:bipedflowset}
		C:= \{(x, a)\in \mathbb{R}^{6}\times \mathbb{R}^{3}: h(x)\geq 0\}.
		$
		Furthermore, a step occurs when the change of $h$ is such that $\theta_{p}$  is approaching $\phi_{s}$, and $h$ equals zero. Thus, the jump set $D$ is defined as
		$
		\label{model:bipedjumpset}
		D := \{(x, a)\in \mathbb{R}^{6}\times \mathbb{R}^{3}: h(x) = 0, \omega_{p} \geq 0\}.
		$
		
		Following \cite{grizzle2001asymptotically}, when a step occurs, the swing leg becomes the planted leg, and the planted leg becomes the swing leg. The function $\Gamma$ is defined to swap angles and velocity variables as
		$
		\label{equation:thetajumpmap}
		\theta^{+} = \Gamma(\theta).
		$
		The angular velocities after a step are determined by a contact model denoted as
		\ifbool{conf}{$\Omega(x) := (\Omega_{p}(x), \Omega_{s}(x), \Omega_{t}(x))$}{
			$
			\label{equation:omegajumpmap}
			\Omega(x) := \left[\begin{matrix}
			\Omega_{p}(x)\\
			\Omega_{s}(x)\\
			\Omega_{t}(x)
			\end{matrix}\right]
			$}, where $\Omega_{p}$, $\Omega_{s}$, and $\Omega_{t}$ are the angular velocity of the planted leg, swing leg, and torso, respectively. Then, the jump map $g$ is defined as
		\begin{equation}
			\label{equation:bipedjumpmap}
			g(x, a) := \left[\begin{matrix}
			\Gamma(\theta)\\
			\Omega(x)
			\end{matrix}\right]\quad \forall (x, a)\in D.
		\end{equation}
		
		A particular motion planning problem for the walking robot is to generate a walking gait.  The final state set is defined as $X_{f} = \{(\phi_{s}, -\phi_{s}, 0, 0.1, 0.1, 0)\}$ so that after the impact, the walking robot starts the next walking cycle. The initial state set is chosen as $X_{0} = \{x_{0}\in \mathbb{R}^{6}: x_{0} = g(x_{f}, 0), x_{f}\in X_{f}\}$. In setting $X_{0}$, the input argument of $g$ can be set arbitrarily because input does not affect the value of $g$; see (\ref{equation:bipedjumpmap}). In practice, there are constraints on the acceleration of the planted leg, swinging leg, and the torso, respectively. To capture these, the unsafe set is defined as $X_{u} = \{(x, a)\in \mathbb{R}^{6}\times \mathbb{R}^{3}: a_{1} \notin [a_{1}^{\min}, a_{1}^{\max}]\text{ or }  a_{2} \notin [a_{2}^{\min}, a_{2}^{\max}]\text{ or } a_{3} \notin [a_{3}^{\min}, a_{3}^{\max}]\text{ or } (x, a)\in D  \}$, where $a_{1}^{\min}$, $a_{2}^{\min}$, and $a_{3}^{\min}$ are the lower bounds of $a_{1}$, $a_{2}$, and  $a_{3}$, respectively, and $a_{1}^{\max}$, $a_{2}^{\max}$, and $a_{3}^{\max}$ are the upper bounds of $a_{1}$, $a_{2}$, and  $a_{3}$, respectively.
		
		\ifbool{conf}{}{In this example, the step angle $\phi_{s}$ is set as $0.70$. The length of the torso $l_{t}$ and the legs $l_{l}$ are set as $1$. The leg mass, hip mass and torso mass are set as $1$. The walking velocity is set as $0.6$. $ a_{1}^{\min}$ and $a_{1}^{\max}$ are set as $-3$ and $3$, respectively. $ a_{2}^{\min}$ and $a_{2}^{\max}$ are set as $-3$ and $3$, respectively. $ a_{3}^{\min}$ and $a_{3}^{\max}$ are set as $-0.2$ and $0.2$, respectively.}
	\end{example}
	In the forthcoming \ifbool{conf}{Example \ref{example:bipedillustration},}{Examples \ref{example:illustrationbb} and \ref{example:bipedillustration},} we employ HyRRT to solve \ifbool{conf}{this motion planning problem formulated in Example \ref{example:biped}}{these motion planning problems}.
}{
\begin{example}\label{example:bouncingball}(Actuated bouncing ball system)
	Consider a ball bouncing on a fixed horizontal surface as is shown in Figure \ref{fig:bouncingball}. The surface is located at the origin and, through control actions, is capable of affecting the velocity of the ball after the impact.  The dynamics of the ball while in the air is given by
	\begin{equation}
	\label{model:bouncingballflow}
	\dot{x} = \left[ \begin{matrix}
	x_{2} \\
	-\gamma
	\end{matrix}\right] =: f(x, u)\qquad \forall(x, u)\in C
	\end{equation}
	where $x :=(x_{1}, x_{2})\in \mathbb{R}^2$. The height of the ball is denoted by $x_{1}$. The velocity of the ball is denoted by $x_{2}$. The gravity constant is denoted by $\gamma$. 
	\begin{figure}[htbp] 
		\centering
		\parbox[h]{\ifbool{conf}{0.27}{0.4}\textwidth}{
			\centering
			\subfigure[The actuated bouncing ball system\label{fig:bouncingball} ]{%
	\def\svgwidth{\ifbool{conf}{0.27}{0.3}\columnwidth}
	%% Creator: Inkscape 1.0.1 (c497b03c, 2020-09-10), www.inkscape.org
%% PDF/EPS/PS + LaTeX output extension by Johan Engelen, 2010
%% Accompanies image file 'bouncingballfigure.pdf' (pdf, eps, ps)
%%
%% To include the image in your LaTeX document, write
%%   \input{<filename>.pdf_tex}
%%  instead of
%%   \includegraphics{<filename>.pdf}
%% To scale the image, write
%%   \def\svgwidth{<desired width>}
%%   \input{<filename>.pdf_tex}
%%  instead of
%%   \includegraphics[width=<desired width>]{<filename>.pdf}
%%
%% Images with a different path to the parent latex file can
%% be accessed with the `import' package (which may need to be
%% installed) using
%%   \usepackage{import}
%% in the preamble, and then including the image with
%%   \import{<path to file>}{<filename>.pdf_tex}
%% Alternatively, one can specify
%%   \graphicspath{{<path to file>/}}
%% 
%% For more information, please see info/svg-inkscape on CTAN:
%%   http://tug.ctan.org/tex-archive/info/svg-inkscape
%%
\begingroup%
  \makeatletter%
  \providecommand\color[2][]{%
    \errmessage{(Inkscape) Color is used for the text in Inkscape, but the package 'color.sty' is not loaded}%
    \renewcommand\color[2][]{}%
  }%
  \providecommand\transparent[1]{%
    \errmessage{(Inkscape) Transparency is used (non-zero) for the text in Inkscape, but the package 'transparent.sty' is not loaded}%
    \renewcommand\transparent[1]{}%
  }%
  \providecommand\rotatebox[2]{#2}%
  \newcommand*\fsize{\dimexpr\f@size pt\relax}%
  \newcommand*\lineheight[1]{\fontsize{\fsize}{#1\fsize}\selectfont}%
  \ifx\svgwidth\undefined%
    \setlength{\unitlength}{154.82067606bp}%
    \ifx\svgscale\undefined%
      \relax%
    \else%
      \setlength{\unitlength}{\unitlength * \real{\svgscale}}%
    \fi%
  \else%
    \setlength{\unitlength}{\svgwidth}%
  \fi%
  \global\let\svgwidth\undefined%
  \global\let\svgscale\undefined%
  \makeatother%
  \begin{picture}(1,0.86517615)%
    \lineheight{1}%
    \setlength\tabcolsep{0pt}%
    \put(0,0){\includegraphics[width=\unitlength,page=1]{bouncingballfigure.pdf}}%
    \put(0.03148802,0.65362816){\rotatebox{90}{\makebox(0,0)[rt]{\lineheight{1.25}\smash{\begin{tabular}[t]{l}\textit{position}\end{tabular}}}}}%
    \put(0,0){\includegraphics[width=\unitlength,page=2]{bouncingballfigure.pdf}}%
    \put(0.54893173,0.21566197){\makebox(0,0)[lt]{\lineheight{1.25}\smash{\begin{tabular}[t]{l}\textit{control}\\\textit{input}\end{tabular}}}}%
  \end{picture}%
\endgroup%

}
		}
		\parbox[h]{\ifbool{conf}{0.20}{0.4}\textwidth}{
			\centering
			\subfigure[A motion plan to the sample motion planning problem for actuated  bouncing ball system.\label{fig:samplesolutionbb}]{\includegraphics[width=\ifbool{conf}{0.20}{0.3}\textwidth]{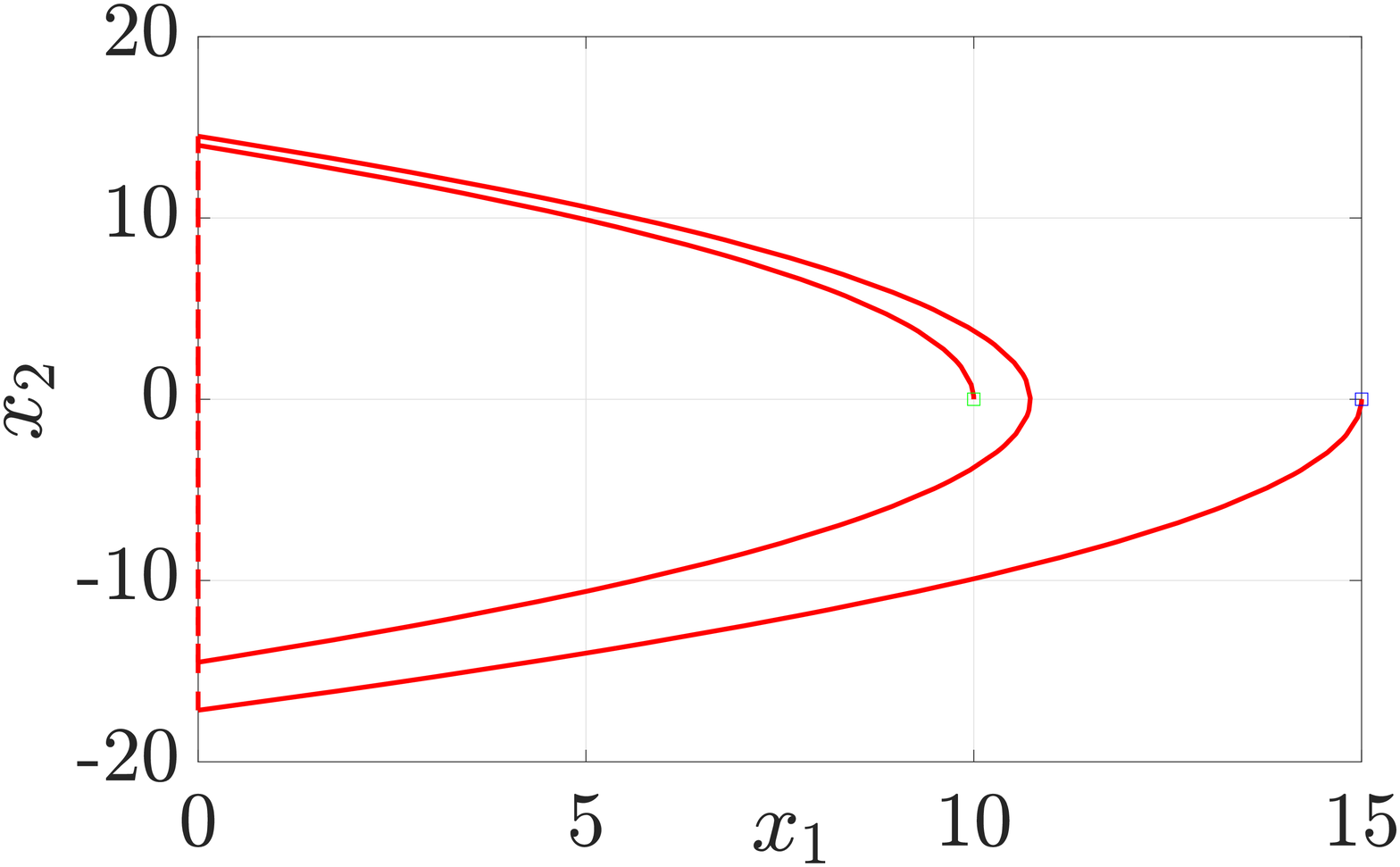}}
		}
		\caption{The actuated bouncing ball system in Example \ref{example:bouncingball}.}
	\end{figure} 
	The flow is allowed when the ball is above the surface. Hence, the flow set is
	\begin{equation}
	\label{model:bouncingballflowset}
		C := \{(x, u)\in \mathbb{R}^{2}\times \mathbb{R}: x_{1}\geq 0\}.
	\end{equation}
	At every impact, the velocity of the ball changes from pointing down to pointing up while the height remains the same. The dynamics at jumps of the actuated bouncing ball system is given as
	\begin{equation}
	x^{+} = \left[ \begin{matrix}
	x_{1} \\
	-\lambda x_{2}+u
	\end{matrix}\right] =: g(x, u)\qquad \forall (x, u)\in D
	\label{conservationofmomentum}
	\end{equation}
	where $u\geq 0$ is the input and $\lambda \in (0,1)$ is the coefficient of restitution. 
	%\textcolor{red}{The surface has a constrained effect over the velocity after jumps. Let the lower bound over the input be $u_{\min}\geq 0$ and the upper bound over the input be $u_{\max}\geq u_{\min}$  such that $u_{\min}\leq u\leq u_{\max}$.   }
	The jump is allowed when the ball is on the surface with negative velocity. Hence, the jump set is 
	\begin{equation}
	\label{model:bouncingballjumpset}
%		D:= \{(x, u): x_{1} = 0, x_{2} \leq 0, \textcolor{red}{u_{\min}\leq u\leq u_{\max}}\}.
		D:= \{(x, u)\in \mathbb{R}^{2}\times \mathbb{R}: x_{1} = 0, x_{2} \leq 0, u\geq 0\}.
	\end{equation}
	
	The hybrid model of the actuated bouncing ball system is given by  (\ref{model:generalhybridsystem}) where the flow map $f$ is given in (\ref{model:bouncingballflow}), the flow set $C$ is given in (\ref{model:bouncingballflowset}), the jump map $g$ is given in (\ref{conservationofmomentum}), and the jump set $D$ is given in (\ref{model:bouncingballjumpset}).
%	\begin{equation}   
%		\cal{H}: \left\{              
%		\begin{aligned}               
%		\dot{x} &= f(x, u)&    &(x, u)\in C\\                
%		x^{+} &=  g(x, u)&    &(x, u)\in D\\
%		%	x_{1} =0, x_{2} \leq 0, u_{\min}\leq u\leq u_{\max}\\                
%		\end{aligned}   \right.   
%		\label{model:bouncingballsystem}
%		\end{equation}

An instance of motion planning problem for the actuated bouncing ball system is given as follows.  The initial state set is $X_{0} = \{(15, 0)\}$. The final state set is $X_{f} = \{(10, 0)\}$. The unsafe set is $X_{u} =\{(x, u)\in \mathbb{R}^{2}\times \mathbb{R}: u\in[5, \infty)\}$.  The motion planning problem $\mathcal{P}$ is given as $\mathcal{P} = (X_{0}, X_{f}, X_{u}, (C, f, D, g))$. The state trajectory of this motion plan is shown in Figure \ref{fig:samplesolutionbb}. In the figure, the initial state set is denoted by a blue square. The final state set is denoted by a  green square. The red trajectory denotes the state trajectory of a sample motion plan.
%\begin{figure}[htbp]
%	\centering
%	\includegraphics[width=0.5\textwidth]{}
%	\caption{A sample motion plan to the motion planning problem for the bouncing ball system.}
%	\label{fig:samplesolutionbb}
%\end{figure}
\end{example}

}

\section{HyRRT: A Motion Planning Algorithm for Hybrid Systems}
\label{section:hybridRRT}
\ifbool{conf}{}{In this section, we introduce HyRRT, an algorithm to solve the motion planning problem $\mathcal{P} = (X_{0}, X_{f}, X_{u}, \mathcal{H})$ formulated in Problem \ref{problem:motionplanning}, where $\mathcal{H} = (C, f, D, g)$. }
%\ifbool{conf}{Section \ref{section:algorithmoverview} provides an overview and Section \ref{section:hybridRRTframewrok} an in-depth description.}{}
	\subsection{Overview}
	\label{section:algorithmoverview}
	HyRRT searches for a motion plan by incrementally constructing a search tree. 
	The search tree is modeled by a directed tree. A directed tree $\mathcal{T}$ is a pair $\mathcal{T} = (V, E)$, where $V$ is a set whose elements are called vertices and $E$ is a set of paired vertices whose elements are called edges. The edges in the directed tree are directed, which means the pairs of vertices that represent edges are ordered. The set of edges $E$ is defined as
	$
	E \subseteq \{(v_{1}, v_{2}): v_{1}\in V, v_{2}\in V, v_{1}\neq v_{2}\}.
	$
	The edge $e = (v_{1}, v_{2})\in E$ represents an edge from $v_{1}$ to $v_{2}$.  
	A path in $\mathcal{T} = (V, E)$ is a sequence of vertices
	$
	p = (v_{1}, v_{2}, ..., v_{k})
	$ such that $(v_{i}, v_{i + 1})\in E$ for all $i= 1, 2,..., k - 1$.
	 
	\begin{figure}[htbp]
		\centering
				\ifbool{conf}{\vspace{-0.4cm}}{}
		\subfigure[States and solution pairs.\label{fig:searchgraph_statespace}]{%
	\def\svgwidth{0.7\columnwidth}
	\import{./figures/}{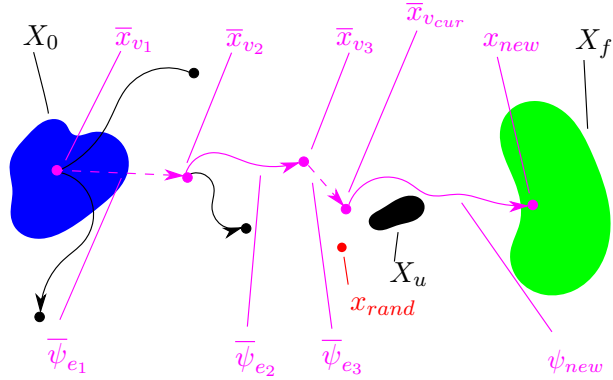}
}
		\subfigure[Search tree associated with the states and solution pairs in Figure \ref{fig:searchgraph_statespace}.\label{fig:searchgraph_searchgraph}]{%
	\def\svgwidth{0.7\columnwidth}
	\import{./figures/}{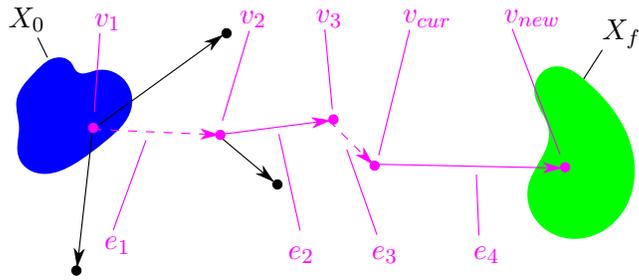}
}
		\ifbool{conf}{\vspace{-0.2cm}}{}
		\caption{The association between states/solution pairs and the vertices/edges in the search tree. The blue region denotes $X_{0}$, the green region denotes $X_{f}$, and the black region denotes $X_{u}$. The dots and lines between dots in Figure \ref{fig:searchgraph_searchgraph} denote the vertices and edges associated with the states and solution pairs in Figure \ref{fig:searchgraph_statespace}. The path $p = (v_{1}, v_{2}, v_{3}, v_{cur}, v_{new})$ in the search graph in Figure \ref{fig:searchgraph_searchgraph} represents the solution pair $\tilde{\psi}_{p} = \overline{\psi}_{e_{1}}|\overline{\psi}_{e_{2}}|\overline{\psi}_{e_{3}}|\psi_{new}$ in Figure \ref{fig:searchgraph_statespace}.}
		\ifbool{conf}{\vspace{-0.3cm}}{}
		\label{fig:searchgraph}
	\end{figure}
	
	Each vertex in the search tree $\mathcal{T}$ is associated with a state value of $\mathcal{H}$. Each edge in the search tree is associated with a solution pair to $\mathcal{H}$ that connects the state values associated with their endpoint vertices. The state value associated with vertex $v\in V$ is denoted as $\overline{x}_{v}$ and the solution pair associated with edge $e\in E$ is denoted as $\overline{\psi}_{e}$, as shown in Figure \ref{fig:searchgraph}. The solution pair that the path $p = (v_{1}, v_{2}, ..., v_{k})$ represents is the concatenation of all those solutions associated with the edges therein, namely,
	\begin{equation}
	\label{equation:concatenationpath}
	\tilde{\psi}_{p} := \overline{\psi}_{(v_{1}, v_{2})}|\overline{\psi}_{(v_{2}, v_{3})}|\ ...\  |\overline{\psi}_{(v_{k-1}, v_{k})}	
	\end{equation}
	where $\tilde{\psi}_{p}$ denotes the solution pair associated with the path $p$. For the notion of concatenation, see Definition \ref{definition:concatenation}. An example of the path $p$ and its associated solution pair $\tilde{\psi}_{p}$ is shown in Figure \ref{fig:searchgraph}.
	
	\ifbool{conf}{The proposed HyRRT algorithm requires a library of possible inputs. The input library $(\mathcal{U}_{C}, \mathcal{U}_{D})$ includes the input signals that can be applied during flows (collected in $\mathcal{U}_{C}$) and the input values that can be applied at jumps (collected in $\mathcal{U}_{D}$). }{
	}

	Next, we introduce the main steps executed by HyRRT. Given the motion planning problem $\mathcal{P} = (X_{0}, X_{f}, X_{u}, (C, f, D, g))$ and the input library $(\mathcal{U}_{C}, \mathcal{U}_{D})$, HyRRT performs the following steps:
	\begin{steps}
		\item Sample a finite number of points from $X_{0}$ and initialize a search tree $\mathcal{T} = (V, E)$ by adding vertices associated with each sampling point. 
		\item Randomly select one regime among flow regime and jump regime for the evolution of $\mathcal{H}$.
		\item Randomly select a point $x_{rand}$ from $C'$ ($D'$) if the flow (jump, respectively) regime is selected in Step 2.
		\item Find the vertex associated with the state value that has minimal Euclidean distance to $x_{rand}$, denoted $v_{cur}$, as is shown in Figure \ref{fig:searchgraph_searchgraph}. 
		%Denote the state value associated with $v_{cur}$ as $\overline{x}_{v_{cur}}$, as is shown in Figure \ref{fig:searchgraph_statespace}.	
		\item Randomly select an input signal (value) from $\mathcal{U}_{C}$ ($\mathcal{U}_{D}$) if the flow (jump, respectively) regime is selected. Then, compute a solution pair starting from $\overline{x}_{v_{cur}}$ with the selected input applied, denoted $\psi_{new} = (\phi_{new}, u_{new})$. Denote the final state of $\phi_{new}$ as $x_{new}$, as is shown in Figure \ref{fig:searchgraph_statespace}. If $\psi_{new}$ does not intersect with $X_{u}$, add a vertex $v_{new}$ associated with $x_{new}$ to $V$ and an edge $(v_{cur}, v_{new})$ associated with $\psi_{new}$ to $E$. Then, go to \textbf{Step 2}.
%		\item If there exists a path in the search graph that represents the motion plan, then return the motion plan. Otherwise, go to Step 2.
	\end{steps}
%	In Step 4, a library of possible inputs and the simulators to simulate solution pairs are required. Details about the input library, simulator of continuous and discrete dynamics are introduced in \cite{nwang2021motion}. In this paper, the constructed input library is denoted as $\mathcal{U} = (\mathcal{U}_{C}, \mathcal{U}_{D})$, where $\mathcal{U}_{C}$ denotes the input library during flows and $ \mathcal{U}_{D}$ denotes the input library at jumps.

\ifbool{conf}{}{
	\subsection{Input library}
\label{section:inputlibrary}
HyRRT requires a library of inputs to simulate solution pairs. Note that inputs are constrained by the flow set $C$ and jump set $D$, respectively. Given the flow set $C$ and the jump set $D$ of the hybrid system $\mathcal{H}$, the input signal set during flows, denoted  $\mathcal{U}_{C}$, and the input values at jumps, denoted $\mathcal{U}_{D}$, are described as follows. 
\begin{enumerate}
	\item The input signal applied during flows is a continuous-time signal. Therefore, the input signal during flows, denoted $\tilde{u}$, is specified by a function from an interval of time of the form $[0, t^{*}]$ to $U_{C}$ for some $t^{*}\in \mathbb{R}_{\geq 0}$, namely,
	$$
	\tilde{u}: [0, t^{*}]\to U_{C}.
	$$
	Definition \ref{definition:solution} also requires that $\tilde{u}$ is Lebesgue measurable and locally bounded.
	Then, the input signal set during flows, denoted $\mathcal{U}_{C}$, is a pre-determined set that collects possible $\tilde{u}$ that can be applied during flows. 
	
	Given $\tilde{u}\in\mathcal{U}_{C}$, the functional $\overline{t}: \mathcal{U}_{C} \to [0, \infty)$ returns the time duration of $\tilde{u}$. Namely, given any $\tilde{u}: [0, t^{*}]\to U_{C}$, $\overline{t}(\tilde{u}) = t^{*}$.
	\item The input applied at a single jump can be specified by an input value in $U_{D}$. The input set at jumps, denoted $\mathcal{U}_{D}$, is a pre-determined set that collects possible values of inputs that can be applied at jumps, namely, 
	$$
		\mathcal{U}_{D}\subset U_{D}.
	$$
\end{enumerate}
The pair of sets $(\mathcal{U}_{C}, \mathcal{U}_{D})$ defines the input library, denoted $\mathcal{U}$, namely,
$$
	\mathcal{U} := (\mathcal{U}_{C}, \mathcal{U}_{D}).
$$

\label{section:samplingmotionprimitives}
The following introduces a method to construct $\mathcal{U}_{C}$ and $\mathcal{U}_{D}$, given sets $C$ and $D$ respectively. The  procedure to construct $\mathcal{U}_{C}$ is given as follows.
\begin{steps}
	\item Set $t^{*}$ to a positive constant. Choose a finite number of points from $U_C$ and denote it $U^s_C$.
	\item For each point $u^{s}\in U_{C}^{s}$, construct an input signal $[0, t^{*}]\to \{u^{s}\}$ and add it to $\mathcal{U}_{C}$. 
\end{steps}
Set $\mathcal{U}_{D}$ can be constructed  similarly by choosing a finite number of points from $U_{D}$. 

%The method of choosing elements from $U_{C}$ and $U_{D}$ varies when solving different motion planning problems. An example of constructing $\mathcal{U}_{C}$ and $\mathcal{U}_{D}$ to solve the sample motion planning problem for bouncing ball system is given as follows.
%\begin{exmp}
%	 (Bouncing ball system in Example \ref{example:bouncingball}, revisited) In the sample problem in Example \ref{example:bouncingball}, since $X_{u} =\{(x, u)\in \mathbb{R}^{2}\times \mathbb{R}: u\in[5, \infty)\}$, the input values in $[5, \infty)$ are unsafe.  Because $C = \{(x, u)\in \mathbb{R}^{2}\times \mathbb{R}: x_{1}\geq 0\}$, then $U_{C} = \mathbb{R}$.  Note that the input does not affect flow map. Hence,  any input values in the interval $(-\infty, 5)$ can be chosen as the elements of $U_{C}^{s}$. For example, $U_{C}^{s}$ is constructed as $\{0\}$ in the forthcoming illustration in Section \ref{section:illustrationbb}. When $t^{*} = 0.1$, an input signal $[0, 0.1]\to \{0\}$ is constructed and added to $\mathcal{U}_{C}$.
%	 
%	 The jump set of the bouncing ball system is $D= \{(x, u)\in \mathbb{R}^{2}\times \mathbb{R}: x_{1} = 0, x_{2} \leq 0, u\geq 0\}$. Therefore,  $U_{D} = [0, \infty)$. Considering the inputs in $[5, \infty)$ are unsafe, the elements in $\mathcal{U}_{D}$ can be chosen from the interval $[0, 5)$. For example, $\mathcal{U}_{D}$ is constructed as $\{0, 1, 2, 3, 4\}$ in the forthcoming illustration in Section \ref{section:illustrationbb}.
%\end{exmp}
	\subsection{Simulator of continuous dynamics}
\label{section:flowsystemsimulator}
HyRRT requires a simulator to compute the solution pair starting from a given initial state $x_{0}\in C'$ with a given input signal $\tilde{u}\in \mathcal{U}_{C}$ applied, following continuous dynamics. The initial state $x_{0}$, the flow set $C$, and the flow map $f$ are used in the simulator. 

Note that when the simulated solution enters the intersection between the flow set $C$ and the jump set $D$, it can either keep flowing or stop to jump. In \cite{sanfelice2013toolbox}, the hybrid system simulator HyEQ uses a scalar priority option flag $rule$ to show whether the simulator gives priority to jumps ($rule= 1$), priority to flows ($rule= 2$), or no priority ($rule= 3$) when both $x\in C$ and $x\in D$ hold. When no priority is selected, then the simulator randomly selects to flow or jump.
	
	Therefore, the jump set $D$ and a priority option $rule\in \{flow, jump\}$  are input to the simulator. If $rule = flow$,  the simulation keeps flowing at $C\cap D$. If $rule = jump$, the simulation is terminated when entering $C\cap D$. In this paper, the option of random selection is not considered.  Therefore, the proposed simulator should be able to solve the following problem.
\begin{problem}
	\label{problem:flowsimulator}
	Given the flow set $C$, the flow map $f$, and the jump set $D$ of a hybrid system $\mathcal{H}$ with input $u\in \mathbb{R}^{m}$ and state $x\in \mathbb{R}^{n}$, a priority option flag $rule\in \{flow, jump\}$, an initial state $x_{0}\in \mathbb{R}^{n}$, and an input signal $\tilde{u}\in \mathcal{U}_{C}$ such that $(x_{0}, \tilde{u}(0))\in C$, find a pair $(\phi, u):  [0, t^{*}]\times \{0\} \to \mathbb{R}^{n} \times \mathbb{R}^{m}$, where $t^{*}\in [0, \overline{t}(\tilde{u})]$, such that the following hold:
	\begin{enumerate}
		\item $\phi(0, 0) = x_{0}$;
		\item For all $t\in [0, t^{*}]$, $u(t, 0) = \tilde{u}(t)$;
		\item If $[0, t^{*}]$ has nonempty interior,
		\begin{enumerate}
			\item the function $t\mapsto \phi(t, 0)$ is locally absolutely continuous,
			\item for all $t\in (0, t^{*})$,
				\begin{equation}
					\begin{aligned}
					(\phi(t, 0),u(t, 0))&\in C & \text{ if }rule = flow\\
					(\phi(t, 0),u(t, 0))&\in C\backslash D &\text{ if }rule = jump,\\
					\end{aligned}
				\end{equation}
			\item for almost all $t\in [0, t^{*}]$,
			\begin{equation}
			\label{equation:differentialequation}
			\dot{\phi}(t,0) = f(\phi(t,0), u(t,0)).
			\end{equation}
		\end{enumerate}
	\end{enumerate}
\end{problem}
\begin{remark}
	The solution to Problem \ref{problem:flowsimulator} is not unique. If there exists a solution $(\phi, u)$ to Problem \ref{problem:flowsimulator}, then for any hybrid time $(t_{1}, j_{1})\in \dom (\phi, u)$, the truncation of $(\phi, u)$ between $(0, 0)$ and  $(t_{1}, j_{1})$ is also a solution to Problem \ref{problem:flowsimulator}. The definition of truncation operation is provided in Definition \ref{definition: truncation}.
\end{remark}
In this paper, the simulator is designed to simulate the maximal solution to Problem \ref{problem:flowsimulator}. The definition of maximal solution is given as follows; see \cite{goebel2009hybrid}. 
\begin{definition}
	(Maximal solution) A solution $\psi$ to Problem \ref{problem:flowsimulator} is said to be maximal if there does not exist another solution $\psi'$ to Problem \ref{problem:flowsimulator} such that $\dom \psi$ is a proper subset of $\dom \psi'$ and $\psi(t, 0) = \psi'(t, 0)$ for all $t\in \dom_{t} \psi$.
\end{definition}

The module to simulate the maximal solution to Problem \ref{problem:flowsimulator} is called the simulator of continuous dynamics. As is shown in Figure \ref{fig:flowsimulator}, the input of this module is flow set $C$, flow map $f$, jump set $D$, priority option flag $rule$, initial state $x_{0}$, and input signal $\tilde{u}$. The output of this module is the maximal solution $(\phi, u)$ to Problem \ref{problem:flowsimulator} given the data above. Therefore, this module can be denoted as
\begin{equation}
\label{model:flowsystemsimulator}
(\phi, u) = continuous\_simulator(C, f, D, rule,  x_{0}, \tilde{u}).
\end{equation}

\begin{figure}[htbp] 
	\centering
	\def\svgwidth{0.8\columnwidth}
	%% Creator: Inkscape 1.0.1 (c497b03c, 2020-09-10), www.inkscape.org
%% PDF/EPS/PS + LaTeX output extension by Johan Engelen, 2010
%% Accompanies image file 'flowsimulator.pdf' (pdf, eps, ps)
%%
%% To include the image in your LaTeX document, write
%%   \input{<filename>.pdf_tex}
%%  instead of
%%   \includegraphics{<filename>.pdf}
%% To scale the image, write
%%   \def\svgwidth{<desired width>}
%%   \input{<filename>.pdf_tex}
%%  instead of
%%   \includegraphics[width=<desired width>]{<filename>.pdf}
%%
%% Images with a different path to the parent latex file can
%% be accessed with the `import' package (which may need to be
%% installed) using
%%   \usepackage{import}
%% in the preamble, and then including the image with
%%   \import{<path to file>}{<filename>.pdf_tex}
%% Alternatively, one can specify
%%   \graphicspath{{<path to file>/}}
%% 
%% For more information, please see info/svg-inkscape on CTAN:
%%   http://tug.ctan.org/tex-archive/info/svg-inkscape
%%
\begingroup%
  \makeatletter%
  \providecommand\color[2][]{%
    \errmessage{(Inkscape) Color is used for the text in Inkscape, but the package 'color.sty' is not loaded}%
    \renewcommand\color[2][]{}%
  }%
  \providecommand\transparent[1]{%
    \errmessage{(Inkscape) Transparency is used (non-zero) for the text in Inkscape, but the package 'transparent.sty' is not loaded}%
    \renewcommand\transparent[1]{}%
  }%
  \providecommand\rotatebox[2]{#2}%
  \newcommand*\fsize{\dimexpr\f@size pt\relax}%
  \newcommand*\lineheight[1]{\fontsize{\fsize}{#1\fsize}\selectfont}%
  \ifx\svgwidth\undefined%
    \setlength{\unitlength}{405.05496994bp}%
    \ifx\svgscale\undefined%
      \relax%
    \else%
      \setlength{\unitlength}{\unitlength * \real{\svgscale}}%
    \fi%
  \else%
    \setlength{\unitlength}{\svgwidth}%
  \fi%
  \global\let\svgwidth\undefined%
  \global\let\svgscale\undefined%
  \makeatother%
  \begin{picture}(1,0.22842442)%
    \lineheight{1}%
    \setlength\tabcolsep{0pt}%
    \put(0,0){\includegraphics[width=\unitlength,page=1]{flowsimulator.pdf}}%
    \put(0.08981929,0.18041256){\makebox(0,0)[lt]{\lineheight{1.25}\smash{\begin{tabular}[t]{l}$(C, f)$\end{tabular}}}}%
    \put(0.14883044,0.05395965){\makebox(0,0)[lt]{\lineheight{1.25}\smash{\begin{tabular}[t]{l}$x_{0}$\end{tabular}}}}%
    \put(0.14224217,0.00567052){\makebox(0,0)[lt]{\lineheight{1.25}\smash{\begin{tabular}[t]{l}$\tilde{u}$\end{tabular}}}}%
    \put(0.35866301,0.11832512){\makebox(0,0)[lt]{\lineheight{1.25}\smash{\begin{tabular}[t]{cc}simulator of\\continuous dynamics\end{tabular}}}}%
    \put(0.83862157,0.10008123){\makebox(0,0)[lt]{\lineheight{1.25}\smash{\begin{tabular}[t]{l}$(\phi, u)$\end{tabular}}}}%
    \put(0,0){\includegraphics[width=\unitlength,page=2]{flowsimulator.pdf}}%
    \put(0.12419364,0.10057964){\makebox(0,0)[lt]{\lineheight{1.25}\smash{\begin{tabular}[t]{l}$rule$\end{tabular}}}}%
    \put(0,0){\includegraphics[width=\unitlength,page=3]{flowsimulator.pdf}}%
    \put(0.14059433,0.14044955){\makebox(0,0)[lt]{\lineheight{1.25}\smash{\begin{tabular}[t]{l}$D$\end{tabular}}}}%
  \end{picture}%
\endgroup%

{6}
	\caption{The simulator module of continuous dynamics. }\label{fig:flowsimulator}   
\end{figure} 

The simulator module of continuous dynamics performs the following steps. 
\begin{steps}
	\item Construct a helper function $\hat{\phi}: [0, \overline{t}(\tilde{u})]\to \mathbb{R}^{n}$ by
	\begin{equation}
	\label{equation:flowequationintegration}
			\hat{\phi}(t) = x_{0} + \int_{0}^{t}f(\hat{\phi}(\tau),\tilde{u}(\tau))d\tau \quad t\in [0, \overline{t}(\tilde{u})].
	\end{equation} 
	\item Determine the time $\hat{t}\in [0, \overline{t}(\tilde{u})]$ by
	\begin{equation}
	\label{equation:integrationduration}
		\hat{t} = \left\{\begin{aligned}
			\begin{aligned}
			\max \{t&\in [0, \overline{t}(\tilde{u})]: \forall t'\in (0, t), (\hat{\phi}(t'), \tilde{u}(t'))\\ &\in C\}  \qquad \text{ if } rule = flow\\
			\max \{t&\in [0, \overline{t}(\tilde{u})]: \forall t'\in (0, t), (\hat{\phi}(t'), \tilde{u}(t'))\\ &\in C\backslash D\}  \qquad\text{ if } rule = jump.\\
			\end{aligned}
		\end{aligned}
		\right.
	\end{equation}
	\item Construct the solution function pair $(\phi, u): [0, \hat{t}]\times \{0\}\to \mathbb{R}^{n}\times \mathbb{R}^{m}$ by 
	\begin{equation}
	\label{equation:flowsegment}
		\begin{aligned}
			\phi(t, 0) &= \hat{\phi}(t)\quad t\in [0, \hat{t}]\\
			u(t, 0) &= \tilde{u}(t)\quad t\in [0, \hat{t}].\\
		\end{aligned}
	\end{equation}
\end{steps}
The solution function pair $(\phi, u)$ is the maximal solution to Problem \ref{problem:flowsimulator} and, hence, the output of module $continuous\_simulator$. Note that the challenge in implementing Step 1 is the integration in (\ref{equation:flowequationintegration}). It can be approximated by employing numerical integration methods. The challenge in implementing Step 2 is determining $\hat{t}$ in (\ref{equation:integrationduration}) during the integration. It can be approximated by employing zero-crossing detection algorithms. Next, we discuss how numerical integration scheme and zero-crossing detection algorithm are modeled, following a computational approach to implement the simulator of continuous dynamics.

\subsubsection{Numerical integration scheme model}
For most systems, the closed-form expression of the integration in (\ref{equation:flowequationintegration}) is not available and the integration has to be computed numerically.
The numerical integration scheme can be modeled as
\begin{equation}
\label{equation:numericalintegrationimplicit}
	F_{s}(x, x^{+}, \tilde{u}, f, t) = 0
\end{equation} where $s$ denotes the step size,  $x$ denotes the state at current step, $x^{+}$ denotes the approximation of the state at the next step, $\tilde{u}$ denotes the applied input signal, $f$ denotes the flow map, and $t$ denotes the time at current step. $x^{+}$ can be obtained by solving the equation in (\ref{equation:numericalintegrationimplicit}), given $x$, $\tilde{u}$, $f$ and $t$. Two examples on how (\ref{equation:numericalintegrationimplicit}) models explicit and implicit numerical integration scheme are given as follows.
\begin{example}
	(Forward Euler method) Given the differential equation
	\begin{equation}
	\label{equation:governequationexp}
		\dot{x} = f(x, u)
	\end{equation} with initial state $x_{0}$ and input signal $\tilde{u}\in \mathcal{U}_{C}$,
	the integration scheme using forward Euler method is
	\begin{equation}
	\label{equation:forwardeulerintegrationscheme}
		x^{+} = x + sf(x, \tilde{u}(t))		
	\end{equation}
	where $s$ denotes the step size,  $x$ denotes the state at current step, $x^{+}$ denotes the approximation of state at next step and $t$ denotes the time at current step. Following (\ref{equation:numericalintegrationimplicit}), (\ref{equation:forwardeulerintegrationscheme}) can be modeled as 
	\begin{equation}
		F_{s}(x, x^{+}, \tilde{u}, f, t) := x + sf(x, \tilde{u}(t)) - x^{+}
	\end{equation}
	and $x^{+}$ can be obtained by solving the equation 
	$$
	F_{s}(x, x^{+}, \tilde{u}, f, t)  = 0.
	$$
\end{example}

\begin{example}
	(Backward Euler method) Given the differential equation
	\begin{equation}
	\label{equation:governequationimp}
	\dot{x} = f(x, u)
	\end{equation} with initial state $x_{0}$  and input signal $\tilde{u}\in \mathcal{U}_{C}$,
	the integration scheme using backward Euler method is
	\begin{equation}
	\label{equation:backwardeulerintegrationscheme}
	x^{+} = x + sf(x^{+}, \tilde{u}(t + s))		
	\end{equation}
	where $s$ denotes the step size,  $x$ denotes the state at the current step, $x^{+}$ denotes the approximation of the state at the next step, $\tilde{u}$ denotes the applied input signal, and $t$ denotes the time at current step. Following (\ref{equation:numericalintegrationimplicit}), (\ref{equation:backwardeulerintegrationscheme}) can be modeled as 
	\begin{equation}
	F_{s}(x, x^{+}, \tilde{u}, f, t) := x + sf(x^{+}, \tilde{u}(t + s)) 
	- x^{+}
	\end{equation}
	and $x^{+}$ can be obtained by solving the equation 
	$$
	F_{s}(x, x^{+}, \tilde{u}, f, t)  = 0.
	$$
\end{example}

\subsubsection{Zero-crossing detection model to approximate $\hat{t}$}
The challenge in implementing Step 2 is to determine $\hat{t}$ in (\ref{equation:integrationduration}) during the integration. When the priority option $rule = flow$, $\hat{t}$ is the first time when $(\hat{\phi}, \tilde{u})$ steps outside $C$. When the priority option $rule = jump$, $\hat{t}$ is the first time when $(\hat{\phi}, \tilde{u})$ steps outside $C\backslash D$.

When $rule = flow$, if there exists a zero-crossing function $h_{f}: \mathbb{R}^{n}\times \mathbb{R}^{m}\to \mathbb{R}$ for the given flow set $C$ such that 
	\begin{enumerate}
		\item $h_{f}(x, u) > 0$ for all $(x, u)\in \interior C$,
		\item $h_{f}(x, u)< 0$ for all $(x, u)\in \interior (\mathbb{R}^{n}\times \mathbb{R}^{m}\backslash C)$,
		\item $h_{f}(x, u) = 0$ for all $(x, u)\in \partial  C$,
		\item $h_{f}$ is continuous over $\mathbb{R}^{n}\times \mathbb{R}^{m}$,
\end{enumerate}
then $\hat{t}$ is the first time when $h_{f}(\hat{\phi}, \tilde{u})$ crosses zeroes and can be approximated by zero-crossing detection algorithm. 

When $rule = jump$, if there exists a zero-crossing function $h_{g}: \mathbb{R}^{n}\times \mathbb{R}^{m}\to \mathbb{R}$ for the given flow set $C$ and the jump set $D$ such that 
	\begin{enumerate}
			\item $h_{g}(x, u) > 0$ for all $(x, u)\in \interior (C\backslash D)$,
		\item $h_{g}(x, u)< 0$ for all $(x, u)\in \interior (\mathbb{R}^{n}\times \mathbb{R}^{m}\backslash (C\backslash D))$,
		\item $h_{g}(x, u) = 0$ for all $(x, u)\in \partial  (C\backslash D)$,
		\item $h_{g}$ is continuous over $\mathbb{R}^{n}\times \mathbb{R}^{m}$,
	\end{enumerate}
	then $\hat{t}$ is the first time when $h_{g}(\hat{\phi}, \tilde{u})$ crosses zeroes and can be approximated by zero-crossing detection algorithm.
\begin{example}
	(Bouncing ball system in Example \ref{example:bouncingball}, revisited) In the bouncing ball system, the flow set is $C = \{(x, u)\in \mathbb{R}^{2}\times \mathbb{R}: x_{1}\geq 0\}$ and the jump set is $D = \{(x, u) \in \mathbb{R}^{2}\times \mathbb{R}: x_{1} = 0, x_{2} \leq 0, u\geq 0\}$. Then $h_{f}$ can be designed as:
	\begin{equation}
		h_{f}(x, u): = x_{1}\quad (x, u)\in \mathbb{R}^{n}\times \mathbb{R}^{m}.
	\end{equation} 
	Note that $D\subset \partial C$. Therefore, $\interior (C\backslash D) = \interior C$ and $\interior (\mathbb{R}^{n}\times \mathbb{R}^{m}\backslash (C\backslash D)) = \interior (\mathbb{R}^{n}\times \mathbb{R}^{m}\backslash C)$. Therefore, $h_{g}$ can be designed in the same way as $h_{f}$:
	\begin{equation}
		h_{g}(x, u): = x_{1}\quad (x, u)\in \mathbb{R}^{n}\times \mathbb{R}^{m}.
	\end{equation}
\end{example}

The zero-crossing detection algorithm that approximates $\hat{t}$ can be modeled as 
\begin{equation}
	\label{equation:zerocrossing}
	\hat{t} = t_{zcd} (x, x^{+}, \tilde{u}, t, rule, C, D)
\end{equation}
where $x$ and $x^{+}$ denote states at consecutive steps obtained by numerical integration, $\tilde{u}$ denotes the applied input signal, $t$ denotes the time at the current step, $rule$ denotes the priority option and $C$ and $D$ denotes the flow set and the jump set of the given hybrid system, respectively. The zero-crossing functions $h_{f}$ and $h_{g}$ are constructed based on sets $C$ and $D$. $\hat{t}$ is set as $-1$ when no zero-crossing is detected.

\subsubsection{A computational framework to approximate the simulator of continuous dynamics}
An overall computational approach to approximate the output of the simulator of continuous dynamics is introduced in this part. Notations used in the approach are provided as follows.
\begin{enumerate}
	\item The inputs to the simulator of continuous dynamics are the flow set $C$, flow map $f$ and jump set $D$ of the hybrid system $\mathcal{H}$, a priority option flag $rule\in \{flow, jump\}$, $x_{0}\in \overline{C'}$ and $\tilde{u}\in \mathcal{U}_{C}$ such that $(x_{0}, \tilde{u}(0))\in \overline{C}$.
	\item The integration scheme $F_{s}$ (modeled in (\ref{equation:numericalintegrationimplicit})) with its step size $s$ and the zero-crossing detection algorithm $t_{zcd}$ (modeled in (\ref{equation:zerocrossing})) with zero-crossing functions $h_{f}$ and $h_{g}$ are given. Zero-crossing functions $h_{f}$ and $h_{g}$ are constructed given the flow set $C$ and jump set $D$ in the inputs.
	\item The approximation of $\phi$ at $k$th step is denoted as $\phi_{s}(k, 0)$, where $s$ denotes the step size. The value of $u$ at $k$th step is denoted as $u_{s}(k, 0)$, where $s$ denotes the step size.
\end{enumerate} 
Then the computational approach to approximate the output of simulator of continuous dynamics is given in Algorithm \ref{algo:flowsystemsimulator}.
\begin{algorithm}[H]
	\caption{A computational approach to approximate the module $continuous\_simulator$}
	\begin{algorithmic}[1]
		\Function{$continuous\_simulator\_approximate$}{$C, f, D, rule,  x_{0}, \tilde{u}$}
		\State Set $t = 0$, $k = 0$; Set $\phi_{s}(0, 0) = x_{0}$, $u_{s}(0, 0) = \tilde{u}(0)$. 
		\While{$t\leq\overline{t}(\tilde{u})$}
			\State Compute $\phi_{s}(k + 1, 0)$ by solving $F_{s}(\phi_{s}(k, 0), \phi_{s}(k + 1, 0), \tilde{u}, f, t) = 0$.
			\State Set $u_{s}(k + 1, 0) = \tilde{u}(t + s)$.
			\State Set $\hat{t} = t_{zcd} (\phi_{s}(k, 0), \phi_{s}(k + 1, 0), \tilde{u}, t, rule, C, D)$.
			\If{$\hat{t}\neq-1$}
				\State Compute $\phi_{s}(k + 1, 0)$ by solving $F_{\hat{t} - t}(\phi_{s}(k, 0), \phi_{s}(k + 1, 0), \tilde{u}, f, t) = 0$.
				\State Set $u_{s}(k + 1, 0) = \tilde{u}(\hat{t}, 0)$.
				\State $t = \hat{t}$; $k = k + 1$.
				\State break.
			\EndIf
			\State $t = t + s$; $k = k + 1$.
		\EndWhile
		\State \Return $\phi_{s}$, $u_{s}$, $t$, $k$.
		\EndFunction
	\end{algorithmic}
\label{algo:flowsystemsimulator}
\end{algorithm}
The return $(\phi_{s}, u_{s}, t, k)$ of $continuous\_simulator\_approximate(C, f, D, rule,  x_{0}, \tilde{u})$ is a pointwise approximation of the output $(\phi, u)$ of $continuous\_simulator(C, f, D, rule,  x_{0}, \tilde{u})$. For $i = 0, 1, ... k - 1$, $\phi(i\times s, 0) \approx \phi_{s}(i, 0)$ and $u(i\times s, 0) = u_{s}(i, 0)$. Besides, the approximation of the last state and input are $\phi(t, 0) \approx \phi_{s}(k, 0)$ and $u(t, 0) = u_{s}(k, 0)$.
	\subsection{Simulator of discrete dynamics}\label{section:jumpsystemsimulator}
HyRRT algorithm requires a simulator to compute a purely discrete solution pair with a single jump, starting from an initial state $x_{0}\in D'$ with an input $u_{D}\in \mathcal{U}_{D}$ applied. Namely, the system simulator should be able to solve the following problem:
　\begin{problem}
	\label{problem:jumpsimulator}
	Given the jump set $D$ and jump map $g$ of hybrid system $\mathcal{H}$ with input $u\in \mathbb{R}^{m}$, state $x\in \mathbb{R}^{n}$, an initial state $x_{0}\in \mathbb{R}^{n}$ and an input value $u_{D}\in \mathcal{U}_{D}$ such that $(x_{0}, u_{D})\in D$, find a pair $(\phi, u):  \{0\}\times \{0, 1\} \to \mathbb{R}^{n} \times \mathbb{R}^{m}$ such that the following hold:
　	\begin{enumerate}
		\item $\phi(0, 0) = x_{0}$;
　		\item $u(0, 0) = u_{D}$;
　		\item $\phi(0, 1) = g(\phi(0, 0), u(0, 0))$.
	\end{enumerate}
\end{problem}
Problem \ref{problem:jumpsimulator} can be solved by constructing a function pair $(\phi, u):  \{0\}\times \{0, 1\} \to \mathbb{R}^{n} \times \mathbb{R}^{m}$ such that
\begin{equation}
\label{equation:jumpsegment}
\begin{aligned}
\phi(0, 0) &= x_{0}\\
\phi(0, 1) &= g( x_{0}, u_{D})\\
u(0, 0) & = u_{D}\\
u(0, 1)  & \in \mathbb{R}^{m}.
\end{aligned}
\end{equation}
In (\ref{equation:jumpsegment}), $u(0, 1)  \in  \mathbb{R}^{m}$ can be implemented by selecting a point in $ \mathbb{R}^{m}$ and assigning the point to $u(0, 1)$. The selection can be made by choosing a fixed value or simple random sampling in $\mathbb{R}^{m}$. The selection should avoid the input values such that $(\phi(0, 1), u(0, 1))$ falls into unsafe set. 

The function pair in (\ref{equation:jumpsegment}) can be constructed by a module called simulator of discrete dynamics. As is shown in Figure \ref{fig:jumpsimulator}, the inputs of this module are the jump set $D$, the jump map $g$, the initial state $x_{0}\in D'$, and the input $u_{D}\in \mathcal{U}_{D}$ such that $(x_{0}, u_{D})\in D$. The output of this module is the solution pair $(\phi, u)$ constructed in (\ref{equation:jumpsegment}). Therefore, the module can be denoted as
\begin{equation}
\label{model:jumpsystemsimulator}
(\phi, u) = discrete\_simulator(D, g, x_{0}, u_{D}).
\end{equation}

\begin{figure}[htbp] 
	\centering
	\def\svgwidth{1.2\columnwidth}
	%% Creator: Inkscape 1.0.1 (c497b03c, 2020-09-10), www.inkscape.org
%% PDF/EPS/PS + LaTeX output extension by Johan Engelen, 2010
%% Accompanies image file 'jumpsimulator.pdf' (pdf, eps, ps)
%%
%% To include the image in your LaTeX document, write
%%   \input{<filename>.pdf_tex}
%%  instead of
%%   \includegraphics{<filename>.pdf}
%% To scale the image, write
%%   \def\svgwidth{<desired width>}
%%   \input{<filename>.pdf_tex}
%%  instead of
%%   \includegraphics[width=<desired width>]{<filename>.pdf}
%%
%% Images with a different path to the parent latex file can
%% be accessed with the `import' package (which may need to be
%% installed) using
%%   \usepackage{import}
%% in the preamble, and then including the image with
%%   \import{<path to file>}{<filename>.pdf_tex}
%% Alternatively, one can specify
%%   \graphicspath{{<path to file>/}}
%% 
%% For more information, please see info/svg-inkscape on CTAN:
%%   http://tug.ctan.org/tex-archive/info/svg-inkscape
%%
\begingroup%
  \makeatletter%
  \providecommand\color[2][]{%
    \errmessage{(Inkscape) Color is used for the text in Inkscape, but the package 'color.sty' is not loaded}%
    \renewcommand\color[2][]{}%
  }%
  \providecommand\transparent[1]{%
    \errmessage{(Inkscape) Transparency is used (non-zero) for the text in Inkscape, but the package 'transparent.sty' is not loaded}%
    \renewcommand\transparent[1]{}%
  }%
  \providecommand\rotatebox[2]{#2}%
  \newcommand*\fsize{\dimexpr\f@size pt\relax}%
  \newcommand*\lineheight[1]{\fontsize{\fsize}{#1\fsize}\selectfont}%
  \ifx\svgwidth\undefined%
    \setlength{\unitlength}{409.76148425bp}%
    \ifx\svgscale\undefined%
      \relax%
    \else%
      \setlength{\unitlength}{\unitlength * \real{\svgscale}}%
    \fi%
  \else%
    \setlength{\unitlength}{\svgwidth}%
  \fi%
  \global\let\svgwidth\undefined%
  \global\let\svgscale\undefined%
  \makeatother%
  \begin{picture}(1,0.16889752)%
    \lineheight{1}%
    \setlength\tabcolsep{0pt}%
    \put(0,0){\includegraphics[width=\unitlength,page=1]{jumpsimulator.pdf}}%
    \put(0.15,0.1245006){\makebox(0,0)[lt]{\lineheight{1.25}\smash{\begin{tabular}[t]{l}$(D, g)$\end{tabular}}}}%
    \put(0.2,0.0738031){\makebox(0,0)[lt]{\lineheight{1.25}\smash{\begin{tabular}[t]{l}$x_{0}$\end{tabular}}}}%
    \put(0.2,0.02606862){\makebox(0,0)[lt]{\lineheight{1.25}\smash{\begin{tabular}[t]{l}$u_{D}$\end{tabular}}}}%
    \put(0.43,0.08934816){\makebox(0,0)[lt]{\lineheight{1.25}\smash{\begin{tabular}[t]{cc}simulator &of\\discrete &dynamics\end{tabular}}}}%
    \put(0.84047516,0.07480128){\makebox(0,0)[lt]{\lineheight{1.25}\smash{\begin{tabular}[t]{l}$(\phi, u)$\end{tabular}}}}%
  \end{picture}%
\endgroup%

 {6}
	\caption{Simulator of discrete dynamics. }\label{fig:jumpsimulator} 
\end{figure} 
}
	\ifbool{conf}{\vspace{-0.3cm}}{}
\subsection{HyRRT Algorithm}
\label{section:hybridRRTframewrok}
Following the overview in Section \ref{section:algorithmoverview}, the proposed algorithm is given in Algorithm \ref{algo:hybridRRT}. The inputs of Algorithm \ref{algo:hybridRRT} are the problem $\mathcal{P} = (X_{0}, X_{f}, X_{u}, (C, f, D, g))$, the input library  $ (\mathcal{U}_{C}, \mathcal{U}_{D})$, a parameter $p_{n}\in (0, 1)$, which tunes the probability of proceeding with the flow regime or the jump regime, an upper bound $K\in \mathbb{N}_{>0}$ for the number of iterations to execute, and two tunable sets $X_{c}\supset \overline{C'}$ and $X_{d}\supset D'$, which act as constraints in finding a closest vertex to $x_{rand}$. 
% \textbf{Step} 1 in the overview provided in Section \ref{section:algorithmoverview} corresponds to the function call $\mathcal{T}.init$ in line 1 of Algorithm \ref{algo:hybridRRT}. \textbf{Step} 2 is implemented in line 3. \textbf{Step} 3 is implemented by the function call $random\_state$ in lines 5 and 8. \textbf{Step} 4 corresponds to the function call $nearest\_neighbor$ in line 1 of the function call  $extend$. \textbf{Step} 5 is implemented by the function calls $new\_state$, $\mathcal{T}.add\_vertex$, and $\mathcal{T}.add\_edge$ in lines 3, 4, and 5 of the function call $extend$.
%\ifbool{conf}{\vspace{-0.6cm}}{}
Each function in Algorithm \ref{algo:hybridRRT} is defined next.
\subsubsection{$\mathcal{T}.init(X_{0})$} 
	The function call $\mathcal{T}.init$ is used to initialize a search tree $\mathcal{T} = (V, E)$.  It randomly selects a finite number of points from $X_{0}$. For each sampling point $x_{0}$, a vertex $v_{0}$ associated with $x_{0}$ is added to $V$. At this step, no edge is added to $E$.
	
%	The $\mathcal{T}.init(X_{0})$ function initializes a search tree $\mathcal{T} = (V, E)$ with $E = \emptyset$. The function samples $X_{0}$ to get a finite subset $X_{0}^{s}\subset X_{0}$. For each $x^{s}_{0}\in X^{s}_{0}$, a vertex $v_{0}$ is added to $V$ with the map $\overline{x}$ set as $\overline{x}(v_{0}) = x_{0}^{s}$. 	
	\subsubsection{$x_{rand}$$\leftarrow$$random\_state(S)$} 
	The function call $random\_state$ randomly selects a point from the set $S\subset \mathbb{R}^{n}$. It is designed to select from $\overline{C'}$ and $D'$ separately depending on the value of $r$ rather than to select from $\overline{C'}\cup D'$. The reason is that if $\overline{C'}$ ($D'$) has zero measure while $D'$ ($\overline{C'}$) does not, the probability that the point selected from $\overline{C'}\cup D'$ lies in $\overline{C'}$ ($D'$, respectively) is zero, which would prevent establishing probabilistic completeness.
	%The selected state is used as a target state in the forthcoming function $extend$. The selection is made in $\overline{C'}\cup g(D)$ because a state can be reached during flow if it is in $\overline{C'}$ and at a jump if it is in $g(D)$.
	
%	It is straightforward to implement the sampling in $C'$. Note that the explicit expression of $g(D)$ is not always available. In this case, the sampling in $G(D)$ can not be implemented directly. One option is to sample a pair $(x, u)\in D$ and return $g(x, u)$. Then the sampling in $\overline{C'}\cup G(D)$ is implemented by sampling a state $x_{rand}$ in $\overline{C'}\cup D'$. In this case, the function $RANDOM\_STATE$ considers the following cases over the sample $x_{rand}$:
%	\begin{enumerate}
%		\item If $x_{rand}\in \overline{C'}$, then the algorithm returns $x_{rand}$.
%		\item If $x_{rand}\in D'$, then the algorithm samples a $u_{rand}\in U_{D}$ such that $(x_{rand}, u_{rand})\in D$ and returns $g(x_{rand}, u_{rand})$. If $g$ is a set-valued map, then the algorithm returns a sample of $g(x_{rand}, u_{rand})$.
%		\item If $x_{rand}\in \overline{C'}\cap D'$, the algorithm executes a simple random sampling from the set $\{0, 1\}$. If it draws $0$, then go to item (a). Otherwise, go to item (b).
%	\end{enumerate}
	\subsubsection{$v_{cur}$$\leftarrow$$ nearest\_neighbor$$(x_{rand}, $$\mathcal{T}, $$\mathcal{H}, flag)$} 
	The function call $nearest\_neighbor$ searches for a vertex $v_{cur}$ in the search tree $\mathcal{T} = (V, E)$ such that its associated state value has minimal distance to $x_{rand}$. This function is implemented as follows.
	\begin{itemize}%[label=\arabic*)]
		\item When $flag = flow$, the following optimization problem is solved over $X_{c}$.
		\begin{problem}
			\label{problem:nearestneighborflow}
			Given a hybrid system $\mathcal{H} = (C, f, D, g)$, $x_{rand}\in \overline{C'}$, and a search tree $\mathcal{T} = (V, E)$, solve
			$$
			\begin{aligned}
			\argmin_{v\in V}& \quad |\overline{x}_{v} -  x_{rand}|\\
			\textrm{s.t.}& \quad\overline{x}_{v} \in X_{c}.
			\end{aligned}
			$$
		\end{problem}
		\item When $flag = jump$, the following optimization problem is solved over $X_{d}$.
		\begin{problem}
			\label{problem:nearestneighborjump}
			Given a hybrid system $\mathcal{H} = (C, f, D, g)$, $x_{rand}\in D'$, and a search tree $\mathcal{T} = (V, E)$, solve
			$$
			\begin{aligned}
			\argmin_{v\in V}& \quad |\overline{x}_{v} - x_{rand}|\\
			\textrm{s.t.}& \quad \overline{x}_{v} \in X_{d}.
			\end{aligned}
			$$
		\end{problem}
	\end{itemize}
The data of Problem \ref{problem:nearestneighborflow} and Problem \ref{problem:nearestneighborjump} comes from the arguments of the $nearest\_neighbor$ function call. This optimization problem can be solved by traversing all the vertices in $\mathcal{T} = (V, E)$.

	\subsubsection{$return\leftarrow new\_state(x_{rand}, v_{cur}, (\mathcal{U}_{C}, \mathcal{U}_{D}) , \mathcal{H} , X_{u},$$\\x_{new},\psi_{new})$} 
	\label{section:newstate}
	If $\overline{x}_{v_{cur}}\in \overline{C'}\backslash D'$ ($\overline{x}_{v_{cur}}$$\in$$D'\backslash \overline{C'}$), the function call $new\_state$ generates a new solution pair $\psi_{new}$ to hybrid system $\mathcal{H}$ starting from $\overline{x}_{v_{cur}}$ by applying a input signal $\tilde{u}$ (an input value $u_{D}$) randomly selected from $\mathcal{U}_{C}$ ($\mathcal{U}_{D}$, respectively). 
	%The solution pair $\psi_{new}$ is computed by simulators of continuous dynamics or discrete dynamics. 
	If $\overline{x}_{v_{cur}}$$\in\overline{C'}\cap D'$, then this function generates $\psi_{new}$ by randomly selecting flows or jump. The final state of $\psi_{new}$ is denoted as $x_{new}$. \ifbool{conf}{}{

If $\overline{x}_{v_{cur}}\in \overline{C'}\backslash D'$, the function call $new\_state$ randomly selects an input signal $\tilde{u}$ from $\mathcal{U}_{C}$ and feeds it to the $continuous\_simulator$. Therefore, 
	\begin{equation}
	\label{newstate:flow}
	\begin{aligned}
	\psi_{new} &= (\phi_{new}, u_{new}) \\
	&= continuous\_simulator(C, f, D, flow,  \overline{x}_{v_{cur}}, \tilde{u})
	\end{aligned}
	\end{equation}
	
	If $\overline{x}_{v_{cur}}$$\in$$D'\backslash \overline{C'}$, the function call $new\_state$ randomly selects an input value $u_{D}$ from $\mathcal{U}_{D}$ and feeds it to the $discrete\_simulator$. Therefore,
	\begin{equation}
	\label{newstate:jump}
	\begin{aligned}
	\psi_{new} &= (\phi_{new}, u_{new})\\
	&= discrete\_simulator(D, g, \overline{x}_{v_{cur}}, u_{D}),
	\end{aligned}
	\end{equation}

	If $\overline{x}_{v_{cur}}$$\in\overline{C'}\cap D'$, at first, the function call $new\_state$ randomly selects a real number $r_{fg}$ from the interval $[0, 1]$ and compares $r_{fg}$ with a tunable parameter $p_{fg}\in (0, 1)$. If $r_{fg} \leq p_{fg}$, then the function call $new\_state$ randomly selects an input signal $\tilde{u}$ from $\mathcal{U}_{C}$ and feeds it to the $continuous\_simulator$. If $r_{fg} > p_{fg}$, then the function call $new\_state$ randomly selects an input value $u_{D}$ from $\mathcal{U}_{D}$ and feeds it to the $discrete\_simulator$.
}

Note that the choices of inputs are random. Some RRT variants choose the optimal input that drives $x_{new}$ closest to $x_{rand}$. However, \cite{kunz2015kinodynamic} proves that such a choice makes the RRT algorithm probabilistically incomplete. After $\psi_{new}$ and $x_{new}$ are generated, the function $new\_state$ checks if there exists $(t, j)\in \dom \psi_{new}$ such that $\psi_{new}(t, j)\in X_{u}$. If so, then $\psi_{new}$ intersects with the unsafe set and $new\_state$ returns $false$. Otherwise, this function returns $true$. 

	\subsubsection{$v_{new}$$\leftarrow$$\mathcal{T}.add\_vertex(x_{new})$ and  $\mathcal{T}.add\_edge$$(v_{cur}$, $v_{new}$, $\psi_{new})$} 
	The function call $\mathcal{T}.add\_vertex(x_{new})$ adds a new vertex $v_{new}$ associated with $x_{new}$ to $\mathcal{T}$ and returns $v_{new}$. The function call $\mathcal{T}.add\_edge(v_{cur}, v_{new}, \psi_{new})$ adds a new edge $e_{new} = (v_{cur}, v_{new})$ associated with $\psi_{new}$  to $\mathcal{T}$. 
	\subsection{Solution Checking during HyRRT Construction}
	\label{section:checksolution}
	When the function call $extend$ returns $Reached$ or $Advanced$,  a solution checking function is employed to check if a path in $\mathcal{T}$ can be used to construct a motion plan to the given motion planning problem. If this function finds a path
	$
	p = ((v_{0}, v_{1}), (v_{1}, v_{2}), ..., (v_{n - 1}, v_{n})) = : (e_{0}, e_{1}, ..., e_{n - 1})
	$
	in $\mathcal{T}$ such that 
	\ifbool{conf}{ 1) $\overline{x}_{v_{0}} \in X_{0}$ and  2) $\overline{x}_{v_{n}} \in X_{f}$,}{
	\begin{enumerate}[label=\arabic*)]
		\item $\overline{x}_{v_{0}} \in X_{0}$,
		\item $\overline{x}_{v_{n}} \in X_{f}$,
		\item for each pair of adjacent edges $e_{i}$ and $e_{i + 1}$, where $i = 0, 1, ..., n - 2$, if $\overline{\psi}_{e_{i}}$ and $\overline{\psi}_{e_{i + 1}}$ are both purely continuous, then $\overline{\psi}_{e_{i + 1}}(0, 0)\in C$,
\end{enumerate}}
	then the solution pair $\tilde{\psi}_{p}$ is a motion plan to the given motion planning problem.
%	If such a path is found, then $check\_solution$ assigns $\tilde{\psi}_{p}$ to $\psi_{sol}$ and returns $true$. Otherwise, this function returns $false$. 
	\ifbool{conf}{
	In practice, item 2) is too restrictive. Given $\epsilon > 0$ representing the tolerance with this condition, we implement item 2) as
	$
	\text{dist}(\overline{x}_{v_{n}}, X_{f}) \leq \epsilon.
	$
}{While item 2) above requires that $\overline{x}_{v_{n}}$ belongs to $X_{f}$, in practice, given $\epsilon > 0$ representing the tolerance with this condition, we implement item 2) as
		\begin{equation}
		\label{equation:tolerance}
		\text{dist}(\overline{x}_{v_{n}}, X_{f}) \leq \epsilon.
		\end{equation}}
%	\begin{remark}
%		The items listed in Section \ref{section:checksolution} guarantee that $\psi$ is a motion plan to the given motion planning problem $\mathcal{P}$. Items 1) and 2) in Section \ref{section:checksolution} guarantee that $\tilde{\psi}_{p}$ starts from $X_{0}$ and ends within $X_{f}$. Item 3) in Section \ref{section:checksolution} is to guarantee that  $\psi_{p}$ is a solution pair to $\mathcal{H}$; see Lemma 4.3 in \cite{nwang2021motion}. Note that for any $e$ in $p$, $\overline{\psi}_{e}$ does not intersect with unsafe set because the solution pairs that intersect the unsafe set have been excluded by the function $new\_state$. Therefore, $\psi_{p}$ is a motion plan to the given motion planning problem $\mathcal{P}$.
%	\end{remark}
%	\ifbool{conf}{\vspace{-0.5cm}}{}
\begin{algorithm}[htbp]
	\caption{HyRRT algorithm}
	\label{algo:hybridRRT}
	\hspace*{\algorithmicindent} \textbf{Input: $X_{0}, X_{f}, X_{u}, \mathcal{H} = (C, f, D, g), (\mathcal{U}_{C}, \mathcal{U}_{D}), p_{n} \in (0, 1)$, $K\in \mathbb{N}_{>0}$}
	\begin{algorithmic}[1]
		\State $\mathcal{T}.init(X_{0})$;
		\For{$k = 1$ to $K$}
		\State randomly select a real number $r$ from $[0, 1]$;
		\If{$r\leq p_{n}$}
		\State $x_{rand}\leftarrow random\_state(\overline{C'})$;
		\State $extend(\mathcal{T}, x_{rand}, (\mathcal{U}_{C}, \mathcal{U}_{D}), \mathcal{H}, X_{u}, flow)$;
		\Else
		\State $x_{rand}\leftarrow random\_state(D')$;
		\State $extend(\mathcal{T}, x_{rand}, (\mathcal{U}_{C}, \mathcal{U}_{D}), \mathcal{H}, X_{u}, jump)$;
		\EndIf
		%		\State \htc{$extend(\mathcal{T}, x_{rand}, (\mathcal{U}_{C}, \mathcal{U}_{D}), \mathcal{H}, X_{u})$;}
		\EndFor
		\State \Return $\mathcal{T}$;
	\end{algorithmic}
	\begin{flushleft}
		$extend(\mathcal{T}, x, (\mathcal{U}_{C}, \mathcal{U}_{D}), \mathcal{H}, X_{u}, flag)$
	\end{flushleft}
	\begin{algorithmic}[1]
		\State $v_{cur}\leftarrow nearest\_neighbor(x, \mathcal{T}, \mathcal{H}, flag)$;
		\If {$new\_state(x, v_{cur}, (\mathcal{U}_{C}, \mathcal{U}_{D}), \mathcal{H}, X_{u}, x_{new}, \psi_{new})$}
		\State $v_{new} \leftarrow \mathcal{T}.add\_vertex(x_{new})$;
		\State $\mathcal{T}.add\_edge(v_{cur}, v_{new}, \psi_{new})$;
		\If{$x_{new} ==x $}
		\State \Return $Reached$;
		\Else
		\State \Return $Advanced$;
		\EndIf
		\EndIf
		\State \Return $Trapped$;
	\end{algorithmic}
\end{algorithm}

	\ifbool{conf}{\vspace{-0.4cm}}{}
\section{Probabilistic Completeness Analysis}\label{section:pc}
This section analyzes the probabilistic completeness property of HyRRT algorithm. Probabilistic completeness means that the probability that the planner fails to return a motion plan, if it exists, approaches zero as the number of samples approaches infinity. 
Section \ref{section:probabilisticcompleteness:subsection:preliminaries} presents the \ifbool{conf}{preliminaries}{premilinary results} to establish the probabilistic completeness. Section \ref{section:inflatedsystem} presents our main result showing that the HyRRT algorithm is probabilistically complete\ifbool{conf}{ under certain assumptions}{ with the clearance assumption relaxed}.
\ifbool{conf}{\subsection{Preliminaries about Probabilistic Completeness}}{\subsection{Probabilistic Completeness with Clearance}}
\label{section:probabilisticcompleteness:subsection:preliminaries}
%\htc{In simple words, a positive clearance assumes that the state-input pairs that are close to the given solution pair are safe, in the flow set if the solution pair is flowing, and in the jump set if the solution pair is at a jump.}
The following defines the clearance of a motion plan. 
\begin{definition}
	\label{definition:clearance}
	(clearance of a solution pair) Given a motion plan $\psi = (\phi, u)$ to the motion planning problem $\mathcal{P} = (X_{0}, X_{f}, X_{u}, (C, f, D, g))$, the clearance of $\psi = (\phi, u)$ is equal to the maximal $\delta_{clear} > 0$ if the following hold:
	\begin{enumerate}[label=\arabic*)]
		\item For all $(t, j)\in \dom \psi $ such that $I^{j}$ has nonempty interior, $(\phi(t, j) + \delta_{clear}\mathbb{B}, u(t, j) + \delta_{clear}\mathbb{B}) \subset C$;
		\item For all $(t, j)\in \dom \psi $ such that $(t, j + 1)\in \dom \psi$, $(\phi(t, j) + \delta_{clear}\mathbb{B}, u(t, j) + \delta_{clear}\mathbb{B})\subset D$;
		\item  For all $(t, j)\in \dom \psi$, $(\phi(t, j) + \delta_{clear}\mathbb{B}, u(t, j) + \delta_{clear}\mathbb{B}) \cap X_{u} =\emptyset$.
	\end{enumerate} 
\end{definition}
\ifbool{conf}{}{
%\begin{remark}
%	The clearance $\delta_{clear}$ in Definition \ref{definition:clearance} describes a "safe corridor" along the given motion plan
%%	, as is shown in Figure \ref{figure:corridor}
%	. The corridor does not intersect with the unsafe set. In addition, it requires that for the state-input pairs on the solution pairs exhibiting continuous dynamics, the ball centered at these states is within the flow set, and for the states exhibiting discrete dynamics, the balls centered at these states is within the jump set. If a solution pair to $\mathcal{H} = (C, f, D, g)$ flows and then jumps, then the flow set $C$ and jump set $D$ must have an overlap which covers a ball with radius $\delta_{clear}$.
%%	\begin{figure}
%%		\centering
%%		\incfig[1]{}
%%		\caption{This figure shows the clearance of a given solution pair. In this figure, blue region denotes the flow set. Red region denotes the jump set. The black region denotes the unsafe set. Given a solution pair denoted by the yellow lines in the figures above, the existence of green region implies that the clearance of the given solution pair is $\delta_{clear}$.\label{figure:corridor} }
%%	\end{figure}
%\end{remark}
}

\ifbool{conf}{
}{
Following \cite{kleinbort2018probabilistic}, this paper studies the probabilistic completeness when the input function is piecewise constant in the following sense. 
\begin{definition}
	\label{definition:piecewiseconstantfun}
	(piecewise constant function) A function $\tilde{u}_{c}: [0, T]\to U_{C}$ is a piecewise constant function for probabilistic completeness if there exists $\Delta t \in \mathbb{R}_{>0}$, called resolution, such that 
	\begin{enumerate}[label=\arabic*)]
		\item $\frac{T}{\Delta t}\in \mathbb{N}$, with $\frac{T}{\Delta t}$ denoted $k$;
		\item $t \mapsto \tilde{u}_{c}(t)$ is constant over $[(i - 1)\Delta t, i\Delta t)$ for all $i\in \{1, 2,..., k\}$.
		%		\item $t \mapsto \tilde{u}_{c}(t)$ is constant function for $t\in [(k - 1)\times \Delta t, k\times \Delta t)$, where $u_{i}\in U_{C}$.
	\end{enumerate}
\end{definition}
\begin{remark}
	Figure \ref{figure:constantfunction} shows an example of the piecewise constant function. 
	In Definition \ref{definition:piecewiseconstantfun}, $\tilde{u}_{c}(T)$ is not required to equal other values of $\tilde{u}_{c}$. This is because a jump may occur and the input may switch to another value at this time. 
	\begin{figure}[htbp]
		\centering
	\def\svgwidth{0.7\columnwidth}
	%% Creator: Inkscape 1.0.1 (c497b03c, 2020-09-10), www.inkscape.org
%% PDF/EPS/PS + LaTeX output extension by Johan Engelen, 2010
%% Accompanies image file 'constantfunction.pdf' (pdf, eps, ps)
%%
%% To include the image in your LaTeX document, write
%%   \input{<filename>.pdf_tex}
%%  instead of
%%   \includegraphics{<filename>.pdf}
%% To scale the image, write
%%   \def\svgwidth{<desired width>}
%%   \input{<filename>.pdf_tex}
%%  instead of
%%   \includegraphics[width=<desired width>]{<filename>.pdf}
%%
%% Images with a different path to the parent latex file can
%% be accessed with the `import' package (which may need to be
%% installed) using
%%   \usepackage{import}
%% in the preamble, and then including the image with
%%   \import{<path to file>}{<filename>.pdf_tex}
%% Alternatively, one can specify
%%   \graphicspath{{<path to file>/}}
%% 
%% For more information, please see info/svg-inkscape on CTAN:
%%   http://tug.ctan.org/tex-archive/info/svg-inkscape
%%
\begingroup%
  \makeatletter%
  \providecommand\color[2][]{%
    \errmessage{(Inkscape) Color is used for the text in Inkscape, but the package 'color.sty' is not loaded}%
    \renewcommand\color[2][]{}%
  }%
  \providecommand\transparent[1]{%
    \errmessage{(Inkscape) Transparency is used (non-zero) for the text in Inkscape, but the package 'transparent.sty' is not loaded}%
    \renewcommand\transparent[1]{}%
  }%
  \providecommand\rotatebox[2]{#2}%
  \newcommand*\fsize{\dimexpr\f@size pt\relax}%
  \newcommand*\lineheight[1]{\fontsize{\fsize}{#1\fsize}\selectfont}%
  \ifx\svgwidth\undefined%
    \setlength{\unitlength}{205.40932288bp}%
    \ifx\svgscale\undefined%
      \relax%
    \else%
      \setlength{\unitlength}{\unitlength * \real{\svgscale}}%
    \fi%
  \else%
    \setlength{\unitlength}{\svgwidth}%
  \fi%
  \global\let\svgwidth\undefined%
  \global\let\svgscale\undefined%
  \makeatother%
  \begin{picture}(1,0.54237366)%
    \lineheight{1}%
    \setlength\tabcolsep{0pt}%
    \put(0,0){\includegraphics[width=\unitlength,page=1]{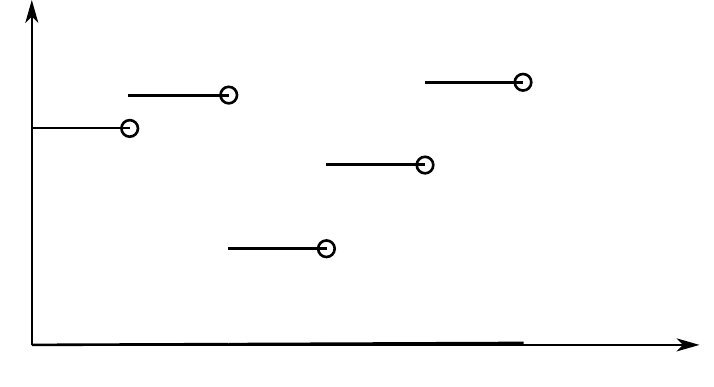}}%
    \put(0.00811071,0.46625139){\makebox(0,0)[lt]{\lineheight{1.25}\smash{\begin{tabular}[t]{l}$u$\end{tabular}}}}%
    \put(0.9066651,0.01384156){\makebox(0,0)[lt]{\lineheight{1.25}\smash{\begin{tabular}[t]{l}$t$\end{tabular}}}}%
    \put(-0.00442143,0.02010731){\makebox(0,0)[lt]{\lineheight{1.25}\smash{\begin{tabular}[t]{l}$0$\end{tabular}}}}%
    \put(0.14972391,0.0188542){\makebox(0,0)[lt]{\lineheight{1.25}\smash{\begin{tabular}[t]{l}$\Delta t$\end{tabular}}}}%
    \put(0.28348213,0.01815426){\makebox(0,0)[lt]{\lineheight{1.25}\smash{\begin{tabular}[t]{l}$2\Delta t$\end{tabular}}}}%
    \put(0.42900706,0.01280407){\makebox(0,0)[lt]{\lineheight{1.25}\smash{\begin{tabular}[t]{l}$3\Delta t$\end{tabular}}}}%
    \put(0.56686067,0.01405697){\makebox(0,0)[lt]{\lineheight{1.25}\smash{\begin{tabular}[t]{l}$4\Delta t$\end{tabular}}}}%
    \put(0.70220754,0.01029765){\makebox(0,0)[lt]{\lineheight{1.25}\smash{\begin{tabular}[t]{l}$T$\end{tabular}}}}%
    \put(0,0){\includegraphics[width=\unitlength,page=2]{constantfunction.pdf}}%
  \end{picture}%
\endgroup%

		\caption{Piecewise constant input function.\label{figure:constantfunction} }
	\end{figure}
\end{remark}

}

The following assumption is imposed on the input library.
\begin{assumption}
	\label{assumption:inputlibrary}
	The input library $(\mathcal{U}_{C}, \mathcal{U}_{D})$ is such that
	\begin{enumerate}[label=\arabic*)]
		\item Each input signal in $\mathcal{U}_{C}$ is constant and $\mathcal{U}_{C}$ includes all possible input signals such that their time domains are subsets of the interval $[0, T_{m}]$ for some $T_{m} > 0$ and their images belong to $U_{C}$. In other words, there exists $T_m > 0$ such that $\mathcal{U}_{C} = \{ \tilde{u} : \dom \tilde{u} = [0,T] \subset [0,T_m], \tilde{u} \text{ is } \text{constant} \text{ and } \rge \tilde{u} \in U_C\}$;
		\item $\mathcal{U}_D = U_D$.
	\end{enumerate}
\end{assumption}
\ifbool{conf}{}{
\begin{remark}
	Assumption \ref{assumption:inputlibrary} assumes that the input signals in $\mathcal{U}_{C}$ are all constant functions. Assuming that the input $u$ of the motion plan is piecewise constant, $u$ can be constructed by concatenating constant input signals in $\mathcal{U}_{C}$.  The values of those functions cover all possible input values in $U_{C}$. The domain of all those functions covers all possible duration in $[0, T_{m}]$. 

	In the function call $new\_state$, an input signal $\tilde{u}$ is randomly selected from $\mathcal{U}_{C}$. This selection can be implemented by randomly selecting $t_{m}$ from the interval $ [0, T_{m}]$ and $u_{C}$ from $U_{C}$, and then constructing the constant input signal $\tilde{u}:[0, t_{m}]\to u_{C}$.
%	This selection is the same as the selection method used in Algorithm 2 in \cite{kleinbort2018probabilistic}.
\end{remark}
}

The following assumption is imposed on the random selection in HyRRT.
\begin{assumption}
		\label{assumption:uniformsample}
%		\begin{enumerate}
The probability distributions of the random selection in the function calls $T.init$, $random\_state$, and  $new\_state$ are the uniform distribution.
%			\item \htc{The probability distribution used for the selection of a state in the function call $random\_state$ is uniform distribution};
%			\item \htc{The probability distribution used for the selection of input signals/values in the function call $new\_state$ is uniform distribution}.
%		\end{enumerate}
	\end{assumption}

\ifbool{conf}{}{
\begin{remark}
		By Assumption \ref{assumption:uniformsample}, the computation of the probability of randomly selecting a point that lies in a given set is simplified. When randomly selecting a point $s$ from the set $S$, the probability that $s$ belongs to a subset $R\subset S$ is
		\begin{equation}
		\label{equation:probabilitylebesgue}
		Pr(s\in R) = \int_{s\in R} \frac{1}{\mu(S)}\text{d}s = \frac{\mu(R)}{\mu(S)}
		\end{equation}
		where $\mu(R)$ denotes the Lebesgue measure of the set $R$ and $\mu(S)$ denotes the Lebesgue measure of the set $S$.
\end{remark}
}

The following assumptions are imposed on the flow map $f$ and the jump map $g$ of the hybrid system $\mathcal{H}$ in (\ref{model:generalhybridsystem}).
\begin{assumption}
	\label{assumption:flowlipschitz}
	The flow map $f$ is Lipschitz continuous. In particular, there exist $K^{f}_{x}, K^{f}_{u}\in \mathbb{R}_{>0}$ such that, for all $(x_{0}, x_{1}, u_{0}, u_{1})$ such that $(x_{0}, u_{0}) \in C$, $(x_{0}, u_{1}) \in C$, and $(x_{1}, u_{0}) \in C$,
	$$
	\begin{aligned}
	|f(x_{0}, u_{0}) - f(x_{1}, u_{0})|&\leq K^{f}_{x}|x_{0} - x_{1}|\\
	|f(x_{0}, u_{0}) - f(x_{0}, u_{1})|&\leq K^{f}_{u}|u_{0} - u_{1}|.
	\end{aligned}
$$
\end{assumption}
\ifbool{conf}{}{
\begin{remark}
	This assumption guarantees that the flow map is Lipschitz continuous for both state and input arguments. This assumption provides an explicit upper bound for the distance between the motion plan and the simulated solution pair in the forthcoming Lemma \ref{lemma:pccontinuouslowerbound}.
\end{remark}
}

\ifbool{conf}{}
{
	The forthcoming Lemma \ref{lemma:pccontinuouslowerbound} characterizes the probability that the simulated solution pair computed by the function call $new\_state$ in the flow regime is close to the motion plan. 
	\begin{lemma}
	\label{lemma:pccontinuouslowerbound}
	Given a hybrid system $\mathcal{H}$ that satisfies Assumption \ref{assumption:flowlipschitz} and an input library that satisfies item 1) of Assumption \ref{assumption:inputlibrary}, let $\psi = (\phi, u)$ be a purely continuous solution pair to $\mathcal{H}$ with clearance $\delta > 0$, $(\tau, 0) = \max \dom \psi$, and constant input function $u$. Suppose that $\tau \leq T_{m}$, where $T_{m}$ comes from item 1) in Assumption \ref{assumption:inputlibrary}. Suppose Assumption \ref{assumption:uniformsample} is satisfied.
	Let $\psi_{new} = (\phi_{new}, u_{new})$ be the purely continuous solution pair generated by the function $new\_state$ in Algorithm \ref{algo:hybridRRT} with $flag = flow$ and initial state $\overline{x}_{v_{cur}} = \phi_{new}(0, 0) \in \phi(0, 0) + \kappa_{1}\delta\mathbb{B}$ for some $\kappa_{1}\in (0, 1/2)$.
	Then, for each $\kappa_{2}\in (2\kappa_{1}, 1)$ and each $\epsilon\in (0, \frac{\kappa_{2}\delta}{2})$, there exists $p_{t}\in (0, 1]$ such that 
	\begin{equation}
	\label{equation:lemmaflow}
	\begin{aligned}
		&Pr(E_{1} \& E_{2})\geq \\
		&p_{t}\frac{\zeta_{n} \left(\max \left\{\min \left\{\frac{\frac{\kappa_{2}\delta}{2} - \epsilon -\exp(K_{x}^{f}\tau)\kappa_{1}\delta}{K^{f}_{u}\tau \exp(K_{x}^{f}\tau)}, \delta\right\}, 0\right\}\right)^{m}}{\mu(U_{C})}.
	\end{aligned}
	\end{equation}
	where 
	\begin{enumerate}
		\item $E_{1}$ denotes the event that $\phi$ and $\phi_{new}$ are $(\overline{\tau}, \kappa_{2}\delta)$-close where $(\tau', 0) = \max \dom \phi_{new}$ and $\overline{\tau} = \max (\tau, \tau')$;
		\item $E_{2}$ denotes the event that $x_{new} = \phi_{new}(\tau', 0)\in \phi(\tau, 0) + \kappa_{2}\delta \mathbb{B}$ where $x_{new}$ stores the final state of $\phi_{new}$ in the function call $new\_state$ as is introduced in Section \ref{section:newstate},
	\end{enumerate} and $\zeta_{n}$ is given in (\ref{equation:zetan}), $\mu(U_{C})$ denotes the Lebesgue measure of $U_{C}$, and $K_{x}^{f}$ and $K_{u}^{f}$ come from Assumption \ref{assumption:flowlipschitz}.  
\end{lemma}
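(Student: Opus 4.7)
The plan is to control $|\phi(t,0) - \phi_{new}(t,0)|$ via a Gr\"onwall-type argument on the two solutions of $\dot{x} = f(x,u)$ (Assumption \ref{assumption:flowlipschitz}), and then translate the resulting tube estimate into a probability on the random sampling of the input in $\mathcal{U}_C$. Because $u$ is constant by hypothesis and each element of $\mathcal{U}_C$ is constant by Assumption \ref{assumption:inputlibrary}, the input $u_{new}$ produced by $new\_state$ is also constant; along any interval on which both $\phi$ and $\phi_{new}$ remain in $C'$, standard ODE comparison gives
$$|\phi(t,0) - \phi_{new}(t,0)| \;\leq\; \exp(K_x^f t)\bigl(|\phi(0,0)-\phi_{new}(0,0)| + K_u^f\, t\,|u - u_{new}|\bigr),$$
which, using $|\phi(0,0)-\phi_{new}(0,0)| \leq \kappa_1 \delta$, can be forced below $\tfrac{\kappa_2\delta}{2} - \epsilon$ on $[0,\tau]$ by requiring $|u - u_{new}|$ to lie in the ball whose radius equals the quantity inside the inner fraction of (\ref{equation:lemmaflow}).

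Next I isolate the admissible radius: the $\min$ with $\delta$ in (\ref{equation:lemmaflow}) is valid because Definition \ref{definition:clearance} guarantees that the full $\delta$-ball around $u$ lies in $U_C$, so any $u_{new}$ within that ball is a legitimate input; the $\max$ with $0$ handles the degenerate regime in which $\kappa_1$ is too close to $\kappa_2/2$ relative to $\epsilon$. Once $|u-u_{new}|$ is in this ball, I then need to check (a) that $\phi_{new}$ actually extends through $t = \tau$ without leaving $C'$ --- this follows because the tube $\phi(t,0) + \kappa_2\delta\mathbb{B}$ is contained in $C'$ by clearance and $\kappa_2 < 1$, so $\phi_{new}$ cannot exit prematurely --- and (b) that the events $E_1$ and $E_2$ are both implied by the trajectory estimate, with the slack $\epsilon$ in the time coordinate being absorbed into $p_t$ through the random sampling of the signal's duration from $[0,T_m]$.

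For the probability calculation I use Assumption \ref{assumption:uniformsample}: sampling $\tilde{u}\in\mathcal{U}_C$ uniformly decomposes, via Assumption \ref{assumption:inputlibrary}, into an independent uniform selection of a value in $U_C$ and of a time in $[0,T_m]$. The probability that the value lands in the ball of radius $r$ centered at $u$ equals $\zeta_m r^m / \mu(U_C)$ (which I interpret as the appearance of $\zeta_n$ raised to power $m$ in (\ref{equation:lemmaflow})), while the probability that the duration lands in a subinterval of $[0,T_m]$ containing a neighborhood of $\tau$ of appropriate size gives the factor $p_t \in (0,1]$. Multiplying these two independent probabilities yields the stated lower bound.

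The main obstacle I anticipate is the mismatch $\tau' \neq \tau$ between the terminal times of $\phi_{new}$ and $\phi$, which forces $E_1$ to be defined through $\overline{\tau} = \max(\tau,\tau')$ and makes the closeness verification two-sided; carefully choosing $\epsilon$ as the admissible temporal slack, and using the clearance to prevent $\phi_{new}$ from leaving $C'$ early (so that $|\tau - \tau'|$ is controlled), is the delicate part of the argument. The Lipschitz bound with constant inputs is otherwise routine, so the probabilistic content essentially reduces to the geometric volume computation once this termination analysis is in place.
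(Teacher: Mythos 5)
Your overall strategy is the paper's: the same Lipschitz/Gr\"onwall comparison for constant inputs (Lemma 2 of \cite{kleinbort2018probabilistic}), the same factorization of the success probability into a duration factor $p_t$ and a value factor $p_u = \zeta_n r^m/\mu(U_C)$, and the same use of the clearance to keep the simulated pair inside $C$. Two of your intermediate steps, however, would fail as written. First, you cannot arrange for $\phi_{new}$ to ``extend through $t=\tau$'': the simulated solution cannot outlive the sampled input signal, whose duration $t'_m$ is drawn from $[0,T_m]$ and equals $\tau$ only on a null set. The paper instead samples $t'_m$ from the set $T_k\subset[\max\{\tau-\kappa_2\delta,0\},\tau]$ of durations for which $\phi(t',0)+\left(\frac{\kappa_2\delta}{2}-\epsilon\right)\mathbb{B}\subset\phi(\tau,0)+\frac{\kappa_2\delta}{2}\mathbb{B}$ for all $t'\in[t'_m,\tau]$ (nonempty by continuity of $\phi$, whence $p_t=\mu(T_k)/T_m>0$), and for $t\in[\tau',\tau]$ matches $t$ with $s=\tau'$, bounding $|\phi(t,0)-\phi_{new}(\tau',0)|$ by the diameter $\kappa_2\delta$ of the ball $\phi(\tau,0)+\frac{\kappa_2\delta}{2}\mathbb{B}$. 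In particular, $\epsilon$ is spatial slack reserved for the drift of $\phi$ on $[t'_m,\tau]$, not temporal slack absorbed into $p_t$.

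Second, ``the tube $\phi(t,0)+\kappa_2\delta\mathbb{B}$ is contained in $C'$, so $\phi_{new}$ cannot exit prematurely'' is not a valid inference: $C$ constrains the pair $(x,u)$, and membership of the state in the projection $C'$ says nothing about membership of $(\hat\phi(t),\tilde u'(t))$ in $C$. The correct argument (and the actual reason for the $\min\{\cdot,\delta\}$ in the radius) is that $\Delta u\le\delta$ together with the state estimate $|\hat\phi(t)-\phi(t,0)|\le\frac{\kappa_2\delta}{2}-\epsilon<\delta$ places $(\hat\phi(t),\tilde u'(t))$ inside $(\phi(t,0)+\delta\mathbb{B},u(t,0)+\delta\mathbb{B})\subset C$ by item 1) of Definition \ref{definition:clearance}, which is what forces the simulator to return $\tau'=t'_m$; your justification via ``the $\delta$-ball around $u$ lies in $U_C$'' is not what the clearance definition provides. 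Both gaps are patchable with ingredients you already have, but they are precisely the delicate points of the proof rather than routine details.
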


	\begin{proof}
Let $r' = \frac{\kappa_{2} \delta}{2} - \epsilon$. Note that $\epsilon \in (0, \frac{\kappa_{2}\delta}{2})$. Therefore, $r' > 0$. Construct a sequence of balls centered at $c(t) = \phi(t, 0)$ for all 
\begin{equation}\label{equation:Tk}
	t\in T_{k} := \{t_{l}\in [\max\{\tau - \kappa_{2}\delta, 0\}, \tau]: \forall t'\in [t_{l}, \tau], 
	\phi(t', 0) + r'\mathbb{B} \subset \phi(\tau, 0) + \frac{\kappa_{2}\delta}{2}\mathbb{B}\}.
\end{equation}
See the red balls in Figure \ref{figure:flowlemma} as an illustration of this sequence of balls. 

The construction of the balls centered at $c(t)$ with radius $r'$ as above is feasible. Note that $\phi$ is purely continuous. Therefore, for arbitrary small $\epsilon > 0$, we can find a lower bound $t_{l} \leq \tau$ such that for all $t'\in [t_{l}, \tau]$, $|\phi(t', 0) - \phi(\tau, 0)|< \epsilon$. For all $t'\in [t_{l}, \tau]$ and any point $x_{p}\in \phi(t', 0) + r'\mathbb{B}$, then
$$
\begin{aligned}
|x_{p}- \phi(\tau, 0)| &= |x_{p} - \phi(t', 0) +  \phi(t', 0) - \phi(\tau, 0)|\\
&\leq  |x_{p} - \phi(t', 0)| +  |\phi(t', 0) - \phi(\tau, 0)|\\
&\leq r' + \epsilon = \frac{\kappa_{2}\delta}{2}.
\end{aligned}
$$ Therefore, for all $t'\in [t_{l}, \tau]$, $x_{p}\in \phi(t', 0) + r'\mathbb{B}$ implies $x_{p}\in \phi(\tau, 0) + \frac{\kappa_{2}\delta}{2}\mathbb{B}$. This implication leads to $\phi(t', 0) + r'\mathbb{B}\subset \phi(\tau, 0) + \frac{\kappa_{2}\delta}{2}\mathbb{B}$ for all $t'\in [t_{l}, \tau]$. Therefore, $T_{k}$ is not empty set and, hence, the construction of balls centered at $c(t')$ with radius $r'$ is feasible.

\begin{figure}[htbp]
	\centering
	\def\svgwidth{0.9\columnwidth}
	\import{./figures/}{flowlemma_proof.pdf_tex}

	\caption{Illustration of Lemma \ref{lemma:pccontinuouslowerbound}.\label{figure:flowlemma} }
\end{figure}

As is assumed, the function $new\_state$ proceeds with $flag = flow$. Item 1) in Assumption \ref{assumption:inputlibrary} assumes that the input signals in $\mathcal{U}_{C}$ are constant, denoted $\tilde{u}': [0, t'_{m}]\to u'_{C}\in U_{C}$ where $t'_{m}$ denotes the time duration of $\tilde{u}'$. Besides, since Assumption \ref{assumption:uniformsample} is assumed, the function call $new\_state$ randomly selects $\tilde{u}'$ from $\mathcal{U}_{C}$, which is equivalent with randomly selecting $t'_{m}$ from $[0, T_{m}]$ and $u'_{C}$ from $U_{C}$, with uniform distribution. Since the input function $u$ is also assumed to be constant, denote $u: [0, \tau]\times\{0\}\to u_{C}\in U_{C}$.

 Since Assumption \ref{assumption:flowlipschitz} is assumed, from Lemma 2 in \cite{kleinbort2018probabilistic} (included in Appendix), we have  
\begin{equation}
\label{equation:bound}
	|\phi(t, 0) - \phi_{new}(t, 0)| \leq \exp(K_{x}^{f}t)\kappa_{1}\delta + K^{f}_{u}t\exp(K_{x}^{f}t)\Delta u
\end{equation}
for all $(t, 0)\in \dom \phi \cap \dom \phi_{new} = [0, \min \{\tau, \tau'\}]\times \{0\}$ where $\Delta u = |u_{C} - u'_{C}| $. 
Note that if the time duration $t'_{m}$ of the input signal $\tilde{u}'$ is selected such that 
\begin{equation}
\label{equation:tmtau}
t'_{m} \leq \tau,
\end{equation} then 
$$
\tau' \leq t'_{m} \leq \tau
$$ and  
$$\dom \phi \cap \dom \phi_{new} = [0, \min \{\tau, \tau'\}]\times \{0\} = [0,  \tau']\times \{0\}$$
because the time duration $\tau'$ of $\psi_{new}$ cannot exceed the time duration $t'_{m}$ of the applied input signal $\tilde{u}'$; see Problem \ref{problem:flowsimulator}.

If $\Delta u$ is such that
\begin{equation}
\label{equation:inequalityflowproof1}
\Delta u < \frac{\frac{\kappa_{2} \delta}{2} - \epsilon -\exp(K_{x}^{f}\tau)\kappa_{1}\delta}{K^{f}_{u}\tau \exp(K_{x}^{f}\tau)},
\end{equation}
then according to (\ref{equation:bound}), 
\begin{equation}\label{equation:flowlemmainequality}
	\begin{aligned}
	|\phi(t, 0) - \phi_{new}(t, 0)| &\leq \exp(K_{x}^{f}t)\kappa_{1}\delta + K^{f}_{u}t\exp(K_{x}^{f}t)\Delta u \\
	&\leq \exp(K_{x}^{f}\tau)\kappa_{1}\delta + K^{f}_{u}\tau\exp(K_{x}^{f}\tau)\Delta u\\
	&\leq \frac{\kappa_{2}\delta}{2} - \epsilon.
	\end{aligned}
\end{equation}
for all $t\in [0, \tau']$, which implies $\phi_{new}(t, 0) \in \phi(t, 0) +(\kappa_{2} \delta/2 - \epsilon)\mathbb{B}$ for all $t\in [0, \tau']$.

From Definition \ref{definition:clearance}, the following is also required such that the safety requirement is met:
\begin{equation}
\label{equation:inequalityflowproof2} 
\Delta u \leq \delta.
\end{equation}

Hence, to satisfy both (\ref{equation:inequalityflowproof1}) and (\ref{equation:inequalityflowproof2}), we have 
\begin{equation}
\label{equation:safety}
	\Delta u < \min \left\{\frac{\frac{\kappa_{2} \delta}{2} - \epsilon -\exp(K_{x}^{f}\tau)\kappa_{1}\delta}{K^{f}_{u}\tau \exp(K_{x}^{f}\tau)}, \delta\right\}.
\end{equation}

Then, we are ready to compute the probability of the occurrence of both $E_{1}$ and $E_{2}$. The probability that both events $E_{1}$ and $E_{2}$ occur should be at least the product of
\begin{enumerate}
	\item The probability $p_{t}$ of selecting $t'_{m}$ from $[0, T_{m}]$ such that 
	\begin{equation}
		\label{equation:tm}
	\begin{aligned}
		t'_{m} \in T_{k} = \{t_{l}\in [\max\{\tau - \kappa_{2}\delta, 0\}, \tau]: \forall t'\in [t_{l}, \tau], 
		\phi(t', 0) + r'\mathbb{B} \subset \phi(\tau, 0) + \frac{\kappa_{2}\delta}{2}\mathbb{B}\};
	\end{aligned}
	\end{equation}
	\item The probability $p_{u}$ of selecting $u'_{C}$ from $U_{C}$ such that 
	\begin{equation}
			\label{equation:inequalityflowproof3}
			\Delta u< \min \left\{\frac{\frac{\kappa_{2} \delta}{2} - \epsilon -\exp(K_{x}^{f}\tau)\kappa_{1}\delta}{K^{f}_{u}\tau \exp(K_{x}^{f}\tau)}, \delta\right\}.
	\end{equation}
\end{enumerate}

The probability $p_{t}$ in the first item above is
\begin{equation}
p_{t} = \frac{\mu(T_{k})}{T_{m}} \in (0, 1].
\end{equation}

Note that the choice of $u'_{C}$ that satisfies (\ref{equation:inequalityflowproof3}) is a ball in $\mathbb{R}^{m}$ centered at $u_{C}$ with radius $\min \left\{\frac{\frac{\kappa_{2}\delta}{2} - \epsilon -\exp(K_{x}^{f}\tau)\kappa_{1}\delta}{K^{f}_{u}\tau \exp(K_{x}^{f}\tau)}, \delta\right\}$. Therefore, the Lebesgue measure of this ball is $\zeta_{n}\left(\min \left\{\frac{\frac{\kappa_{2}\delta}{2} - \epsilon -\exp(K_{x}^{f}\tau)\kappa_{1}\delta}{K^{f}_{u}\tau \exp(K_{x}^{f}\tau)}, \delta\right\}\right)^{m}$ where $\zeta_{n}$ denotes the Lebesgue measure of the unit ball in $\mathbb{R}^{m}$.

Hence, according to (\ref{equation:probabilitylebesgue}), the probability $p_{u}$ in the second item above is
\begin{equation}
p_{u} = \frac{\zeta_{n} \left(\min \left\{\frac{\frac{\kappa_{2}\delta}{2} - \epsilon -\exp(K_{x}^{f}\tau)\kappa_{1}\delta}{K^{f}_{u}\tau \exp(K_{x}^{f}\tau)}, \delta\right\}\right)^{m}}{\mu(U_{C})}.
\end{equation}
where $\mu(U_{C})$ denotes the Lebesgue measure of $U_{C}$. Note that $p_{u}$ is probability and cannot be negative. Therefore, $p_{u}$ is rewritten as follows to rule out the negative values:
\begin{equation}
p_{u} = \frac{\zeta_{n} \left(\max \left\{\min \left\{\frac{\frac{\kappa_{2}\delta}{2} - \epsilon -\exp(K_{x}^{f}\tau)\kappa_{1}\delta}{K^{f}_{u}\tau \exp(K_{x}^{f}\tau)}, \delta\right\}, 0\right\}\right)^{m}}{\mu(U_{C})}.
\end{equation}
Hence, 
$$
\begin{aligned}
&Pr(E_{1}\&E_{2})\geq p_{t} p_{u} \\
&= p_{t}\frac{\zeta_{n} \left(\max \left\{\min \left\{\frac{\frac{\kappa_{2}\delta}{2} - \epsilon -\exp(K_{x}^{f}\tau)\kappa_{1}\delta}{K^{f}_{u}\tau \exp(K_{x}^{f}\tau)}, \delta\right\}, 0\right\}\right)^{m}}{\mu(U_{C})}.
\end{aligned}
$$

Next, we prove that if the random selection of $\tilde{u}'$ satisfies (\ref{equation:tm}) and (\ref{equation:inequalityflowproof3}), then both $E_{1}$ and $E_{2}$ occur.
We start with proving that $E_{1}$ occurs when (\ref{equation:tm}) and (\ref{equation:inequalityflowproof3}) hold. Note that if (\ref{equation:tm}) is satisfied, then (\ref{equation:tmtau}) holds and hence, $\tau' \leq \tau$ and $\overline{\tau} = \max (\tau, \tau') = \tau$ in $E_{1}$. The following shows that $\psi$ and $\psi_{new}$ satisfy each item in Definition \ref{definition:closeness}.
\begin{enumerate}
	\item This item proves that $\psi$ and $\psi_{new}$ satisfy the first item in Definition \ref{definition:closeness}.
	Because $\tau' \leq \tau$, according to (\ref{equation:flowlemmainequality}), for all $(t, 0)\in \dom \phi_{new}$ with $t + 0 \leq \tau' + 0 \leq \tau + 0 =\tau = \overline{\tau}$, there exists $s = t$ such that $(s, 0)\in \dom \phi$, $|t - s|= 0 < \kappa_{2}\delta$ and 
	$$
	|\phi(t, 0) - \phi_{new}(s, 0)| \leq \frac{\kappa_{2} \delta}{2} - \epsilon < \kappa_{2} \delta - \epsilon < \kappa_{2} \delta.
	$$
	Hence, item 1) in Definition \ref{definition:closeness} is proved. 
	\item To prove item 2) in Definition \ref{definition:closeness}, we consider the following two cases.
	\begin{enumerate}
		\item For all $(t, 0)\in \dom \phi$ with $0 \leq t + 0 \leq \tau' + 0 = \tau'$, there exists $s = t$ such that $(s, 0)\in \dom \phi_{new}$, $|t - s| = 0 < \kappa_{2} \delta$ and 
		$$
		|\phi(t, 0) - \phi_{new}(s, 0)| \leq \frac{\kappa_{2} \delta}{2} - \epsilon < \kappa_{2} \delta - \epsilon < \kappa_{2} \delta
		$$ because of (\ref{equation:flowlemmainequality}).
		\item This item considers the case of $(t, 0)\in \dom \phi$ with $\tau'\leq t + 0 \leq \tau + 0 = \tau$. We first shows $\tau' = t'_{m}$. 
		
		When both (\ref{equation:tm}) and (\ref{equation:inequalityflowproof3}) hold, the helper function $\hat{\phi}$ in (\ref{equation:flowequationintegration}) constructed by the simulator of  continuous dynamics is such that 
		$$
		\begin{aligned}
		|\hat{\phi}(t) - \phi(t, 0)| &\leq \exp(K_{x}^{f}t)\kappa_{1}\delta + K^{f}_{u}t\exp(K_{x}^{f}t)\Delta u\\
		&\leq \frac{\kappa_{2}\delta}{2} - \epsilon \\
		&< \delta
		\end{aligned}
		$$ for all $t\in [0, t'_{m}]$ because of Lemma 2 in \cite{kleinbort2018probabilistic}. Hence, for all $t\in [0, t'_{m}]$,
		$$
			\hat{\phi}(t)\in \phi(t, 0) + \delta\mathbb{B}.
		$$
		Since (\ref{equation:inequalityflowproof2}) holds and both $\tilde{u}'$ and $u$ are constant, therefore,
		$$
			\tilde{u}'(t) = u'_{C}\in u_{C} + \delta\mathbb{B} = u(t, 0) + \delta\mathbb{B}.
		$$ for all $t\in [0, t'_{m}]$.
		Since $\psi$ is assumed to have clearance $\delta$, according to item 1) of Definition \ref{definition:clearance}, then $(\hat{\phi}(t), \tilde{u}'(t))\in C$ for all $t\in [0, t'_{m}]$. Therefore, according to (\ref{equation:integrationduration}),
		$$
		\begin{aligned}
		\tau' &= \max \{t\in [0, t'_{m}]: \forall t'\in (0, t), (\hat{\phi}(t'), \tilde{u}'(t')) \in C\} \\
		&= t'_{m}.
		\end{aligned}
		$$ 
		
		Then we are ready to prove the second case. For all $(t, 0)\in \dom \phi$ with $\tau'\leq t + 0 \leq \tau + 0 = \tau$, let $s = \tau'$. Because of (\ref{equation:tm}), then $s = \tau' = t'_{m}\in [\max\{\tau - \kappa_{2}\delta, 0\}, \tau]$. Since $s \in [\max\{\tau - \kappa_{2}\delta, 0\}, \tau]$ and $t\in [\tau', \tau]$, therefore, we have 
		$$
		|t - s| \leq \kappa_{2}\delta.
		$$ Also, because of (\ref{equation:tm}), for all $(t, 0)\in \dom \phi$ with $\tau' =t'_{m}\leq t \leq \tau$, we have
		\begin{equation}
		\phi(t, 0) \in \phi(t, 0) + r'\mathbb{B} \subset \phi(\tau, 0) + \frac{\kappa_{2}\delta}{2}\mathbb{B}
		\end{equation}
		and 
		\begin{equation}
		\phi_{new}(s, 0) \in \phi(s, 0) + r'\mathbb{B} = \phi(t'_{m}, 0) + r'\mathbb{B} \subset \phi(\tau, 0) + \frac{\kappa_{2}\delta}{2}\mathbb{B}
		\end{equation}
		because of (\ref{equation:flowlemmainequality}) and (\ref{equation:tm}).
		Therefore, for all $(t, 0)\in \dom \phi$ with $\tau'\leq t \leq \tau$, $\phi(t, 0)$ and $\phi_{new}(s, 0)$ are both in a ball centered at $\phi(\tau, 0)$ with radius $\frac{\kappa_{2}\delta}{2}$. Note that the maximum distance between two points within a circle is its diameter. 
		
		Hence, for all $(t, 0)\in \dom \phi$ with $\tau'\leq t + 0 \leq \tau + 0 = \tau$, there exists $s = \tau'$ such that $(s, 0)\in \dom \phi_{new}$, $|t - s| < \kappa_{2} \delta$, and
		$$|\phi(t, 0) - \phi_{new}(s, 0)| < \kappa_{2} \delta.$$ 
	\end{enumerate}
\end{enumerate}
Therefore, when both (\ref{equation:tm}) and (\ref{equation:inequalityflowproof3}) hold, $\phi$ and $\phi_{new}$ are $(\overline{\tau}, \kappa_{2}\delta)$-close, and, hence, $E_{1}$ is guaranteed to occur. 

Next, we prove that $E_{2}$ occurs when both (\ref{equation:tm}) and (\ref{equation:inequalityflowproof3}) hold. Since $t'_{m} = \tau'$, then
$$
\begin{aligned}
\phi_{new}(\tau', 0) &= \phi_{new}(t'_{m}, 0)\\
&\in \phi(t'_{m}, 0) + r'\mathbb{B}.\\
\end{aligned} 
$$ because of (\ref{equation:flowlemmainequality}).
Since (\ref{equation:tm}) holds, then
$$
\begin{aligned}
 \phi(t'_{m}, 0) + r'\mathbb{B}&\subset \phi(\tau, 0) + \frac{\kappa_{2}\delta}{2}\mathbb{B}\\
&\subset \phi(\tau, 0) + \kappa_{2}\delta\mathbb{B}.
\end{aligned}
$$
Therefore, $\phi_{new}(\tau', 0)\in \phi(\tau, 0) + \kappa_{2}\delta\mathbb{B}$ and, hence, $E_{2}$ is guaranteed to occur.
\end{proof}
}

\begin{assumption}
	\label{assumption:pcjumpmap}
	The jump map $g$ is such that there exist $K^{g}_{x}\in \mathbb{R}_{>0}$ and $K^{g}_{u}\in \mathbb{R}_{>0}$ such that, for all $(x_{0}, u_{0}) \in D$ and $(x_{1}, u_{1}) \in D$,
	$$
	|g(x_{0}, u_{0}) - g(x_{1}, u_{1})|\leq K^{g}_{x}|x_{0} - x_{1}| + K^{g}_{u}|u_{0} - u_{1}|.
	$$
\end{assumption}
\ifbool{conf}{}{
	\begin{remark}
	This assumption relates the upper bound over the distance between $g(x_{0}, u_{0})$ and $g(x_{1}, u_{1})$ with the distance between the initial states $x_{0}$ and  $x_{1}$ and the distance between the inputs $u_{0}$ and $u_{1}$ which contributes to the forthcoming Lemma \ref{lemma:pcdiscretelowerbound}.
\end{remark}
}

\ifbool{conf}{}{
	The forthcoming Lemma \ref{lemma:pcdiscretelowerbound} characterizes the probability that the simulated solution pair computed by the function call $new\_state$ in the jump regime is close to the motion plan. 
	\begin{lemma}
	\label{lemma:pcdiscretelowerbound}
	Given a hybrid system $\mathcal{H}$ that satisfies Assumption \ref{assumption:pcjumpmap} and an input library that satisfies item 2) of Assumption \ref{assumption:inputlibrary}, let $\psi = (\phi, u)$ be a purely discrete solution pair to $\mathcal{H}$ with a single jump, i.e., $\max \dom \psi = (0, 1)$ and clearance $\delta > 0$. 
	Let $\psi_{new} = (\phi_{new}, u_{new})$ be the purely discrete solution pair generated by the function $new\_state$ in Algorithm \ref{algo:hybridRRT} with $flag = jump$ and initial state $\overline{x}_{v_{cur}} = \phi_{new}(0, 0)\in \phi(0, 0) + \kappa_{1}\delta\mathbb{B}$ for some positive $\kappa_{1}\in (0, 1]$.
	Then, for any $\kappa_{2}\in (0, 1]$,  we have that
	\begin{equation}
	Pr(E)\geq \frac{\zeta_{n} \left(\max \{\min \{\frac{(\kappa_{2} - K^{g}_{x}\kappa_{1})\delta}{K^{g}_{u}}, \delta\}, 0\}\right)^{m}}{\mu(U_{D})}
	\end{equation}
	where $E$ denotes the event that $x_{new} = \phi_{new}(0, 1) \in \phi(0, 1) + \kappa_{2} \delta\mathbb{B}$, $x_{new}$ stores the final state of $\phi_{new}$ in the function call $new\_state$ as is introduced in Section \ref{section:newstate}, $\zeta_{n}$ is given in Section \ref{section:preliminary}, $\mu(U_{D})$ denotes the Lebesgue measure of $U_{D}$, and $K^{g}_{x}$ and $K^{g}_{u}$ come from Assumption \ref{assumption:pcjumpmap}.  
%	where $E$ denotes the event that $\phi$ and $\phi'$ are $(\tau, \kappa \delta)$-close, $\zeta_{D}$ is the Lebesgue measure of the unit ball in $\mathbb{R}^{m}$, $\mu(U_{C})$ denotes the Lebesgue measure of $U_{C}$, and $K_{x}^{f}$ and $K_{u}^{f}$ are from Assumption \ref{assumption:flowlipschitz}.  
%	
%	
%	Let $\psi = (\phi, u)$ be a purely discrete solution pair with clearance $\delta$ with a single jump, i.e., $\max \dom \psi = (0, 1)$. Denote by $x_{0}, x_{1}$ the state $\phi(0, 0)$, $\phi(0, 1)$ respectively. Suppose that  Assumption \ref{assumption:pcjumpmap}  and the second item in Assumption \ref{assumption:inputlibrary} are satisfied. 
%	Suppose that the $new\_state$ function proceeds with $flag = jump$, begins at $x'_{0}\in x_{0} + \delta\mathbb{B}$ and ends at $x'_{1}$. Then for any $\kappa \in (0, 1]$, we have that:
%	\begin{equation}
%		Pr(x'_{1} \in x_{1} + \kappa \delta\mathbb{B})\geq  \frac{\xi_{D} \left(\max \left(\min \left(\frac{(\kappa - \K^{g}_{x})\delta}{K^{g}_{u}}, \delta\right), 0\right)\right)^{m} }{\mu(U_{D})}
%	\end{equation}
%	where $\xi_{D}$ is the Lebesgue measure of the unit ball in $\mathbb{R}^{m}$, $\mu(U_{D})$ denotes the Lebesgue measure of $U_{D}$, and $K^{g}_{x}$ and $K^{g}_{u}$ are from Assumption \ref{assumption:pcjumpmap}.  
\end{lemma}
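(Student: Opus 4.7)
The plan is to isolate the conditions on the jump input drawn in $new\_state$ that guarantee (i) feasibility of the jump, (ii) that the resulting pair lies outside the unsafe set, and (iii) that the post-jump state is within $\kappa_2 \delta$ of $\phi(0,1)$, and then to lower bound the uniform probability of those conditions over $U_D$.

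First I would unpack $new\_state$ in the jump regime. Since $\overline{x}_{v_{cur}} \in \phi(0,0) + \kappa_1 \delta \mathbb{B} \subset \phi(0,0) + \delta \mathbb{B}$ and $\psi$ has clearance $\delta$, the pre-jump state lies in $D'$, so $new\_state$ invokes the discrete simulator of Section \ref{section:jumpsystemsimulator}. By item 2) of Assumption \ref{assumption:inputlibrary} together with Assumption \ref{assumption:uniformsample}, the drawn input $u_{new}(0,0)$ is uniform on $U_D$. Assumption \ref{assumption:pcjumpmap} then gives
\[
|\phi_{new}(0,1) - \phi(0,1)| = |g(\overline{x}_{v_{cur}}, u_{new}(0,0)) - g(\phi(0,0), u(0,0))| \leq K^g_x \kappa_1 \delta + K^g_u |u_{new}(0,0) - u(0,0)|,
\]
so enforcing $\kappa_2 \delta$-closeness reduces to the scalar condition $|u_{new}(0,0) - u(0,0)| \leq (\kappa_2 - K^g_x \kappa_1)\delta / K^g_u$.

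Next I would use Definition \ref{definition:clearance} at the hybrid times $(0,0)$ and $(0,1)$ to handle safety. The requirement $|u_{new}(0,0) - u(0,0)| \leq \delta$ combined with $\overline{x}_{v_{cur}} \in \phi(0,0) + \delta \mathbb{B}$ places $(\overline{x}_{v_{cur}}, u_{new}(0,0))$ inside $D$ and outside $X_u$ (by items 2) and 3) of the clearance definition applied at $(0,0)$), while item 3) at $(0,1)$ lets the simulator pick $u_{new}(0,1)$ in $u(0,1) + \delta \mathbb{B}$ so that the post-jump pair also avoids $X_u$. Conjoining the accuracy and safety constraints gives a single condition $|u_{new}(0,0) - u(0,0)| \leq r$, where $r := \min\{(\kappa_2 - K^g_x \kappa_1)\delta/K^g_u,\ \delta\}$; whenever this quantity is non-positive, we replace it by $0$ and the stated lower bound is trivial.

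Finally, since clearance at $(0,0)$ gives $u(0,0) + \delta \mathbb{B} \subset U_D$, the whole ball $u(0,0) + r \mathbb{B}$ lies inside $U_D$, so uniform sampling hits it with probability $\zeta_n (\max\{r,0\})^m / \mu(U_D)$, yielding the claimed bound. The main obstacle I expect is not the Lipschitz estimate itself but the safety bookkeeping: one must carefully invoke clearance simultaneously at $(0,0)$ and $(0,1)$, verify that the admissible-input ball lies entirely in $U_D$, and justify that the discrete simulator's free choice of $u_{new}(0,1)$ can always be made to land in $u(0,1) + \delta \mathbb{B}$ so that $new\_state$ does not return $false$. Once these points are handled, the probability computation is an immediate volume calculation under the uniform distribution on $U_D$.
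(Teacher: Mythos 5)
Your proposal is correct and follows essentially the same route as the paper's proof: the Lipschitz bound from Assumption \ref{assumption:pcjumpmap} reduces the event $E$ to the input condition $\Delta u \leq \min\{(\kappa_2 - K^g_x\kappa_1)\delta/K^g_u,\ \delta\}$ (the $\delta$ term coming from the clearance/safety requirement), and the bound then follows from the volume of the admissible input ball under the uniform distribution on $U_D$. Your extra bookkeeping (checking the admissible ball lies in $U_D$ and that $u_{new}(0,1)$ can be chosen safely) is slightly more careful than the paper, which leaves those points implicit, but the argument is the same.
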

\begin{proof}
	Since Assumption \ref{assumption:pcjumpmap} is satisfied, we have
	\begin{equation}
	\begin{aligned}
		|\phi(0, 1) - \phi_{new}(0, 1)| &\leq K^{g}_{x}|\phi(0, 0) - x'_{0}| + K^{g}_{u}|u(0, 0) - u'(0, 0)|\\
		&\leq K^{g}_{x}\kappa_{1}\delta + K^{g}_{u}\Delta u
	\end{aligned}
	\end{equation}
	where $\Delta u = |u(0, 0) - u'(0, 0)|$. 
	
	If $\Delta u$ is such that 
	\begin{equation}
	\label{equation:inputclose}
	\Delta u \leq \frac{(\kappa_{2} - K^{g}_{x}\kappa_{1})\delta}{K^{g}_{u}},
	\end{equation}
	then, 
	$$
		|\phi(0, 1) - \phi_{new}(0, 1)| \leq K^{g}_{x}\kappa_{1}\delta + K^{g}_{u}\Delta u \leq \kappa_{2}\delta.
	$$ and, hence, $\phi'(0, 1) \in \phi(0, 1) + \kappa_{2}\delta\mathbb{B}$.
%	Now we want to find $\Delta u$ such that $|\phi(0, 1) - \phi_{new}(0, 1)|\leq \kappa_{2} \delta$, which implies that $\phi_{new}(0, 1)\in \phi(0, 1) + \kappa_{2}\delta \mathbb{B}$. Therefore, we require that
%	\begin{equation}
%		K^{g}_{x}\kappa_{1}\delta + K^{g}_{u}\Delta u \leq \kappa_{2} \delta
%	\end{equation} 
%	which implies that 
%	\begin{equation}
%		\Delta u \leq \frac{(\kappa_{2} - K^{g}_{x}\kappa_{1})\delta}{K^{g}_{u}}.
%	\end{equation}

	From Definition \ref{definition:clearance}, the following is also required such that the safety requirement is met:
	\begin{equation}
	\label{equation:jumpsafety}
		\Delta u \leq \delta.
	\end{equation}
	Therefore, to make both (\ref{equation:inputclose}) and (\ref{equation:jumpsafety}) hold, we require that
	\begin{equation}
	\label{equation:lemmajump}
		\Delta u \leq \min \{\frac{(\kappa_{2} - K^{g}_{x}\kappa_{1})\delta}{K^{g}_{u}}, \delta\}.
	\end{equation}
	Note that Assumption \ref{assumption:uniformsample} is satisfied. Therefore, the probability to choose the input that satisfy (\ref{equation:lemmajump}) is
	\begin{equation}
	\label{equation:jumpsamplelowerbound}
		p_{u} = \frac{\zeta_{n} (\max \{\min \{\frac{(\kappa_{2} - K^{g}_{x}\kappa_{1})\delta}{K^{g}_{u}}, \delta\}, 0\})^{m}}{\mu(U_{D})}
	\end{equation}
	where $\zeta_{n}$ is the Lebesgue measure of the unit ball in $\mathbb{R}^{m}$ and $\mu(U_{D})$ denotes the Lebesgue measure of $U_{D}$.
	Therefore, 
	\begin{equation}
	\label{equation:lemmadiscrete}
		Pr(E) \geq p_{u} = \frac{\zeta_{n} \left(\max \{\min \{\frac{(\kappa_{2} - K^{g}_{x}\kappa_{1})\delta}{K^{g}_{u}}, \delta\}, 0\}\right)^{m}}{\mu(U_{D})}.
	\end{equation}
\end{proof}
\begin{remark}
	From Equation \ref{equation:jumpsamplelowerbound}, it is obvious that the lower bound $p_{u}$ of $E$ is positive when 
	\begin{equation}
	\label{equation:jumppossibilityinequality}
		\kappa_{2} > K^{g}_{x}\kappa_{1}.
	\end{equation}
	Note that when $\phi_{new}(0, 0)\in \phi(0, 0) + \kappa_{1}\delta\mathbb{B}$ and $\phi_{new}(0, 1)\in \phi(0, 1) + \kappa_{2}\delta\mathbb{B}$ are satisfied, $\phi$ and $\phi_{new}$ are $(1, \max (\kappa_{1}\delta, \kappa_{2}\delta))$-close.
\end{remark}
}

\ifbool{conf}{}{
	Next, we show that the concatenated operation keeps the closeness between the hybrid arcs.
	\begin{proposition}
	\label{proposition:concatenateclose}
	Given compact hybrid arcs $\phi_{1}$,  $\phi_{2}$,  $\phi'_{1}$, and $\phi'_{2}$ such that $\phi_{1}$ and $\phi'_{1}$ are $(\tau_{1}, \epsilon_{1})$-close and $\phi_{2}$ and $\phi'_{2}$ are $(\tau_{2}, \epsilon_{2})$-close, where $(T_{1}, J_{1}) = \max \dom \phi_{1}$, $(T'_{1}, J'_{1}) = \max \dom \phi'_{1}$ and $\tau_{1} = \max \{T_{1} + J_{1}, T'_{1} + J'_{1}\}$, then $\phi_{1}|\phi_{2}$ and $\phi'_{1}|\phi'_{2}$ are $(\tau_{1}+ \tau_{2}, \epsilon_{1} + \epsilon_{2})$-close.
\end{proposition}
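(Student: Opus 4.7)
The plan is a case analysis on whether a hybrid time $(t,j)$ lies in $\dom\phi_1$ or in the translated piece $\dom\phi_2+\{(T_1,J_1)\}$ of $\dom(\phi_1|\phi_2)$, verifying each of the two symmetric conditions of Definition \ref{definition:closeness} by invoking the appropriate closeness hypothesis on the corresponding piece and absorbing the resulting time offset at the join.

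First, I would extract two preliminary facts from the closeness of $\phi_1$ and $\phi_1'$. Because $\phi_1,\phi_1'$ are compact and $(\tau_1,\epsilon_1)$-close with $\tau_1=\max\{T_1+J_1,T_1'+J_1'\}$, Definition \ref{definition:closeness} applied at the maxima $(T_1,J_1)$ and $(T_1',J_1')$ requires the $j$-coordinates of matched points to coincide, forcing $J_1=J_1'$; denote the common value by $J$. The same applications, together with the fact that $T_1,T_1'$ are the largest $t$-values at level $J$ in $\dom\phi_1,\dom\phi_1'$, then yield $|T_1-T_1'|<\epsilon_1$. This bound on the join offset is the key quantitative ingredient.

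Next, I verify condition 1) of Definition \ref{definition:closeness} for $\phi:=\phi_1|\phi_2$ and $\phi':=\phi_1'|\phi_2'$. Fix $(t,j)\in\dom\phi$ with $t+j\leq\tau_1+\tau_2$. In Case I, where $(t,j)\in\dom\phi_1\setminus\{(T_1,J)\}$, one has $\phi(t,j)=\phi_1(t,j)$ and $t+j<T_1+J\leq\tau_1$, so $(\tau_1,\epsilon_1)$-closeness yields some $s$ with $(s,j)\in\dom\phi_1'\subset\dom\phi'$, $|t-s|<\epsilon_1$, and $|\phi_1(t,j)-\phi_1'(s,j)|<\epsilon_1$, while $\phi'(s,j)=\phi_1'(s,j)$ (the sole ambiguous endpoint $s=T_1'$ is covered by the compatibility $\phi_1'(T_1',J)=\phi_2'(0,0)$ implicit in $\phi_1'|\phi_2'$ being a hybrid arc). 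In Case II, where $(t,j)\in\dom\phi_2+\{(T_1,J)\}$, set $(t',j'):=(t-T_1,j-J)\in\dom\phi_2$ and apply $(\tau_2,\epsilon_2)$-closeness to obtain $s'$ with $(s',j')\in\dom\phi_2'$, $|t'-s'|<\epsilon_2$, and $|\phi_2(t',j')-\phi_2'(s',j')|<\epsilon_2$. Taking $s:=s'+T_1'$ places $(s,j)\in\dom\phi_2'+\{(T_1',J)\}\subset\dom\phi'$ with $\phi'(s,j)=\phi_2'(s',j')$, and the triangle inequality combined with $|T_1-T_1'|<\epsilon_1$ yields $|t-s|=|t'+T_1-T_1'-s'|\leq|t'-s'|+|T_1-T_1'|<\epsilon_1+\epsilon_2$ and $|\phi(t,j)-\phi'(s,j)|<\epsilon_2<\epsilon_1+\epsilon_2$. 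Condition 2) of Definition \ref{definition:closeness} follows from the symmetric argument with $\phi$ and $\phi'$ swapped.

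The main obstacle is the join point $(T_1,J)$, where clause 2) of Definition \ref{definition:concatenation} forces $\phi(T_1,J)=\phi_2(0,0)$ rather than $\phi_1(T_1,J)$, so closeness of $\phi_1,\phi_1'$ cannot be invoked at this hybrid time as in Case I. The fix is simply to route $(T_1,J)$ through Case II: because $(0,0)\in\dom\phi_2$ implies $(T_1,J)\in\dom\phi_2+\{(T_1,J)\}$, the Case II argument applies verbatim, and the offset $|T_1-T_1'|<\epsilon_1$ from the preliminary step is precisely what allows the $\epsilon_2$-margin on the second piece to absorb the shift between joins while keeping the total closeness bound at $\epsilon_1+\epsilon_2$.
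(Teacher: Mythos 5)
Your proof is correct and follows essentially the same route as the paper's: split $\dom (\phi_{1}|\phi_{2})$ into the $\phi_{1}$ piece and the translated $\phi_{2}$ piece, apply the respective closeness hypotheses on each piece, and absorb the offset at the join via $|T_{1} - T_{1}'| < \epsilon_{1}$. You are in fact somewhat more careful than the paper, which invokes $|T_{1} - T_{1}'| < \epsilon_{1}$ and implicitly $J_{1} = J_{1}'$ without justification and does not address the value of the concatenation at the join point.
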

\begin{proof}
	Since $\phi_{1}$ and $\phi'_{1}$ are $(\tau_{1}, \epsilon_{1})$-close, then 
	\begin{enumerate}
		\item for all $(t, j)\in \dom \phi_{1}$ with $t + j \leq \tau_{1}$, there exists $s$ such that $(s, j)\in \dom \phi'_{1}$, $|t - s|< \epsilon_{1}$, and $|\phi_{1}(t, j) - \phi'_{1}(s, j)| < \epsilon_{1}$;
		\item for all $(t, j)\in \dom \phi'_{1}$ with $t + j \leq \tau_{1}$, there exists $s$ such that $(s, j)\in \dom \phi_{1}$, $|t - s|< \epsilon_{1}$, and $|\phi'_{1}(t, j) - \phi_{1}(s, j)| < \epsilon_{1}$.
	\end{enumerate}

	Since $\phi_{2}$ and $\phi'_{2}$ are $(\tau_{2}, \epsilon_{2})$-close, then 
	\begin{enumerate}
		\item for all $(t, j)\in \dom \phi_{2}$ with $t + j \leq \tau_{2}$, there exists $s$ such that $(s, j)\in \dom \phi'_{2}$, $|t - s|< \epsilon_{2}$, and $|\phi_{2}(t, j) - \phi'_{2}(s, j)| < \epsilon_{2}$;
		\item for all $(t, j)\in \dom \phi'_{2}$ with $t + j \leq \tau_{2}$, there exists $s$ such that $(s, j)\in \dom \phi_{2}$, $|t - s|< \epsilon_{2}$, and $|\phi'_{2}(t, j) - \phi_{2}(s, j)| < \epsilon_{2}$.
	\end{enumerate}

	We want to show that $\phi = \phi_{1}|\phi_{2}$ and $\phi' = \phi'_{1}|\phi'_{2}$ are $(\tau_{1}+ \tau_{2}, \epsilon_{1} + \epsilon_{2})$-close.
	\begin{enumerate}
		\item for all $(t, j)\in \dom \phi$ with $t + j \leq \tau_{1}+ \tau_{2}$, there exists $s$ such that $(s, j)\in \dom \phi'$, $|t - s|< \epsilon_{1} + \epsilon_{2}$, and $|\phi(t, j) - \phi'(s, j)| < \epsilon_{1} + \epsilon_{2}$;
		\item for all $(t, j)\in \dom \phi'$ with $t + j \leq \tau_{1}+ \tau_{2}$, there exists $s$ such that $(s, j)\in \dom \phi$, $|t - s|< \epsilon_{1} + \epsilon_{2}$, and $|\phi'(t, j) - \phi(s, j)| < \epsilon_{1} + \epsilon_{2}$.
	\end{enumerate}
	We start with the first item above. From the definition of concatenation operation; see Definition \ref{definition:concatenation}, we have $\dom \phi = \dom \phi_{1} \cup (\dom \phi_{2} + \{(T_{1}, J_{1})\} )$, where $(T_{1}, J_{1}) = \max \dom \phi_{1}$. For all $(t, j)\in \dom \phi$ with $t + j \leq \tau_{1}+ \tau_{2}$, we consider the following two cases.
	\begin{enumerate}
		\item Consider the case when $(t, j)\in \dom \phi_{1}$. Since $\phi_{1}$ and $\phi'_{1}$ are $(\tau_{1}, \epsilon_{1})$-close, then we can find $s$ such that $(s, j)\in \dom \phi'_{1}\subset \dom \phi'$, $|t - s|< \epsilon_{1} < \epsilon_{1} + \epsilon_{2}$, and $|\phi(t, j) - \phi'(s, j)| = |\phi_{1}(t, j) - \phi'_{1}(s, j)| < \epsilon_{1} < \epsilon_{1} + \epsilon_{2}$; see item 1) of $\phi_{1}$ and $\phi'_{1}$ being $(\tau_{1}, \epsilon_{1})$-close above.
		\item Consider the case when $(t, j)\in (\dom \phi_{2} + \{(T_{1}, J_{1})\} )$. Since $\phi_{2}$ and $\phi'_{2}$ are $(\tau_{2}, \epsilon_{2})$-close, then for all $(t - T_{1}, j - J_{1})\in \dom \phi_{2}$ with $t - T_{1} +  j - J_{1} \leq \tau_{2}$, we can find $s$ such that $(s, j - J_{1})\in \dom \phi'_{2}$, $|t - T_{1} - s|< \epsilon_{2}$, and $|\phi_{2}(t - T_{1}, j - J_{1}) - \phi'_{2}(s, j - J_{1})| < \epsilon_{2}$; see item 1) of $\phi_{2}$ and $\phi'_{2}$ being $(\tau_{2}, \epsilon_{2})$-close above.
		
		Because of Definition \ref{definition:closeness} and $(s, j - J_{1})\in \dom \phi'_{2}$, we have that $s$ is such that $(s + T'_{1}, j)\in \dom \phi'$. Furthermore,  because of $|t - T_{1} - s|< \epsilon_{2}$, we have 
		$$
		|t - (s + T'_{1})|\leq |t - T_{1} - s| + |T_{1} - T'_{1}| < \epsilon_{2} + \epsilon_{1}.
		$$ Because of $|\phi_{2}(t - T_{1}, j - J_{1}) - \phi'_{2}(s, j - J_{1})| < \epsilon_{2}$, we have $$
		|\phi(t, j) - \phi'(s + T'_{1}, j)| = |\phi_{2}(t - T_{1}, j - J_{1}) - \phi'_{2}(s, j - J_{1})| <  \epsilon_{2} < \epsilon_{1} + \epsilon_{2}.
		$$ Therefore, we can find the $s' = s + T'_{1}$ such that $(s', j)\in \dom \phi'$, $|t - s'|< \epsilon_{1} + \epsilon_{2}$, and $|\phi(t, j) - \phi'(s', j)| < \epsilon_{1} + \epsilon_{2}$.
	\end{enumerate}
	The second item in Definition \ref{definition:closeness} can be proved in a similar way. Therefore, $\phi_{1}|\phi_{2}$ and $\phi'_{1}|\phi'_{2}$ are $(\tau_{1}+ \tau_{2}, \epsilon_{1} + \epsilon_{2})$-close.
\end{proof}
}

\ifbool{conf}{}{
	Next, we characterize the probability that a vertex in the search tree that is close to an existing motion plan is selected as $v_{cur}$ by the function call $nearest\_neighbor$ in Algorithm \ref{algo:hybridRRT}. 
	\begin{lemma}
	\label{lemma:nearestvertex}
	Suppose Assumption \ref{assumption:uniformsample} is satisfied. Given a hybrid system $\mathcal{H} = (C, f, D, g)$ with state $x\in \mathbb{R}^{n}$, let $x_{c}\in \mathbb{R}^{n}$ be such that $x_{c} + \delta\mathbb{B}\subset S$, where $S$ is either $C'$ or $D'$. Suppose that there exists a vertex $v$ in the search graph $\mathcal{T} = (V, E)$ such that $\overline{x}_{v}\in x_{c} + 2\delta/5\mathbb{B}$.  Denote $v_{cur}$ the return of the function call $nearest\_neighbor$ in Algorithm \ref{algo:hybridRRT}. The probability that $x_{v_{cur}}\in x_{c} + \delta\mathbb{B}$ is at least $\zeta_{n}(\delta/5)^{n}/\mu(S)$, where $\zeta_{n}$ is given in (\ref{equation:zetan}). 
\end{lemma}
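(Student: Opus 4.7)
The plan is to isolate a simple event on the random sample that forces the desired inclusion, and then read off its probability from the uniform sampling assumption. Let $A$ denote the event that $x_{rand}$ drawn by $random\_state(S)$ lands inside the small ball $x_{c}+(\delta/5)\mathbb{B}$. Since $x_{c}+\delta\mathbb{B}\subset S$, this smaller ball is also contained in $S$, so by Assumption \ref{assumption:uniformsample} one has $Pr(A) = \mu(x_{c}+(\delta/5)\mathbb{B})/\mu(S) = \zeta_{n}(\delta/5)^{n}/\mu(S)$, which already matches the lower bound in the statement.

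Next, I would show that whenever $A$ occurs the nearest neighbor $v_{cur}$ returned by the optimization in Problem \ref{problem:nearestneighborflow} or Problem \ref{problem:nearestneighborjump} must satisfy $\overline{x}_{v_{cur}}\in x_{c}+\delta\mathbb{B}$. Using the hypothesized vertex $v$ with $\overline{x}_{v}\in x_{c}+(2\delta/5)\mathbb{B}$, a single triangle inequality gives $|\overline{x}_{v}-x_{rand}|\le 2\delta/5+\delta/5 = 3\delta/5$. Because $\overline{x}_{v}\in x_{c}+\delta\mathbb{B}\subset S$ and $S$ is contained in the feasibility set $X_{c}$ (or $X_{d}$, depending on the flag), the vertex $v$ is feasible in the nearest-neighbor optimization, so the optimizer cannot do worse, yielding $|\overline{x}_{v_{cur}}-x_{rand}|\le 3\delta/5$. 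A second triangle inequality then produces $|\overline{x}_{v_{cur}}-x_{c}|\le 3\delta/5+\delta/5 = 4\delta/5 < \delta$, i.e., $\overline{x}_{v_{cur}}\in x_{c}+\delta\mathbb{B}$.

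The proof reduces to two triangle inequalities together with a Lebesgue-measure ratio, so there is no substantive technical obstacle. The only point requiring care is choosing the sampling-ball radius (here $\delta/5$) small enough to absorb the $2\delta/5$ offset of the known vertex $v$ while still permitting the optimality-preserving replacement of $v$ by $v_{cur}$ inside a final ball of radius $\delta$; the identity $2\delta/5+\delta/5+\delta/5 = 4\delta/5 < \delta$ confirms that $\delta/5$ suffices and yields exactly the stated bound $\zeta_{n}(\delta/5)^{n}/\mu(S)$.
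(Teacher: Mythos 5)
Your proposal is correct and follows essentially the same approach as the paper: condition on the event that $x_{rand}$ falls in $x_{c}+(\delta/5)\mathbb{B}$, use the triangle inequality with the hypothesized vertex $v$ at distance at most $2\delta/5$ from $x_{c}$ to bound the optimal value of the nearest-neighbor problem by $3\delta/5$, and conclude $|\overline{x}_{v_{cur}}-x_{c}|\le 4\delta/5<\delta$, with the probability given by the Lebesgue-measure ratio. Your direct use of the optimality of $v_{cur}$ (the minimizer cannot do worse than the feasible point $v$) is a slightly cleaner packaging of the paper's case analysis involving a hypothetical vertex $z$ outside $x_{c}+\delta\mathbb{B}$, but it is not a different argument.
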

\begin{proof}
	The proof is similar to the proof for Lemma 4 in \cite{kleinbort2018probabilistic}. Suppose there exists a vertex $z$ in the search graph $\mathcal{T} = (V, E)$ such that $\overline{x}_{z} \in S$ and $\overline{x}_{z}\notin x_{c} +\delta\mathbb{B}$, as otherwise it is immediate that $x_{v_{cur}}\in x_{c} + \delta\mathbb{B}$. We show that if the sampling point $x_{rand}$ returned by the function call $random\_state$ is such that $x_{rand}\in x_{c} + \delta/5\mathbb{B}$, then $x_{v_{cur}}\in x_{c} + \delta \mathbb{B}$.
	
	Since $x_{rand}\in x_{c} + \delta/5\mathbb{B}$, then 
	\begin{equation}
		|x_{rand} - x_{c}| \leq \delta/5.
	\end{equation}
	Since $\overline{x}_{v}\in x_{c}+ 2\delta/5\mathbb{B}$, then
	\begin{equation}
	|\overline{x}_{v} - x_{c}| \leq 2\delta/5.
	\end{equation}
	Therefore, 
	\begin{equation}
		|x_{rand} - \overline{x}_{v}| \leq |x_{rand} - x_{c}| + 	|x_{c} - \overline{x}_{v} |\leq \delta/5 + 2\delta/5 = 3\delta/5.
	\end{equation}
	 
	 Since $\overline{x}_{z}\notin x_{c} +\delta\mathbb{B}$, then 
	 \begin{equation}
	 |\overline{x}_{z} - x_{c}| > \delta.
	 \end{equation}
	 Note that
	 \begin{equation}
	 |x_{rand} - x_{c}| \leq \delta/5.
	 \end{equation}
	 Therefore, 
	 \begin{equation}
	 \delta < |\overline{x}_{z} - x_{c}| \leq |\overline{x}_{z} - x_{rand}| + | x_{rand} - x_{c}| \leq  |\overline{x}_{z} - x_{rand}| + \delta/5.
	 \end{equation}
	 Then we have
	 \begin{equation}
	 	|\overline{x}_{z} - x_{rand}| > 4\delta/5.
	 \end{equation}
	 
	 Since $|\overline{x}_{v} - x_{rand}| \leq 3\delta/5$ and $|\overline{x}_{z} - x_{rand}| > 4\delta/5$, $v$ is closer to $x_{rand}$ than $z$. It implies that $z$ will not be reported as $v_{cur}$ by the function call $nearest\_neighbor$. 
	 
	 If $v$ is reported as $v_{cur}$, then $x_{v_{cur}} = \overline{x}_{v} \in x_{c}+ 2\delta/5\mathbb{B} \subset x_{c} + \delta\mathbb{B}$.
	 If $v$ is not reported as $v_{cur}$, then there must exists another vertex $y$ such that $|\overline{x}_{y} - x_{rand}|\leq 3\delta/5$. Then $|\overline{x}_{y} - x_{c}|\leq |\overline{x}_{y} - x_{rand}| + |x_{rand} - x|\leq  4\delta/5$, which implies that $\overline{x}_{y} \in x_{c} + \delta \mathbb{B}$. This implies that if the sampling point $x_{rand}$ is such that $x_{rand}\in x_{c} + \delta/5\mathbb{B}$, no matter $v$ or $y$ is reported as $x_{v_{cur}}$, we have $x_{v_{cur}}\in x_{c} + \delta\mathbb{B}$.
	 
	 Note that $x_{rand}\in x_{c} + \delta/5\mathbb{B}$ implies $x_{rand}$ is sampled from the ball centered at $x_{c}$ with radius $\delta/5$. Therefore, since Assumption \ref{assumption:uniformsample} is assumed, the probability of $x_{rand}\in x_{c} + \delta/5\mathbb{B}$ is $\zeta_{n}(\delta/5)^{n}/\mu(S)$ where $\zeta_{n}$ denotes the Lebesgue measure of the unit ball in $\mathbb{R}^{n}$.
\end{proof}
\begin{remark}
	Lemma \ref{lemma:nearestvertex} shows that given $x_{c}\in \mathbb{R}^{n}$, when there exists a vertex $v$  such that $\overline{x}_{v}\in x_{c} + 2\delta/5\mathbb{B}$, then the probability that the function call $nearest\_neighbor$ selects a vertex that is close to $x_{c}$ is bounded from below by a positive constant. This lemma is used to provide a positive lower bound over the probability that a vertex that is close enough to the motion plan is returned by the function $nearest\_neighbor$ in Algorithm \ref{algo:hybridRRT}.
\end{remark}
}

The following assumption assumes that the existing motion plan is away from the boundary of initial state set, final state set, and unsafe set, and uses a piecewise-constant input during flows.
	\begin{assumption}\label{assumption:mpproblem}
	Given a motion planning problem $\mathcal{P} = (X_{0}, X_{f}, X_{u}, (C, f, D, g))$, there exists a motion plan $\psi = (\phi, u)$ to $\mathcal{P}$ such that for some $\delta' > 0$
	\begin{enumerate}
		\item  $\phi(0, 0) + \delta'\mathbb{B}\subset X_{0}$;
		\item  $\phi(T, J) + \delta'\mathbb{B}\subset X_{f}$, where $(T, J) = \max \dom \psi$;
		\item  for all $(t, j)\in \dom \psi$, $(\phi(t, j) + \delta'\mathbb{B}, u(t, j) + \delta'\mathbb{B}) \cap X_{u} =\emptyset$;
		\item for all $j\in \mathbb{N}$ such that $I^{j}$ has nonempty interior, $t\mapsto u(t, j)$ is piecewise constant with resolution $\Delta t$.
	\end{enumerate}
\end{assumption}

\ifbool{conf}{
%A preliminary result is introduced as follows. In this result, a positive clearance is assume to exist. See \cite{nwang2021Rapid} for a detailed proof.
}{
Note that Lemmas \ref{lemma:pccontinuouslowerbound} and \ref{lemma:pcdiscretelowerbound} provide the lower bound over the probability that starting from a state that is close to the motion plan, the function call $new\_state$ propagates a new state that is within the clearance of the motion plan. Lemma \ref{lemma:nearestvertex} provides the lower bound over the probability that the function call $nearest\_neighbor$ selects a vertex in the search graph such that its associated state is close enough to the motion plan. Now we are ready to prove our main theorem on probabilistic completeness when there is positive clearance.
}

\ifbool{conf}{}{
	\begin{theorem}
	\label{theorem:pc}
	Given a motion planning problem $\mathcal{P} = (X_{0}, X_{f}, X_{u}, (C, f, D, g))$, suppose that Assumptions \ref{assumption:inputlibrary}, \ref{assumption:uniformsample}, \ref{assumption:flowlipschitz}, and \ref{assumption:pcjumpmap} are satisfied and there exists a motion plan $(\phi, u)$ to $\mathcal{P}$  with clearance $\delta_{clear}$ satisfying Assumption \ref{assumption:mpproblem} for some $\delta' > 0$. Then, the probability that HyRRT fails to find a motion plan $\psi' = (\phi', u')$ such that $\phi'$ is $(\tilde{\tau}, \delta)$-close to $\phi$ after $k$ iterations is at most $a e^{-bk}$, for some constant $a, b\in \mathbb{R}_{> 0}$, where $(T, J) = \max \dom \psi$, $(T', J') = \max \dom \psi'$, $\tilde{\tau} = \max(T+ J, T' + J')$, and $\delta = \min \{\delta_{clear}, \delta'\}$.
\end{theorem}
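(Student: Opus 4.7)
The plan is to decompose the reference motion plan $\psi = (\phi, u)$ into finitely many elementary pieces to which Lemmas \ref{lemma:pccontinuouslowerbound} and \ref{lemma:pcdiscretelowerbound} directly apply, then argue that in each iteration of HyRRT the currently-pending piece is advanced with probability bounded below by a strictly positive constant, and finally conclude by an elementary large-deviations estimate.

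First, using item 4 of Assumption \ref{assumption:mpproblem}, I would subdivide each flow interval $I^j$ of $\dom \psi$ at the breakpoints of the piecewise-constant input and further refine so that every subinterval has length at most $\min\{\Delta t, T_m\}$, and then treat every jump as an additional piece. This yields an ordered sequence $\psi_0, \psi_1, \ldots, \psi_{N-1}$ whose concatenation is $\psi$; each $\psi_i$ is either a purely continuous solution pair with constant input of duration at most $T_m$, or a single-jump solution pair. Setting $\delta = \min\{\delta_{\text{clear}}, \delta'\}$, Definition \ref{definition:clearance} together with Assumption \ref{assumption:mpproblem} guarantees that every piece has clearance at least $\delta$, that $\phi(0,0)$ lies in $X_0$ with a $\delta$-buffer, and that $\phi(T,J)$ lies in $X_f$ with a $\delta$-buffer.

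Next I would pick constants $\kappa_1 < \kappa_2 < 1$ with $\kappa_2 > K^g_x \kappa_1$ and a sufficiently small $\epsilon > 0$, and let $x_i$ denote the endpoint of piece $\psi_{i-1}$, which is the start of $\psi_i$. Assuming inductively that after some iteration the search tree contains a vertex $v$ with $\overline{x}_v \in x_i + (\kappa_1\delta/5)\mathbb{B}$, I would lower-bound the probability that the next iteration adds a vertex whose state lies in $x_{i+1} + \kappa_1\delta\mathbb{B}$, attached by an edge carrying a solution pair $(\overline{\tau}_i, \kappa_2\delta)$-close to $\psi_i$. This bound is the product of three strictly positive factors: the probability $p_n$ (resp.\ $1 - p_n$) that the correct regime is drawn at line 3 of Algorithm \ref{algo:hybridRRT}; the probability from Lemma \ref{lemma:nearestvertex} that $x_{\text{rand}}$ is sampled so that \emph{nearest\_neighbor} returns a vertex in $x_i + \kappa_1\delta\mathbb{B}$; and the probability from Lemma \ref{lemma:pccontinuouslowerbound} (flow case) or Lemma \ref{lemma:pcdiscretelowerbound} (jump case) that the input drawn by \emph{new\_state} produces a piece landing in $x_{i+1} + \kappa_2\delta\mathbb{B}$ while remaining within the $\delta$-clearance of $X_u$. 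Let $p^* > 0$ denote the minimum of these per-piece probabilities over $i = 0, \ldots, N-1$. Chaining the successful pieces via Proposition \ref{proposition:concatenateclose}, the path through the tree represents a solution pair $\psi' = (\phi', u')$ that is $(\tilde\tau, N\kappa_2\delta)$-close to $\psi$. Choosing $\kappa_2$ small enough at the outset so that $N\kappa_2\delta \leq \delta$, the $\delta$-buffers guarantee $\phi'(0,0)\in X_0$, $\phi'(T',J')\in X_f$, and $(\phi', u')\notin X_u$ along its domain, so $\psi'$ is a genuine motion plan that is $(\tilde\tau, \delta)$-close to $\phi$. The probability of failure after $k$ iterations is then dominated by the probability that fewer than $N$ of the $k$ iterations succeed on the currently pending piece; since the per-iteration successes are independent and have probability at least $p^*$, a Chernoff bound on $\mathrm{Binomial}(k, p^*)$ yields a failure bound of the form $a e^{-bk}$ for explicit $a, b > 0$ depending only on $N$ and $p^*$, once $k$ is large enough.

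The main obstacle will be handling interfaces where the endpoint $x_i$ lies in $\overline{C'}\cap D'$: there the random regime draw may pick the wrong branch, and this loss must be absorbed into the constant $p^*$ rather than into the exponent. A secondary difficulty is to check that the lower bounds in Lemmas \ref{lemma:pccontinuouslowerbound} and \ref{lemma:pcdiscretelowerbound} stay uniformly positive across all $N$ pieces; this follows from the uniform clearance $\delta$ together with the uniform Lipschitz constants in Assumptions \ref{assumption:flowlipschitz} and \ref{assumption:pcjumpmap}, but requires some care when a flow piece has duration much smaller than $T_m$, so that the $p_t$ factor in Lemma \ref{lemma:pccontinuouslowerbound} is bounded away from zero.
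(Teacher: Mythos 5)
Your overall strategy matches the paper's: decompose $\psi$ into constant-input flow pieces and single-jump pieces, chain Lemmas \ref{lemma:nearestvertex}, \ref{lemma:pccontinuouslowerbound}, and \ref{lemma:pcdiscretelowerbound} along the sequence of endpoints, concatenate via Proposition \ref{proposition:concatenateclose}, and finish with a Bernoulli/Chernoff tail bound on the number of successful iterations. However, there is a genuine gap in your induction: you fix a single pair $\kappa_1 < \kappa_2$ for all pieces and claim that starting from a vertex in $x_i + (\kappa_1\delta/5)\mathbb{B}$ you obtain a new vertex in $x_{i+1} + \kappa_1\delta\mathbb{B}$. The lemmas do not give this. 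Lemma \ref{lemma:pccontinuouslowerbound} requires $\kappa_2 > 2\kappa_1$ (indeed $\kappa_2/2 > \exp(K^f_x\tau)\kappa_1$) and only guarantees landing in $\phi(\tau,0) + \kappa_2\delta\mathbb{B}$; Lemma \ref{lemma:pcdiscretelowerbound} requires $\kappa_2 > K^g_x\kappa_1$; and Lemma \ref{lemma:nearestvertex} converts a vertex within radius $2\rho/5$ into a returned $v_{cur}$ only within radius $\rho$. So each step necessarily inflates the containment radius by a factor of at least $\tfrac{5}{2}\max\{2, K^g_x\} \geq 5$, and an induction with step-independent radii cannot close. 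The paper resolves this by attaching to the $i$-th endpoint a radius $r_i = \overline{\xi}^{\,m-i+1}\delta$ with $\overline{\xi} \in (0, \tfrac{2}{5\xi})$ and $\xi = \max\{K^g_x, 2\}$, i.e., a geometric progression that starts exponentially small and ends below $\delta$; this simultaneously makes every per-step probability positive and makes the accumulated closeness $\sum_{i=2}^{m} r_i = \tfrac{(1-\overline{\xi}^{\,m-1})\overline{\xi}}{1-\overline{\xi}}\delta < \delta$ without any extra smallness condition on $\kappa_2$ of the form $N\kappa_2\delta \leq \delta$.

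A secondary point you only partially address: the flow-piece duration $\tau$ cannot be chosen merely as $\min\{\Delta t, T_m\}$; it must also satisfy $\tau < \ln\bigl(\tfrac{\kappa_2}{2\kappa_1}\bigr)/K^f_x$ (equivalently $\tau < \ln\bigl(\tfrac{1}{2\overline{\xi}}\bigr)/K^f_x$ once the geometric ratio is fixed) for the lower bound in Lemma \ref{lemma:pccontinuouslowerbound} to be positive, and it must divide $\Delta t$ exactly so that every piece carries a constant input. These constraints determine $m$ (hence the number of required successes) before the radii can be assigned, which is the order of construction the paper follows. With the geometric radii and this choice of $\tau$ in place, the rest of your argument, including the regime-selection factor $\min\{p_n, 1-p_n\}$ and the binomial tail estimate, goes through as in the paper.
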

\begin{proof}
	First, we construct a sequence of hybrid time instances $Q_{h} := \{(t_{i}, j_{i})\in \dom \psi\}_{i  = 1, 2,..., m}$ on the hybrid time domain of $\psi$ where $m$ denotes the total number of the hybrid time instances in this sequence such that 
		\begin{enumerate}
			\item $(t_{1}, j_{1}) = (0, 0)$,
			\item $(t_{m}, j_{m}) = (T, J)$ where $(T, J) = \max \dom \psi$,
			\item for all $i \in \{1, 2, ..., m - 1\}$, either of the following holds:
			\begin{enumerate}
				\item for some constant $\Delta t' \in (0, T_{m}]$ where $T_{m}$ is from Assumption \ref{assumption:inputlibrary},
				\begin{equation}
					\label{equation:deltatprime}
					t_{i + 1} = t_{i} + \Delta t'
				\end{equation}
				\item $j_{i + 1} = j_{i} + 1$;
			\end{enumerate}
		\end{enumerate}
	The construction of $Q_{h}$ is feasible because it truncates the hybrid time domain of the motion plan into purely continuous pieces with constant length, and purely discrete pieces with a single jump. 
	
	Then, for each $i\in \{1, 2,.., m\}$, we construct the radius $r_{i}\in \mathbb{R}_{>0}$ of the circles centered at $\phi(t_{i},  j_{i})$ for each $(t_{i},  j_{i})\in Q_{h}$ such that
	\begin{enumerate}
		\item $r_{m} < \delta$,
		\item for all $i\in \{1, 2,.., m - 1\}$ and a proper $\xi > 1$,
		\begin{equation}
		\label{equation:xi}
			r_{i + 1} > \xi r_{i}
		\end{equation}
	\end{enumerate}
	and collect them in $Q_{r} := \{r_{i}\in \mathbb{R}_{>0}\}_{i = 1, 2,..., m}$. 
	
	Since both $Q_{h}$ and $Q_{r}$ are constructed, for each $i\in \{1, 2,.., m\}$, a ball centered at the $c_{i} := \phi(t_{i}, j_{i})|_{(t_{i}, j_{i})\in Q_{h}}$ with radius $r_{i}\in Q_{r}$ is constructed. 
	
	After this sequence of balls along the motion plan is constructed, we can show that
	\begin{enumerate}
		\item the probability that the event $E_{init}$ occurs is positive, where $E_{init}$ denotes the event that the function call $\mathcal{T}.init$ adds a vertex $v_{0}$ to the search tree $\mathcal{T} = (V, E)$ such that $\overline{x}_{v_{0}}\in c_{1} + \frac{2r_{1}}{5}\mathbb{B}$, namely, one vertex is initialized to the search tree that is within the first ball in the sequence;
		\item  for each $i\in \{1, 2,.., m - 1\}$, the probability that each of events $E_{regime}$, $E_{near}$, $E_{fg}$, and $E_{new}$ occurs is positive, where
		\begin{enumerate}
			\item $E_{regime}$ denotes the event that 1) $r\leq p_{n}$ if $t_{i + 1} = t_{i} + \Delta t'$ and 2) $r > p_{n}$ if $j_{i + 1} = j_{i} + 1$ in the random selection in Line 3 of Algorithm \ref{algo:hybridRRT}, namely, the same regime as the continuous/discrete piece between the hybrid time instances $(t_{i}, j_{i})$ and $(t_{i + 1}, j_{i + 1})$ is selected;
			\item $E_{near}$ denotes the event that $v_{cur}$ returned by the function call $nearest\_neighbor$ is such that $\overline{x}_{v_{cur}}\in c_{i} + r_{i}\mathbb{B}$, namely, the nearest vertex selected by the function call $nearest\_neighbor$ is within the $i$-th circle.
			\item $E_{fg}$ denotes the event that 1) the simulator of continuous dynamics as (\ref{newstate:flow}) is chosen to compute $\psi_{new}$ and $x_{new}$ by the function call $new\_state$ if  $t_{i + 1} = t_{i} + \Delta t'$ and 2) the simulator of discrete dynamics as (\ref{newstate:jump}) is chosen to compute $\psi_{new}$ and $x_{new}$ by the function call $new_state$ if  $j_{i + 1} = j_{i} + 1$.
			\item $E_{new}$ denotes the event that $(\phi_{new}, u_{new})$ computed by the function call $new\_state$ is such that 
			\begin{enumerate}[label= $C{\arabic*}$), leftmargin=*]
				\item $x_{new} \in c_{i + 1}  +\frac{2r_{i + 1}}{5}\mathbb{B}$,
				\item $\phi_{new}$ and $\phi_{i}$ are $(\tilde{\tau}_{i}, r_{i + 1})$-close, where $\tilde{\tau}_{i} = \max (T_{i} + J_{i}, T_{new} + J_{new})$, $(T_{new}, J_{new}) = \max \dom \psi_{new}$, and $(T_{i}, J_{i}) = \max \dom \psi_{i}$,
			\end{enumerate} namely, the function call $new\_state$ extends the search tree to the $i+1$-th circle and the generated solution pair is close to the motion plan.
		\end{enumerate}
	\end{enumerate}
	
	Next, we show that the probabilities of all the events listed above are positive. Since Assumption \ref{assumption:uniformsample} is assumed, the probability that $E_{init}$ occurs for one random selection made by the function call $\mathcal{T}.init$, denoted $p_{0}$, is 
	$$
	p_{0} = \zeta_{n}\frac{(\frac{2r_{1}}{5})^{n}}{\mu(X_{0})} > 0.
	$$
	Suppose the random selections are made for $l$ times in the function call $\mathcal{T}.init$, then the probability that $E_{init}$ occurs at least once in $\mathcal{T}.init$ is
	\begin{equation}
		Pr(E_{init}) = 1 - (1 - p_{0})^{l} > 0.
	\end{equation}
	
	Next, we show that $Pr(E_{regime}) > 0$. Since Assumption \ref{assumption:uniformsample} is assumed, it is clear that
	\begin{equation}
			Pr(E_{regime}) = \left\{\begin{aligned}
				&p_{n}\quad &\text{ if } t_{i + 1} = t_{i} + \Delta t'\\
				&1 - p_{n} \quad &\text{ if } j_{i + 1} = j_{i} + 1
			\end{aligned}\right.
	\end{equation}
	Since $p_{n} \in (0, 1)$, then we have $Pr(E_{regime}) > 0$.
	
	The probability that $E_{near}$ occurs is proved to be positive by Lemma \ref{lemma:nearestvertex}. 
	
	The probability that $E_{fg}$ occurs is characterized by $p_{fg}$ in the function call $new\_state$ such that 
	\begin{equation}
		Pr(E_{fg}) = \left\{
		\begin{aligned}
			&p_{fg} &\text{ if } t_{i + 1} = t_{i} + \Delta t'\\
			&1 - p_{fg} \quad &\text{ if } j_{i + 1} = j_{i} + 1
		\end{aligned}
		\right.
	\end{equation}
	Since $p_{fg} \in (0, 1)$, then we have $Pr(E_{fg}) \geq \max \{p_{fg}, 1 - p_{fg}\} > 0$.
	
	The probability that $E_{new}$ occurs is positive because of
	Lemma \ref{lemma:pccontinuouslowerbound} and Lemma  \ref{lemma:pcdiscretelowerbound} under the condition that $Q_{h}$ and $Q_{r}$ are properly constructed, which is discussed in details later.
	
	 This repeated process can be viewed as $k$ Bernoulli trials with success probability, denoted $p$, that all of the events $E_{regime}$, $E_{near}$, $E_{fg}$, and $E_{new}$ occur where 
	 \begin{equation}
	 \label{equation:p}
	 	p = Pr(E_{regime}) \times Pr(E_{near})\times Pr(E_{fg})\times Pr(E_{new})>0.
	 \end{equation} The planning problem can be solved if 
	\begin{enumerate}
		\item $E_{init}$ occurs;
		\item the Bernoulli trials have $m - 1$ success outcomes where a success outcome means that all of $E_{regime}$, $E_{near}$, $E_{fg}$, and $E_{new}$ occur in the same iteration.
	\end{enumerate}  
%	\begin{figure}[htbp]
%		\centering
%		\incfig[0.9]{}
%		\caption{Illustration of constructing the sequence of balls centered at $c_{1}$,..., $c_{5}$ with radius $r_{1}$,..., $r_{5}$.\label{figure:theorem1construction}}
%	\end{figure}
The key of this proof is to show that it is feasible to properly construct the sequences $Q_{h}$ and $Q_{r}$ such that Lemma \ref{lemma:nearestvertex}, Lemma \ref{lemma:pccontinuouslowerbound}, and Lemma \ref{lemma:pcdiscretelowerbound} can be utilized to provide positive lower bounds over $Pr(E_{near})$ and $Pr(E_{new})$, which is discussed next.

%To specify the construction of $Q_{h}$, the value of $\Delta t'$ in (\ref{equation:deltatprime}) needs to be determined. To specify the construction of $Q_{r}$, the value of $\xi$ in (\ref{equation:xi}) needs to be determined first. The following shows how those values are determined such that Lemma \ref{lemma:nearestvertex}, Lemma \ref{lemma:pccontinuouslowerbound}, and Lemma \ref{lemma:pcdiscretelowerbound} can be utilized.
	
	The lower bound provided by Lemma \ref{lemma:pccontinuouslowerbound} in (\ref{equation:lemmaflow}), denoted $p_{c}$, is
	\begin{equation}
	\label{equation:pc}
		p_{c} := p_{t}\frac{\zeta_{n} \left(\max \left\{\min \left\{\frac{\frac{\kappa_{2}\delta}{2} - \epsilon -\exp(K_{x}^{f}\tau)\kappa_{1}\delta}{K^{f}_{u}\tau \exp(K_{x}^{f}\tau)}, \delta\right\}, 0\right\}\right)^{m}}{\mu(U_{C})}.
	\end{equation}
	Note that $p_{c}$ is positive if
	\begin{equation}
	\label{equation:pccontinuousinequality}
		\frac{\kappa_{2}\delta}{2} - \epsilon -\exp(K_{x}^{f}\tau)\kappa_{1}\delta = (\frac{\kappa_{2}}{2} - \exp(K_{x}^{f}\tau)\kappa_{1})\delta - \epsilon > 0
	\end{equation} 
	where $\tau > 0$ and $\epsilon > 0$ come from Lemma \ref{lemma:pccontinuouslowerbound}.
	If 
	\begin{equation}
	\label{equation:kappa1}
	\kappa_{2} > 2\kappa_{1}.
	\end{equation}
	then, we can set the values for $\tau$ and $\epsilon$ in the following steps such that (\ref{equation:pccontinuousinequality}) holds.
	\begin{enumerate}[label= \textbf{Step} \arabic*)]
		\item Solve the inequations
		$$
		\begin{aligned}
		&\frac{\kappa_{2}}{2} - \exp(K_{x}^{f}\tau)\kappa_{1} > 0\\
		&\tau > 0
		\end{aligned}
		$$ for $\tau$. The analytical solution is
		\begin{equation}
		\label{equation:tauinequation}
			\begin{aligned}
			\tau &< \frac{\ln(\frac{\kappa_{2}}{2\kappa_{1}})}{K_{x}^{f}}\\
			\tau &> 0.
			\end{aligned}
		\end{equation} Note that the upper bound for $\tau$ is determined by the ratio $\frac{\kappa_{2}}{\kappa_{1}}$.
		\item Once the value for $\tau$ in \textbf{Step} 1) is obtained, solve the inequations
		$$
		\begin{aligned}
		&\epsilon < (\frac{\kappa_{2}}{2} - \exp(K_{x}^{f}\tau)\kappa_{1})\delta\\
		&\epsilon > 0
		\end{aligned}
		$$ for $\epsilon$. 
	\end{enumerate}
Note that the positive $\tau$ can be set arbitrarily close to $0$ in \textbf{Step} 1. So we can also impose the following conditions on $\tau$: 
\begin{enumerate}
	\item $\tau\leq T_{m}$ where $T_{m}$ is from Assumption \ref{assumption:inputlibrary};
	\item there exists $l\in \mathbb{N}_{>0}$ such that $l\cdot \tau = \Delta t$, where $\Delta t$ is the resolution of the piece-wise constant in Assumption \ref{assumption:mpproblem}.
\end{enumerate}

Therefore, if (\ref{equation:kappa1}) is satisfied, we can find a pair of $\tau > 0$ and $\epsilon > 0$ such that 
	\begin{enumerate}
		\item $\tau\leq T_{m}$;
		\item there exists $l\in \mathbb{N}_{>0}$ such that $l\cdot \tau = \Delta t$;
		\item (\ref{equation:pccontinuousinequality}) holds.
	\end{enumerate}
	
	The lower bound provided by Lemma \ref{lemma:pcdiscretelowerbound} in (\ref{equation:lemmadiscrete}), denoted $p_{d}$, is
	\begin{equation}
	\label{equation:pd}
		p_{d} :=  \frac{\zeta_{n} \left(\max \{\min \{\frac{(\kappa_{2} - K^{g}_{x}\kappa_{1})\delta}{K^{g}_{u}}, \delta\}, 0\}\right)^{m}}{\mu(U_{D})}.
	\end{equation}
	Note that $p_{d}$ is positive if
	\begin{equation}
	\label{equation:kappa2}
		\kappa_{2} > K_{x}^{g}\kappa_{1}.
	\end{equation}
	
	To satisfy both (\ref{equation:kappa1}) and (\ref{equation:kappa2}), the following is required:
	\begin{equation}
	\label{equation:kappa}
		\kappa_{2} >\xi\kappa_{1}
	\end{equation} where 
	\begin{equation}
	\label{equation:greater2}
		\xi = \max(K_{x}^{g}, 2) \geq 2.
	\end{equation} Hence, the value of $\xi$ in (\ref{equation:xi}) is determined.
	
	Next, we are ready to construct $Q_{h}$ and $Q_{r}$. For any $i\in \{1, 2,..., m -1\}$, set $r_{i} = \kappa_{1}\delta$ and $\frac{2r_{i + 1}}{5} = \kappa_{2}\delta$ in Lemma \ref{lemma:pccontinuouslowerbound} and Lemma \ref{lemma:pcdiscretelowerbound}. 
	Therefore, (\ref{equation:kappa}) implies that
	\begin{equation}
	\label{equation:r1r2}
		\frac{2}{5\xi}r_{i + 1} >  r_{i}.
	\end{equation}
	Arbitrarily pick a $\overline{\xi} \in (0, \frac{2}{5\xi})$. Because $\xi \geq 2$ in (\ref{equation:greater2}), we have  $\overline{\xi} < \frac{1}{5}$. If
	\begin{equation}
	\label{equation:k1k2ratio}
		\overline{\xi}r_{i + 1} = r_{i}
	\end{equation} holds
	for all $i\in \{1, 2, ..., m - 1\}$, both (\ref{equation:r1r2}) and (\ref{equation:kappa}) are satisfied. The relationship (\ref{equation:k1k2ratio}) implies a geometric progression of $Q_{r}$ with common ratio as $\frac{1}{\overline{\xi}}$.
	
	Next, we are going to determine $m$. Since $\overline{\xi}$ is determined, if (\ref{equation:k1k2ratio}) holds, we have 
	$$
		\frac{\kappa_{2}}{\kappa_{1}} = \frac{\frac{r_{i + 1}}{\delta}}{\frac{r_{i}}{\delta}} = \frac{r_{i + 1}}{r_{i}} = \frac{1}{\overline{\xi}}.
	$$ Hence, we have the upper bound for $\tau$ in (\ref{equation:tauinequation}) as
	$$
	\frac{\ln(\frac{\kappa_{2}}{2\kappa_{1}})}{K_{x}^{f}} = \frac{\ln(\frac{1}{2\overline{\xi}})}{K_{x}^{f}}.
	$$ After the upper bound of $\tau$ is determined, the value of $\tau$ can be from the interval $(0,  \frac{\ln(\frac{1}{2\overline{\xi}})}{K_{x}^{f}})$.
	
	Then this fixed $\tau$ is used in the construction of $Q_{h}$ as the value of $\Delta t'$ in (\ref{equation:deltatprime}). As a result, the value of $m$, which denotes the number of hybrid time instances in $Q_{h}$, is such that 
	$m = \frac{T}{\tau} + J + 1$ where $(T, J) = \max \dom \psi$.
	
%	Denote $(T, J) = \max \dom \psi$. Note that it is assumed that for all $j\in \mathbb{N}$ such that $I^{j}$, denoted $[t^{j}_{l}, t^{j}_{h}]$ where $t^{j}_{l} = \min I^{j}$ and $t^{j}_{h} = \max I^{j}$, has nonempty interior, $t\mapsto u(t, j)$ is a piecewise constant function with resolution $\Delta t$. Therefore, $\frac{t^{j}_{h} - t^{j}_{l}}{\Delta t}\in \mathbb{N}_{>0}$ because of Definition \ref{definition:piecewiseconstantfun}. Note that there exists $l\in \mathbb{N}_{>0}$ such that $l\cdot \tau = \Delta t$. Therefore, for all $j\in \mathbb{N}$ such that $I^{j} = [t^{j}_{l}, t^{j}_{h}]$ has nonempty interior, $\frac{t^{j}_{h} - t^{j}_{l}}{\tau}\in \mathbb{N}_{>0}$
	
%	Then, $Q_{h} = \{(t_{i}, j_{i})\in \dom \psi\}_{i  = 1, 2,..., m}$ is constructed in the following steps.
%	\begin{enumerate}
%		\item Set $m = \frac{T}{\tau} + J + 1$;
%		\item For all $(t, j)\in \dom \psi$ such that $\frac{t}{\tau}\in \mathbb{N}$, $(t_{\frac{t}{\tau} + j + 1}, t_{\frac{t}{\tau} + j + 1}) = (t, j)$.
%	\end{enumerate}
	
	Since we determine $m$ in the construction of $Q_{h}$, we are ready to construct the sequence of radius $Q_{r}$. For all $i\in \{1, 2, ...,m\}$, we set
	$$r_{i} = \overline{\xi}^{m - i + 1}\delta.$$
	In this way, the requirement in (\ref{equation:k1k2ratio}) is met.
	
	Until this point, we show that constructing $Q_{h}$ and $Q_{r}$ is feasible. As a result, we show the existence the sequence of ball along the motion plan such that the probability that the events $E_{regime}$, $E_{near}$, and $E_{new}$ occur is positive.

Next, we provide an upper bound in the form of $a e^{-bk}$ for the probability that HyRRT fails to solve the motion planning problem. 
Note that solving the motion planning problem can be viewed as $m - 1$ successful outcomes in the Bernoulli process with success probability $p > 0$; see (\ref{equation:p}). Let $X_{k}$ denote the number of successes in $k$ trials. Similar with the proof for Theorem 1 in \cite{kleinbort2018probabilistic}, we have
\begin{equation}
\begin{aligned}
Pr(X_{k} < m - 1) &\leq \sum_{i = 0}^{m - 2} \left(\begin{matrix}
k\\
i
\end{matrix}\right)p^{i}(1- p)^{k - i}\\
&\leq  \sum_{i = 0}^{m - 2} \left(\begin{matrix}
k\\
m - 2
\end{matrix}\right)p^{i}(1- p)^{k - i}\\
&\leq  \left(\begin{matrix}
k\\
m - 2
\end{matrix}\right)\sum_{i = 0}^{m - 2} (1- p)^{k}\\
&\leq  \left(\begin{matrix}
k\\
m - 2
\end{matrix}\right)\sum_{i = 0}^{m - 2} (e^{-p})^{k} 
=  \left(\begin{matrix}
k\\
m - 2
\end{matrix}\right) (m - 1)e^{-pk}\\
&= \frac{\prod_{i = k - m}^{k}i}{(k - 1)!}(m - 1)e^{-pk}
\leq \frac{1}{(m - 2)!}e^{-pk}\\
\end{aligned}
\end{equation}
where $m< k$, $p< 1/2$ and $(1 - p)\leq e^{-p}$.
As $p$ and $m$ are fixed and independent of $k$, then $ \frac{1}{(m - 2)!}e^{-pk}$ decays to $0$ exponentially with $k$. 

Next, we prove that $\phi$ and $\phi'$ are $(\tilde{\tau}, \delta)$-close. We know that $\phi'$ is the concatenation of $\psi_{new}$'s computed by the function call $new\_state$ in $m - 1$ successful trials. For $i\in \{1, 2, ..., m -1\}$, let $\psi'_{i} = (\phi'_{i}, u'_{i})$ denote the $\psi_{new} = (\phi_{new}, u_{new})$ computed in $i$th successful trial. Then, $\phi'$ is constructed as follows:
$$
	\phi' = \phi'_{1}|\phi'_{2}|...|\phi'_{m -1}.
$$

For $i\in \{1, 2, ..., m - 1\}$, denote the truncation of the motion plan $\psi$ between $(t_{i}, j_{i})$ and $(t_{i + 1}, j_{i + 1})$ following a translation by $(t_{i}, j_{i})$ as $\psi_{i} = (\phi_{i}, u_{i})$. Note that for all $i \in \{1, 2, ..., m - 1\}$, $\psi_{i}$ is either purely continuous or purely discrete with a single jump because of the requirements 3a and 3b of $Q_{h}$.
 From C2) of the event $E_{new}$, when $E_{new}$ occurs, for all $i\in \{1, 2,..., m - 1\}$, $\phi_{i}$ and $\phi'_{i}$ are $(\tilde{\tau}_{i}, r_{i + 1})$-close where $\tilde{\tau}_{i} = \max\{T_{i} + J_{i}, T'_{i} + J'_{i}\}$, $(T_{i}, J_{i}) = \max \dom \psi_{i}$ and $(T'_{i}, J'_{i}) = \max \dom \psi'_{i}$. Because of Proposition \ref{proposition:concatenateclose}, we have that the concatenation result $\phi'$ and $\phi$ are $(\sum_{i = 1}^{m-1}\tilde{\tau}_{i}, \sum_{i = 2}^{m}r_{i})$-close.

Note that 
$$
\begin{aligned}
\sum_{i = 1}^{m-1}\tilde{\tau}_{i} &= \sum_{i = 1}^{m - 1} \max(T_{i} + J_{i}, T'_{i} + J'_{i})\\
&\geq \max (\sum_{i = 1}^{m - 1} T_{i} + J_{i} , \sum_{i = 1}^{m - 1} T'_{i} + J'_{i} ) \\
&= \max (\sum_{i = 1}^{m - 1} T_{i} + \sum_{i = 1}^{m - 1}J_{i} , \sum_{i = 1}^{m - 1} T'_{i} + \sum_{i = 1}^{m - 1}J'_{i} ) \\
&= \max(T+ J, T' + J') =	\tilde{\tau}
\end{aligned}
$$ and
$$
	\begin{aligned}
	 \sum_{i = 2}^{m}r_{i} = \sum_{i = 2}^{m} \overline{\xi}^{m - i + 1}\delta = \frac{(1 - \overline{\xi}^{m - 1})\overline{\xi}}{1 - \overline{\xi}}\delta.
	\end{aligned}
$$
Note that $\overline{\xi} \in (0, \frac{1}{5})$, then $\frac{(1 - \overline{\xi}^{m - 1})\overline{\xi}}{1 - \overline{\xi}} < 1$ and therefore, $\frac{(1 - \overline{\xi}^{m - 1})\overline{\xi}}{1 - \overline{\xi}}\delta < \delta$.

Therefore, the fact that $\phi'$  and $\phi$ are $(\sum_{i = 1}^{m-1}\tilde{\tau}_{i}, \sum_{i = 1}^{m-1}r_{i})$-close implies that $\phi'$  and $\phi$ are $(\tilde{\tau}, \delta)$-close. 
\end{proof}
}
\subsection{Inflated Hybrid System and Main Result}\label{section:inflatedsystem}
In the probabilistic completeness result in \cite[Theorem 2]{kleinbort2018probabilistic}, a motion plan with positive clearance is assumed to exist. However, such assumption is restrictive for hybrid systems. Indeed, if the motion plan reaches the boundary of the flow set or of the jump set, then the motion plan has no clearance\ifbool{conf}{}{; see Definition \ref{definition:clearance}}.  To overcome this issue and to assure that HyRRT is probabilistically complete, the hybrid system $\mathcal{H} = (C, f, D, g)$ is modified as follows. \ifbool{conf}{}{Figure \ref{fig:motionplannoclearance} shows a motion plan to the sample motion planning problem for the actuated bouncing ball system without clearance.
	\begin{figure}[htbp]
		\centering
		\subfigure[\label{fig:motionplannoclearance}]{\includegraphics[width = 0.23\textwidth]{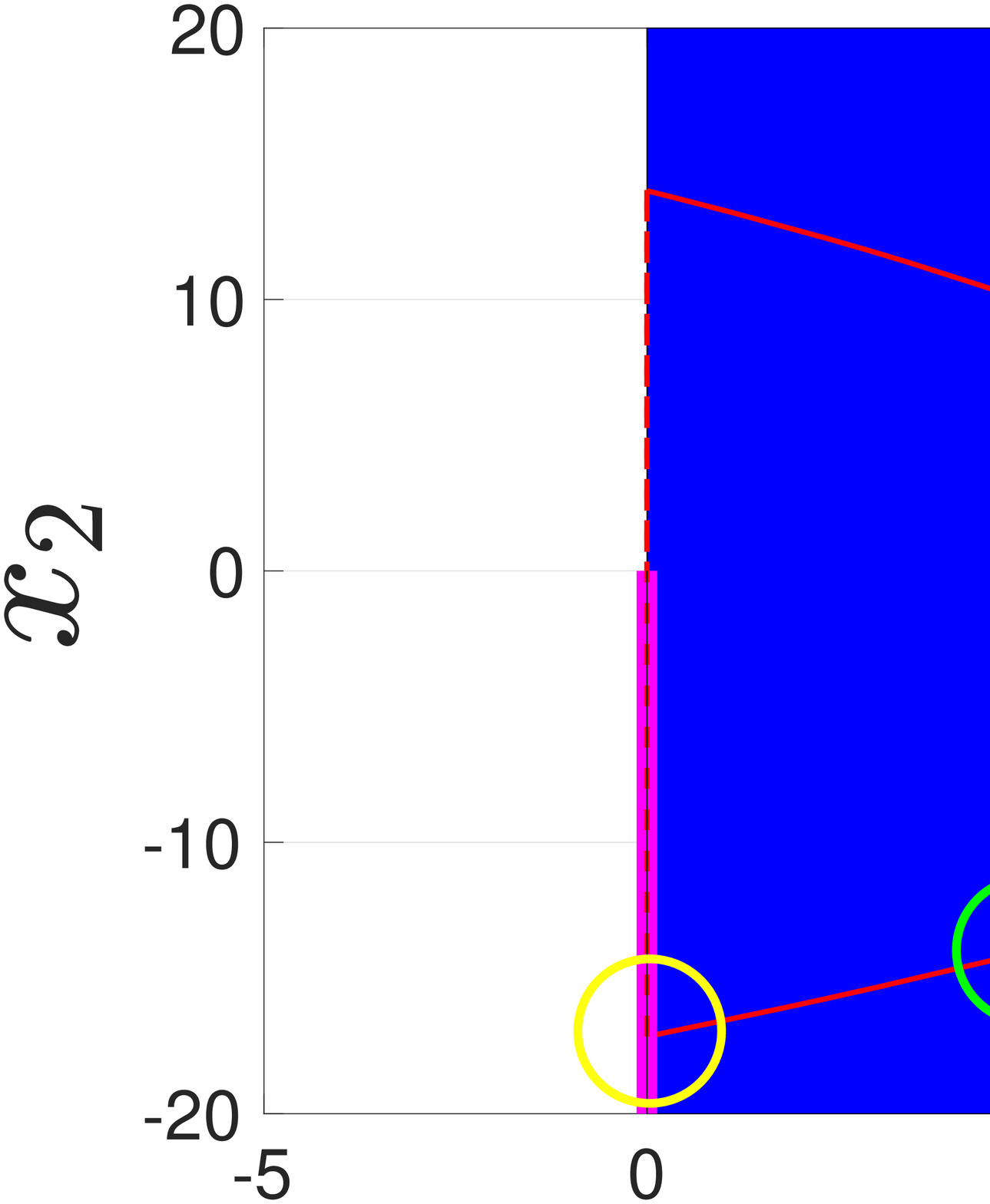}}
		\subfigure[\label{fig:motionplannoclearance_inflated}]{\includegraphics[width = 0.23\textwidth]{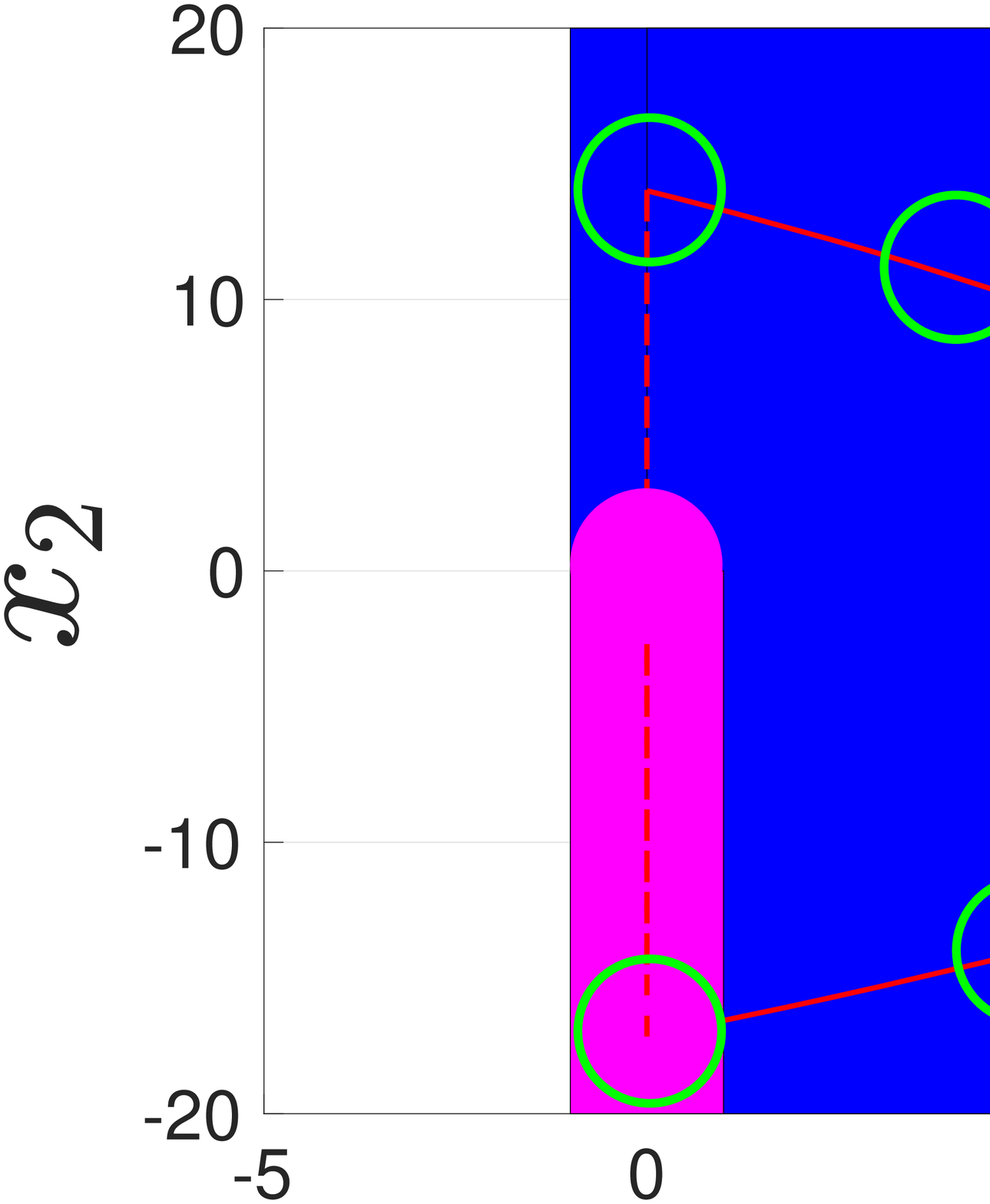}}
		\caption{Figure \ref{fig:motionplannoclearance} shows an sample motion plan for the bouncing ball system without clearance. The red trajectory shows the motion plan. The blue region denotes the projection of flow set on to the state space. The circles denote the boundaries of the balls along the motion plan. Note that the ball surrounded by the yellow circle is not within the  flow set. In this case, no clearance $\delta$ exists. After the inflation, the existence of the green circles along the motion plan with radius $\delta$, as shown in Figure \ref{fig:motionplannoclearance_inflated}, implies the clearance $\delta$ of the motion plan.}
\end{figure}}
\begin{definition}($\delta$-inflation of hybrid system)
	\label{definition:inflation}
	Given a hybrid system $\mathcal{H} = (C, f, D, g)$ and $\delta > 0$, the $\delta$-inflation of the hybrid system $\mathcal{H}$, denoted $\mathcal{H}_{\delta}$, is given by
	\begin{equation}
		\mathcal{H}_{\delta}: \left\{              
		\begin{aligned}               
		\dot{x} & = f_{\delta}(x, u)     &(x, u)\in C_{\delta}\\                
		x^{+} & =  g_{\delta}(x, u)      &(x, u)\in D_{\delta}\\                
		\end{aligned}   \right. 
		\label{model:inflatedhybridsystem}
		\end{equation}
	 where
	\begin{enumerate}[label=\arabic*)]
		\item $C_{\delta} := \{(x, u)\in \mathbb{R}^{n}\times \mathbb{R}^{m}: \exists (y,  v)\in C \text{ such that } x\in y + \delta \mathbb{B}, u\in v+ \delta \mathbb{B}\}$,
		\item $f_{\delta}(x, u) := f(x, u)\quad \forall (x, u)\in C_{\delta}$,
		\item $D_{\delta} := \{(x, u)\in \mathbb{R}^{n}\times \mathbb{R}^{m}: \exists (y,  v)\in D \text{ such that } x\in y + \delta \mathbb{B}, u\in v+ \delta \mathbb{B}\}$,
		\item $g_{\delta}(x, u) := g(x, u)\quad \forall (x, u)\in D_{\delta}$.
	\end{enumerate}
\end{definition}

\ifbool{conf}{}{The following gives an example of the $\delta$-inflation of the actuatedd bouncing ball system.
\begin{example}
	(The actuated bouncing ball system in Example \ref{example:bouncingball}, revisited) The $\delta$-inflation of the actuated bouncing ball system is constructed as follows. 
	\begin{enumerate}[label=\arabic*)]
		\item From (\ref{model:bouncingballflowset}), the flow set $C_{\delta}$ is given by
		$
		\label{model:bouncingballflowsetinflated}
			C_{\delta} := \{(x, u)\in \mathbb{R}^{2}\times \mathbb{R}: x_{1} \geq -\delta\}.
		$
		\item From (\ref{model:bouncingballflow}), the flow map $f_{\delta}$ is given by
		$
		\label{model:bouncingballflowmapinflated}
			f_{\delta}(x, u) := f(x, u) =  \left[ \begin{matrix}
			x_{2} \\
			-\gamma
			\end{matrix}\right], \forall (x, u)\in C_{\delta}.
		$
		\item From (\ref{model:bouncingballjumpset}), the jump set $D_{\delta}$ is given by
		$
		\label{model:bouncingballjumpsetinflated}
		D_{\delta} := \{(x, u)\in \mathbb{R}^{2}\times \mathbb{R}: -\delta \leq x_{1}\leq \delta, x_{2} \leq 0, u\geq -\delta  \}\cup \{(x, u)\in \mathbb{R}^{2}\times \mathbb{R}: x_{1}^{2} + x_{2}^2 \leq \delta^{2}, u\geq -\delta  \}.
			%D_{\delta} := \{(x, u)\in \mathbb{R}^{2}\times \mathbb{R}: \exists (y, v)\in D \text{ such that } |x - y|\leq \delta \\\text{ and } |u - v|\leq \delta \}.
		$
		\item From (\ref{conservationofmomentum}), the jump map $g_{\delta}$ is given by
		$
		\label{model:bouncingballjumpmapinflated}
			g_{\delta}(x, u) := g(x, u) = \left[ \begin{matrix}
			x_{1} \\
			-\lambda x_{2}+u
			\end{matrix}\right], \forall (x, u)\in D_{\delta}
			$
	\end{enumerate}
 Figure \ref{fig:motionplannoclearance_inflated} shows that after  inflation, the clearance of the motion plan for the bouncing ball is equal to $\delta$.
%	\begin{figure}[htbp]
%		\centering
%		\includegraphics[width = 0.4\textwidth]{figures/samplecorridor_inflated.eps}
%		\caption{The above shows that the motion plan in Figure \ref{fig:motionplannoclearance} has clearance $\delta$ after the inflation. The red trajectory shows the motion plan. The blue region denotes the projection of the inflated flow set on to the state space. The purple region denotes the projection of the inflated jump set on to the state space. The circles denote the balls along the motion plan. The existence of the green circles along the motion plan with radius $\delta$ implies the clearance $\delta$ of the motion plan. }\label{fig:motionplannoclearance_inflated}
%	\end{figure}
\end{example}}
\ifbool{conf}{}{
	Next we show that a motion plan to the original motion planning problem is also a motion plan to the motion planning problem for its $\delta$-inflation.
	\begin{proposition}
	\label{proposition:motioninflated}
	Given a motion planning problem $\mathcal{P} = (X_{0}, X_{f}, X_{u}, (C, f, D, g))$ in Problem \ref{problem:motionplanning}, if $\psi$ is a motion plan to $\mathcal{P}$, then $\psi$ is also a motion plan to the motion planning problem $\mathcal{P}_{\delta} = (X_{0}, X_{f}, X_{u},  (C_{\delta}, f_{\delta}, D_{\delta}, g_{\delta}))$, where $(C_{\delta}, f_{\delta}, D_{\delta}, g_{\delta})$ is the $\delta$-inflation of hybrid system of $(C, f, D, g)$ for any $\delta > 0$.
\end{proposition}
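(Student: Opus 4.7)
The plan is to unpack the definition of a motion plan (Problem~\ref{problem:motionplanning}) for both $\mathcal{P}$ and $\mathcal{P}_\delta$, and observe that, since $X_0$, $X_f$, and $X_u$ are the same in both problems, items 1), 3), and 4) of Problem~\ref{problem:motionplanning} transfer from $\psi$ being a motion plan of $\mathcal{P}$ to being a motion plan of $\mathcal{P}_\delta$ with no extra work. The only nontrivial condition to verify is item 2): that $\psi = (\phi, u)$ is a solution pair to the inflated hybrid system $\mathcal{H}_\delta = (C_\delta, f_\delta, D_\delta, g_\delta)$ in the sense of Definition~\ref{definition:solution}.

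The key step is to exploit the monotonicity built into the inflation. First I would show that $C \subset C_\delta$ and $D \subset D_\delta$ by simply taking $(y,v) = (x,u)$ in Definition~\ref{definition:inflation}, which is admissible since $0 \in \delta\mathbb{B}$. Then $f_\delta$ coincides with $f$ on $C$ and $g_\delta$ coincides with $g$ on $D$ by construction. With these set inclusions and map identifications, I would verify Definition~\ref{definition:solution} item by item for $\psi$ as a candidate solution pair of $\mathcal{H}_\delta$: the hybrid time domain, the absolute continuity of $t \mapsto \phi(t,j)$, the Lebesgue measurability and local boundedness of $t \mapsto u(t,j)$ all carry over verbatim from $\psi$ being a solution pair of $\mathcal{H}$. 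The flow inclusion $(\phi(t,j), u(t,j)) \in C$ implies $(\phi(t,j), u(t,j)) \in C_\delta$, and the differential equation $\dot{\phi}(t,j) = f(\phi(t,j), u(t,j)) = f_\delta(\phi(t,j), u(t,j))$ holds for almost every $t$ in the relevant interval. Analogously, at jump times $(\phi(t,j), u(t,j)) \in D \subset D_\delta$ and $\phi(t, j+1) = g(\phi(t,j), u(t,j)) = g_\delta(\phi(t,j), u(t,j))$. The initial condition $(\phi(0,0), u(0,0)) \in \overline{C} \cup D \subset \overline{C_\delta} \cup D_\delta$ holds as well.

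Because the argument is entirely a chain of set inclusions plus the fact that $f_\delta|_C = f$ and $g_\delta|_D = g$, there is really no main obstacle; the proposition is a direct bookkeeping consequence of Definition~\ref{definition:inflation}. The only subtlety worth double-checking is the closure in the initial-condition clause $(\phi(0,0),u(0,0)) \in \overline{C} \cup D$: since $C \subset C_\delta$ implies $\overline{C} \subset \overline{C_\delta}$, this clause transfers correctly to $\mathcal{H}_\delta$ without requiring any additional regularity of the inflation.
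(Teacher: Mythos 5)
Your proposal is correct and follows essentially the same route as the paper's proof: both reduce the claim to verifying that $\psi$ is a solution pair to $(C_{\delta}, f_{\delta}, D_{\delta}, g_{\delta})$ and then check Definition \ref{definition:solution} item by item using the inclusions $C\subset C_{\delta}$, $D\subset D_{\delta}$ and the identities $f_{\delta}|_{C}=f$, $g_{\delta}|_{D}=g$. Your explicit justification of the inclusions via $0\in\delta\mathbb{B}$ and of $\overline{C}\subset\overline{C_{\delta}}$ is a minor refinement of details the paper simply asserts.
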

\begin{proof}
	Note that the initial state set  $X_{0}$, final state set $X_{f}$, and unsafe set $X_{u}$ are the same in both $\mathcal{P}$ and $\mathcal{P}_{\delta}$. Therefore, it free to have $\psi$ satisfying items 1), 3), and 4) in Problem \ref{problem:motionplanning} for $\mathcal{P}_{\delta}$. We only need to show that $\psi$ is a solution pair to $(C_{\delta}, f_{\delta}, D_{\delta}, g_{\delta})$, as is defined in Definition \ref{definition:solution}. 
	
	Since $\psi = (\phi, u)$ is a motion plan to $\mathcal{P}$, then $\psi$ is a solution pair to $(C, f, D, g)$. Therefore, items 1a and 1c in Definition \ref{definition:solution} are satisfied for free for $\mathcal{P}_{\delta}$ because these items are not related to $(C, f, D, g)$. 
	
	Next, we show that $\psi = (\phi, u)$ satisfies the rest items in Definition \ref{definition:solution}. Note that $C\subset C_{\delta}$, $D\subset D_{\delta}$. Therefore, 
	\begin{equation}
		(\phi(0,0), u(0,0))\in \overline{C}\cup D \subset \overline{C_{\delta}}\cup D_{\delta}.
	\end{equation}
	
	For all $j\in \mathbb{N}$ such that $I^{j}= \{t: (t, j)\in \dom (\phi, u)\}$ has nonempty interior, 
	\begin{enumerate}
		\item $(\phi(t, j),u(t, j))\in C \subset C_{\delta}$ for all $t\in \interior I^j$. Then item 1b in Definition \ref{definition:solution} is satisfied.
		\item for almost all $t\in I^j$,
		\begin{equation}
		\dot{\phi}(t,j) = f(\phi(t,j), u(t,j)) = f_{\delta}(\phi(t,j), u(t,j)).
		\end{equation}
		Then item 1d in Definition \ref{definition:solution} is satisfied.
	\end{enumerate}
	
	For all $(t,j)\in \dom (\phi, u)$ such that $(t,j + 1)\in \dom (\phi, u)$,
	\begin{equation}
	\begin{aligned}
	(\phi(t, j), u(t, j))&\in D\subset D_{\delta}\\
	\phi(t,j+ 1) &= g(\phi(t,j), u(t, j)) = g_{\delta}(\phi(t,j), u(t, j))
	\end{aligned}
	\end{equation}
	Then item 2) in Definition \ref{definition:solution} is satisfied. 
	
	Since $\psi$ satisfies all the items in Definition \ref{definition:solution} for data $(C_{\delta}, f_{\delta}, D_{\delta}, g_{\delta})$, then $\psi$ is a solution pair to $(C_{\delta}, f_{\delta}, D_{\delta}, g_{\delta})$ and therefore, is a motion plan to $\mathcal{P}_{\delta}$.
\end{proof}

}\ifbool{conf}{}{
	Next we show that the existing motion plan has positive clearance for the motion planning problem for the $\delta$-inflation of the original hybrid system.
	\begin{lemma}
	\label{lemma:motionplanintlated}
	Let $\psi$ be a motion plan to the motion planning problem $\mathcal{P} = (X_{0}, X_{f}, X_{u}, (C, f, D, g))$ formulated as Problem \ref{problem:motionplanning}. Suppose that there exists $\delta_{free} > 0$ such that for all $(t, j)\in \dom \psi$, $(\phi(t, j) + \delta_{free}\mathbb{B}, u(t, j) + \delta_{free}\mathbb{B}) \cap X_{u} =\emptyset$. Then for arbitrary $\delta > 0$, $\psi$ is a motion plan to the motion planning problem $\mathcal{P}_{\delta} = (X_{0}, X_{f}, X_{u}, (C_{\delta}, f_{\delta}, D_{\delta}, g_{\delta}))$ with clearance, denoted $\delta_{clear}$, such that $\delta_{clear} \geq \min (\delta_{free}, \delta)$, where $\mathcal{H}_{\delta} = (C_{\delta}, f_{\delta}, D_{\delta}, g_{\delta})$ is the $\delta$-inflation of $\mathcal{H} = (C, f, D, g)$.
\end{lemma}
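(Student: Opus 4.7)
The plan is straightforward: first invoke Proposition \ref{proposition:motioninflated} to conclude that $\psi$ is a motion plan to $\mathcal{P}_\delta$, then verify each of the three items of Definition \ref{definition:clearance} separately with the candidate clearance $\delta_{clear} = \min(\delta_{free}, \delta)$. Since the initial set, final set and unsafe set are identical in $\mathcal{P}$ and $\mathcal{P}_\delta$, the only nontrivial work is bounding the clearance from below.

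For items 1) and 2) of Definition \ref{definition:clearance}, I would exploit the construction of the $\delta$-inflation directly. For any $(t,j)\in \dom \psi$ with $t\in \interior I^{j}$, because $\psi$ is a solution pair to $(C,f,D,g)$ we have $(\phi(t,j), u(t,j))\in C$. Then, taking the witness $(y,v) = (\phi(t,j), u(t,j))\in C$ in item 1) of Definition \ref{definition:inflation}, every $(x',u')\in (\phi(t,j)+\delta\mathbb{B})\times(u(t,j)+\delta\mathbb{B})$ lies in $C_\delta$, so $(\phi(t,j)+\delta\mathbb{B}, u(t,j)+\delta\mathbb{B})\subset C_\delta$. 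Since $\delta_{clear}\leq \delta$, we get $(\phi(t,j)+\delta_{clear}\mathbb{B}, u(t,j)+\delta_{clear}\mathbb{B})\subset C_\delta$, establishing item 1) of Definition \ref{definition:clearance}. The identical argument, with $D$ and $D_\delta$ in place of $C$ and $C_\delta$ and applied to every $(t,j)\in\dom\psi$ with $(t,j+1)\in\dom\psi$, yields item 2).

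For item 3), the hypothesis of the lemma gives $(\phi(t,j)+\delta_{free}\mathbb{B}, u(t,j)+\delta_{free}\mathbb{B}) \cap X_u = \emptyset$ for all $(t,j)\in \dom\psi$. Because $\delta_{clear}\leq \delta_{free}$, shrinking the ball to radius $\delta_{clear}$ preserves disjointness from $X_u$, so item 3) holds with this clearance. Combining the three items, every condition in Definition \ref{definition:clearance} is met with $\delta_{clear} = \min(\delta_{free},\delta)$, proving the inequality $\delta_{clear}\geq \min(\delta_{free},\delta)$ for the maximal clearance of $\psi$ as a motion plan to $\mathcal{P}_\delta$.

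I do not anticipate any real obstacle; the lemma is essentially a bookkeeping step that confirms the $\delta$-inflation construction performs exactly what it was designed for, namely, promote a motion plan whose only guaranteed separation is from the unsafe set into one with full clearance of magnitude $\min(\delta_{free},\delta)$ for the inflated system, so that the probabilistic completeness machinery assuming positive clearance becomes applicable.
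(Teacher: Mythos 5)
Your proposal is correct and follows essentially the same route as the paper's proof: invoke Proposition \ref{proposition:motioninflated} to get that $\psi$ is a motion plan to $\mathcal{P}_{\delta}$, then verify items 1) and 2) of Definition \ref{definition:clearance} directly from the construction of $C_{\delta}$ and $D_{\delta}$ with witness $(\phi(t,j),u(t,j))$, and item 3) from the $\delta_{free}$ hypothesis, concluding that all three hold for every radius up to $\min(\delta_{free},\delta)$.
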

\begin{proof}
	Proposition \ref{proposition:motioninflated} has shown that $\psi$ is a motion plan to the problem $\mathcal{P}_{\delta_{inflat}}$. We need to show that $\psi$ has clearance that is at least $\min (\delta_{free}, \delta)$.
	
	Since $\psi$ is a solution pair to $\mathcal{H} = (C, f, D, g)$, then for all $j\in \mathbb{N}$ such that $I^{j}= \{t: (t, j)\in \dom (\phi, u)\}$ has nonempty interior, 
	$$
	(\phi(t, j),u(t, j))\in C
	$$ for all $t\in \interior I^j.$
% 	Note that item 1 of Definition \ref{definition:inflation} shows that $(x, u)\in C_{\delta}$ if there exists $(y,  v)\in C \text{ such that } x\in y + \delta \mathbb{B}, u\in v+ \delta \mathbb{B}$. 
 	Because of item 1 of Definition \ref{definition:inflation}, then 
 	$$
 	(\phi(t, j) +\delta \mathbb{B}, u(t, j) + \delta \mathbb{B} )\subset C_{\delta}.
 	$$ Therefore, for all $\delta_{u}\in [0, \delta]$, 
 	$$(\phi(t, j) +\delta_{u} \mathbb{B}, u(t, j) + \delta_{u} \mathbb{B} )\subset (\phi(t, j) +\delta \mathbb{B}, u(t, j) + \delta \mathbb{B} )\subset C_{\delta}.
 	$$ Hence, item 1) in Definition \ref{definition:clearance} is satisfied for all $\delta_{u}\in [0, \delta]$. 
	
	Similarly, since $\psi$ is a solution pair to $\mathcal{H} = (C, f, D, g)$, for all $(t, j)\in \dom \psi $ such that $(t, j + 1)\in \dom \psi$, 
	$$(\phi(t, j), u(t, j))\in D.$$ 
%	Note that item 3 of Definition \ref{definition:inflation} shows that $(x, u)\in D_{\delta}$ if there exists $(y,  v)\in D \text{ such that } x\in y + \delta \mathbb{B}, u\in v+ \delta \mathbb{B}$. 
	Because of item 3 of Definition \ref{definition:inflation}, then 
	$$
	(\phi(t, j) +\delta \mathbb{B}, u(t, j) + \delta \mathbb{B} )\subset D_{\delta}.
	$$ Therefore, for all $\delta_{u}\in [0, \delta]$, 
	$$
	(\phi(t, j) +\delta_{u} \mathbb{B}, u(t, j) + \delta_{u} \mathbb{B} )\subset (\phi(t, j) +\delta \mathbb{B}, u(t, j) + \delta \mathbb{B} )\subset D_{\delta}.$$ Hence, item 2) in Definition \ref{definition:clearance} is also satisfied for all $\delta_{u}\in [0, \delta]$. 
	
	Note that it is assumed that for all $(t, j)\in \dom \psi$, $(\phi(t, j) + \delta_{free}\mathbb{B}, u(t, j) + \delta_{free}\mathbb{B})\cap X_{u} =\emptyset$. Then, for all $\delta_{u} \in (0, \delta_{free}]$, since $$
	(\phi(t, j) + \delta_{u}\mathbb{B}, u(t, j) + \delta_{u}\mathbb{B})\subset (\phi(t, j) + \delta_{free}\mathbb{B}, u(t, j) + \delta_{free}\mathbb{B}),
	$$ then, $$
	(\phi(t, j) + \delta_{u}\mathbb{B}, u(t, j) + \delta_{u}\mathbb{B}) \cap X_{u} =\emptyset.
	$$
	Hence, item 3) in Definition \ref{definition:clearance} is satisfied for all $\delta_{u} \in [0, \delta_{free}]$. 
	
	In conclusion, for all $\delta_{u} \in [0, \delta]\cap [0, \delta_{free}] = [0, \min(\delta, \delta_{free})]$, all the items in Definition \ref{definition:clearance} are satisfied. Therefore, $\min(\delta, \delta_{free})$ is the lower bound over $\delta_{clear}$ and $\psi$ is a motion plan to $\mathcal{P}_{\delta}$ with clearance $\delta_{clear} \geq \min (\delta_{free}, \delta)$. 
\end{proof}
\begin{remark}
	Lemma \ref{lemma:motionplanintlated} shows that if there is a motion plan to the original motion planning problem for the hybrid systems, then it is guaranteed to be a motion plan with positive clearance for the motion planning problem for the inflation of the hybrid systems.
\end{remark}
}

Note that any solution to $\mathcal{H}$ in (\ref{model:generalhybridsystem}) is a solution to its inflation in (\ref{model:inflatedhybridsystem}). The clearance property in Definition \ref{definition:clearance} is satisfied for free since items 1) and 2) therein are satisfied by constructing $C_{\delta}$ and $D_{\delta}$, and item 3) therein is satisfied by item 3) in Assumption \ref{assumption:mpproblem}. Next, we state our main result.     
\begin{theorem}
	\label{theorem:inflatedpc}
	Given a motion planning problem $\mathcal{P} = (X_{0}, X_{f}, X_{u}, (C, f, D, g))$, suppose that Assumptions \ref{assumption:inputlibrary}, \ref{assumption:uniformsample}, \ref{assumption:flowlipschitz}, and \ref{assumption:pcjumpmap} are satisfied and that there exists a motion plan $(\phi, u)$ to $\mathcal{P}$ satisfying Assumption \ref{assumption:mpproblem} for some $\delta' > 0$. When HyRRT is used to solve the problem $\mathcal{P}_{\delta} = (X_{0}, X_{f}, X_{u},(C_{\delta}, f_{\delta}, D_{\delta}, g_{\delta}))$, where, for some $\delta > 0$, $(C_{\delta}, f_{\delta}, D_{\delta}, g_{\delta})$ denotes the $\delta$-inflation of $(C, f, D, g)$ in (\ref{model:inflatedhybridsystem}), the probability that HyRRT fails to find a motion plan $\psi' = (\phi', u')$ to $\mathcal{P}_{\delta}$ such that $\phi'$ is $(\tilde{\tau}, \tilde{\delta})$-close to $\phi$ after $k$ iterations is at most $a\exp(-bk)$, for some constant $a, b\in \mathbb{R}_{> 0}$, where $(T, J) = \max \dom \psi$, $(T', J') = \max \dom \psi'$, $\tilde{\tau} = \max\{T+ J, T' + J'\}$, and $\tilde{\delta} = \min \{\delta, \delta'\}$.
\end{theorem}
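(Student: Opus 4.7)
The plan is to reduce the statement to the already-established probabilistic completeness result that holds when a motion plan with positive clearance exists (the clearance-based theorem alluded to in Section \ref{section:probabilisticcompleteness:subsection:preliminaries}, modeled after \cite[Theorem 2]{kleinbort2018probabilistic} but adapted to hybrid systems). Although the motion plan $\psi = (\phi, u)$ to $\mathcal{P}$ may have zero clearance with respect to $\mathcal{H}$ (e.g., whenever it grazes $\partial C$ or $\partial D$), the $\delta$-inflation is engineered so that $\psi$, re-interpreted as a trajectory of $\mathcal{H}_{\delta}$, automatically acquires clearance of order $\min\{\delta, \delta'\}$ in the sense of Definition \ref{definition:clearance}. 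Once this is established, one applies the clearance-based result to $\mathcal{P}_{\delta}$ to obtain the exponential bound.

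I would proceed in three steps. First, I would verify that $\psi$ is a solution pair to $\mathcal{H}_{\delta}$, and hence a motion plan to $\mathcal{P}_{\delta}$ since $X_0$, $X_f$, and $X_u$ are shared between the two problems. This is immediate from Definition \ref{definition:inflation}: $C \subset C_{\delta}$, $D \subset D_{\delta}$, and $f_{\delta}$, $g_{\delta}$ agree with $f$, $g$ on these subsets, so every clause of Definition \ref{definition:solution} transfers verbatim.

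Second, I would verify that $\psi$ has clearance $\tilde{\delta} := \min\{\delta, \delta'\}$ as a motion plan to $\mathcal{P}_{\delta}$ by checking the three clauses of Definition \ref{definition:clearance}. The inclusions $(\phi(t,j) + \delta \mathbb{B}, u(t,j) + \delta \mathbb{B}) \subset C_{\delta}$ in item 1) and $(\phi(t,j) + \delta \mathbb{B}, u(t,j) + \delta \mathbb{B}) \subset D_{\delta}$ in item 2) follow directly from the Minkowski-sum definition of $C_{\delta}$ and $D_{\delta}$ applied to the base points $(\phi(t,j), u(t,j))$, which lie in $C$ or $D$ because $\psi$ solves $\mathcal{H}$. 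Item 3), separation from $X_u$ at radius $\delta'$, is exactly item 3) of Assumption \ref{assumption:mpproblem}. Taking the minimum of the two radii yields the claimed clearance $\tilde{\delta}$.

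Third, I would invoke the clearance-based probabilistic completeness result on $\mathcal{P}_{\delta}$. Assumptions \ref{assumption:inputlibrary} and \ref{assumption:uniformsample} carry over since neither involves $(C, f, D, g)$. Assumption \ref{assumption:mpproblem} items 1), 2), and 4) are preserved verbatim, and item 3) supplies the clearance against $X_u$. Assumptions \ref{assumption:flowlipschitz} and \ref{assumption:pcjumpmap} transfer to $f_{\delta}$, $g_{\delta}$ on the enlarged domains, which is the one bookkeeping point that forms the main (mild) obstacle: the Lipschitz bounds are stated on $C$ and $D$, whereas the clearance-based theorem applied to $\mathcal{P}_{\delta}$ requires them on $C_{\delta}$ and $D_{\delta}$. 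Since $f$ and $g$ are already defined globally on $\mathbb{R}^n \times \mathbb{R}^m$, this is resolved by reading Assumptions \ref{assumption:flowlipschitz}--\ref{assumption:pcjumpmap} with the same constants on a $\delta$-neighborhood of $C \cup D$, which is standard. The clearance-based theorem then delivers the $a \exp(-bk)$ failure bound together with the $(\tilde{\tau}, \tilde{\delta})$-closeness of the returned motion plan $\psi'$ to $\phi$, completing the argument.
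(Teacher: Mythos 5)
Your proposal is correct and follows essentially the same route as the paper: the paper first shows (via its Proposition~\ref{proposition:motioninflated} and Lemma~\ref{lemma:motionplanintlated}) that $\psi$ remains a motion plan for $\mathcal{P}_{\delta}$ and acquires clearance $\delta_{clear}\geq\min\{\delta,\delta'\}$ from the Minkowski-sum construction of $C_{\delta}$ and $D_{\delta}$ together with item~3) of Assumption~\ref{assumption:mpproblem}, and then invokes the clearance-based completeness result (Theorem~\ref{theorem:pc}) exactly as you do. Your added remark that Assumptions~\ref{assumption:flowlipschitz} and~\ref{assumption:pcjumpmap} must be read on the inflated sets $C_{\delta}$ and $D_{\delta}$ is a fair bookkeeping point that the paper passes over silently, and your resolution of it is reasonable.
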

\ifbool{conf}{}{
\begin{proof}
	Since $\psi$ is a motion plan to $\mathcal{P}$ and there exists $\delta' > 0$ such that for all $(t, j)\in \dom \psi$, $(\phi(t, j) + \delta' \mathbb{B}, u(t, j) + \delta' \mathbb{B}) \cap X_{u} =\emptyset$, according to Lemma \ref{lemma:motionplanintlated}, $\psi$ is a motion plan to $\mathcal{P}_{\delta}$ with clearance $\delta_{clear} \geq \min(\delta', \delta) >0$. Then, according to Theorem \ref{theorem:pc}, the probability that HyRRT fails to find $\psi'$ is at most $ae^{-bk}$ and the generated $\phi'$ is $(\tilde{\tau}, \tilde{\delta})$-close to $\phi$ where $\tilde{\tau} = \max(T+ J, T' + J')$ and $\tilde{\delta} = \min (\delta_{clear}, \delta')  =  \min (\min(\delta', \delta), \delta') = \min(\delta, \delta')$.
\end{proof}

}
\ifbool{conf}{
%\begin{remark}
%	Theorem \ref{theorem:inflatedpc} renders HyRRT algorithm probabilistically complete for a motion plan without positive clearance. Although a motion plan with piecewise constant input is assumed, the continuous dynamics of the hybrid model is not discretized and the hybrid model is not turned into a discrete-time model. 
%\end{remark}
}{
\begin{remark}
	Theorem \ref{theorem:inflatedpc} renders HyRRT algorithm probabilistically complete for a motion plan without positive clearance. Instead of the original motion planning problem, the motion planning problem for its inflation of hybrid system is fed to HyRRT algorithm and hence, the existence of the clearance is guaranteed by Lemma \ref{lemma:motionplanintlated}. With all other conditions in Theorem \ref{theorem:pc} satisfied, HyRRT algorithm is probabilistically complete for the motion planning problem for the inflated hybrid system and hence, is guaranteed to output a solution that is close to the motion plan to the original motion planning problem. 
\end{remark}
}

\section{HyRRT Software Tool for Motion Planning for Hybrid Systems and Examples}
\label{section:illustration}
Algorithm \ref{algo:hybridRRT} leads to a software tool\footnote{Code at \href{https://github.com/HybridSystemsLab/hybridRRT}{https://github.com/HybridSystemsLab/hybridRRT}.} to solve the motion planning problems for hybrid systems. This software only requires the motion planning problem data $(X_{0}, X_{f}, X_{u}, (C, f, D, g))$, an input library $(\mathcal{U}_{C}, \mathcal{U}_{D})$, a tunable parameter $p_{n}\in (0, 1)$, an upper bound $K$ over the iteration number and two constraint sets $X_{c}$ and $X_{d}$. The tool is illustrated in \ifbool{conf}{Example \ref{example:biped}. We have successfully applied HyRRT to other hybrid systems, including the actuated bouncing ball and a point-mass robotics manipulator.}{Examples \ref{example:bouncingball} and \ref{example:biped}.}
\ifbool{conf}{
	\begin{example}(Walking robot system in Example \ref{example:biped}, revisited) \label{example:bipedillustration}
	\ifbool{conf}{}{The settings for HyRRT planner are given as follows.
	\begin{enumerate}[label=\arabic*)]
		\item The input library $(\mathcal{U}_{C}, \mathcal{U}_{D})$ is constructed as follows. In this illustration, 
		$U^{s}_{C}$ is constructed as 
		$
%		\begin{aligned}
		U_{C}^{s} = \{-2.0, -1.0, 0.0 , 1.0, 2.0\}\times \{-2.0, -1.0, 0.0 , 1.0, 2.0\}$$\times$$
		\{-0.15, $$-0.0875, $$-0.0250,$ $ $$0.0375, $$0.10 \}.
%		\end{aligned}
		$ There are $125$ elements in $U_{C}^{s}$. For each $u^{s}\in U^{s}_{C}$, an input signal $[0, 0.2]\to \{u^{s}\}$ is constructed and added to $\mathcal{U}_{C}$. 
		
		In the biped system, since input has no effect on the jump, then $\mathcal{U}_{D}$ is constructed as $\{(0, 0, 0)\}$.
		%	\item Since $\overline{C'} = \{x\in\mathbb{R}^{2}: x_{1}\leq0\ \text{or} \ x_{2}\leq \overline{x}_{2} \}$ and $g(D) = \{x\in \mathbb{R}^{2}: x_{1}\geq 0 \text{ and } x_{2}\leq -\lambda \overline{x}_{2}\}$, both sets $\overline{C'} $ and $g(D)$ are unbounded. Function $random\_state(\overline{C'}\cup g(D))$ picks a region of interest $\{x\in \mathbb{R}^{2}: x\in [-25, 0]\times [-25, 25] \cup [0, 25]\times [-25, -\lambda \overline{x}_{2}] \}$ and samples uniformly in it.
		\item The tolerance $\epsilon$ in (\ref{equation:tolerance}) is set to $0.3$ and $K$ is set as infinity. The parameter $p_{n}$ is set as $0.9$ to encourage the flow regime. 
	\end{enumerate} }\ifbool{conf}{The simulation result in Figure \ref{fig:illustrationbiped} with tolerance $\epsilon$ set to $0.3$ shows that HyRRT is able to solve the instance of motion planning problem for the walking robot. In this simulation, the constraint set $X_{c}$ is chosen as $\{(x, a)\in \mathbb{R}^{6}\times \mathbb{R}^{3}: h(x)\geq -s\}$ and $X_{d}$ as $\{(x, a)\in \mathbb{R}^{6}\times \mathbb{R}^{3}: h(x) = 0, \omega_{p} \geq -s\}$ with a tunable parameter $s$ set to $0$, $0.3$, $0.5$, $1$, and $2$, such that
	$C = X_{c}|_{s = 0} \subsetneq X_{c}|_{s = 0.3}\subsetneq X_{c}|_{s = 0.5}\subsetneq X_{c}|_{s = 1}\subsetneq X_{c}|_{s = 2}$ and $D = X_{d}|_{s = 0} \subsetneq X_{d}|_{s = 0.3}\subsetneq X_{d}|_{s = 0.5}\subsetneq X_{d}|_{s = 1}\subsetneq X_{d}|_{s = 2}$.}{}
	
	The simulation is implemented in MATLAB and processed by a $3.5$ GHz Intel Core i5 processor. The simulation takes $71.5/85.3/99.4/167.7/242.8$ seconds with $s$ set to $0/0.3/0.5/1.0/2.0$, respectively. The simulation takes at least $71.5$ seconds to finish. Compared with the forward/backward propagation algorithm based on breadth-first search which takes $1608.2$ seconds to solve the same problem, the improvement provided by the rapid exploration is significant: $95.5\%$ computation time improvement. It is also observed that as the sets $X_{c}$ and $X_{d}$ grow, HyRRT considers more vertices in solving Problems \ref{problem:nearestneighborflow} and \ref{problem:nearestneighborjump} leading to higher computation time.%; see Table \ref{table:timecost}. 
\end{example}
	\ifbool{conf}{\vspace{-0.6cm}}{}
\ifbool{conf}{\begin{figure}[H]
		\centering
		\subfigure{\includegraphics[width=0.5\textwidth]{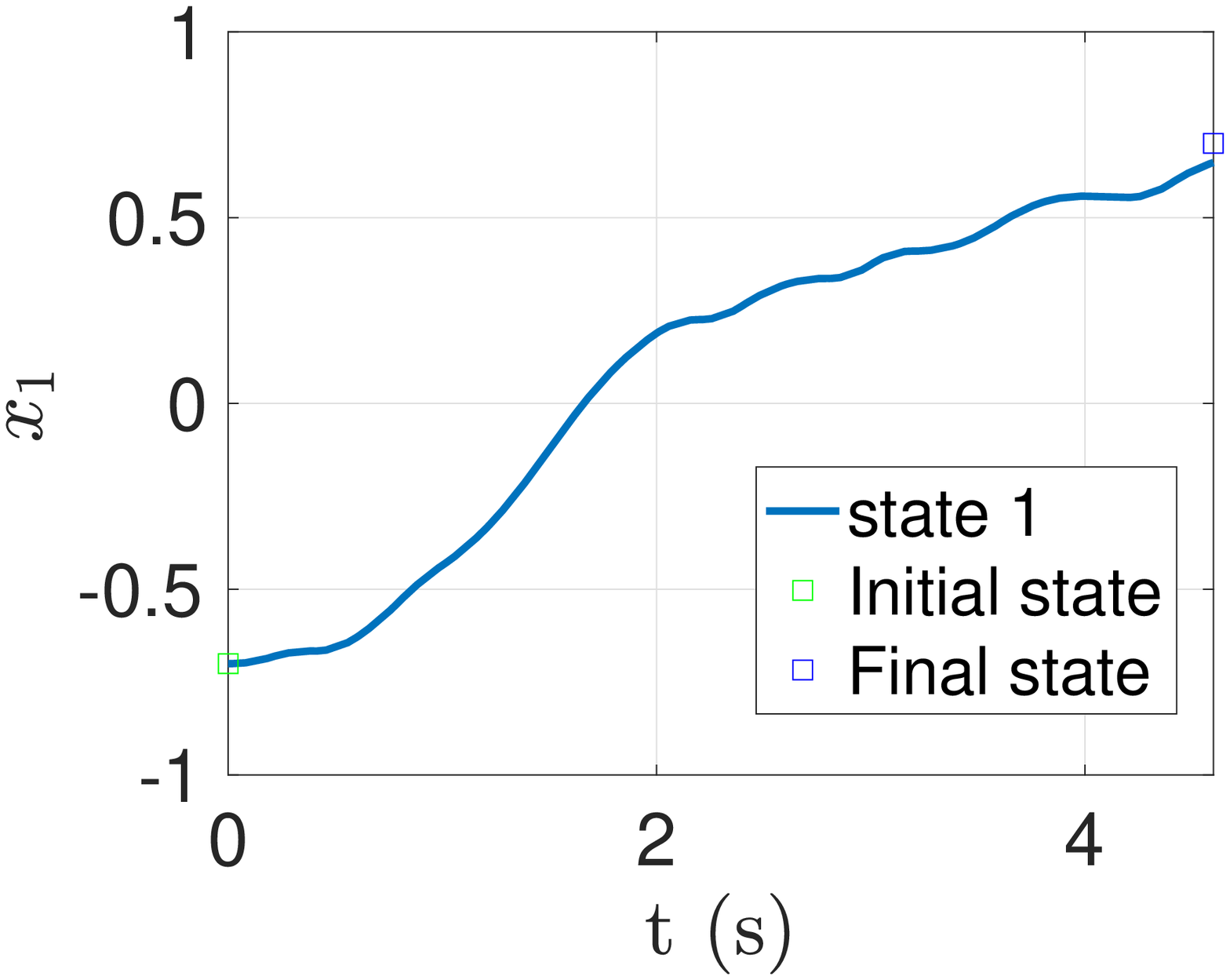}}\hspace{-0.36cm}
		\subfigure{\includegraphics[width=0.5\textwidth]{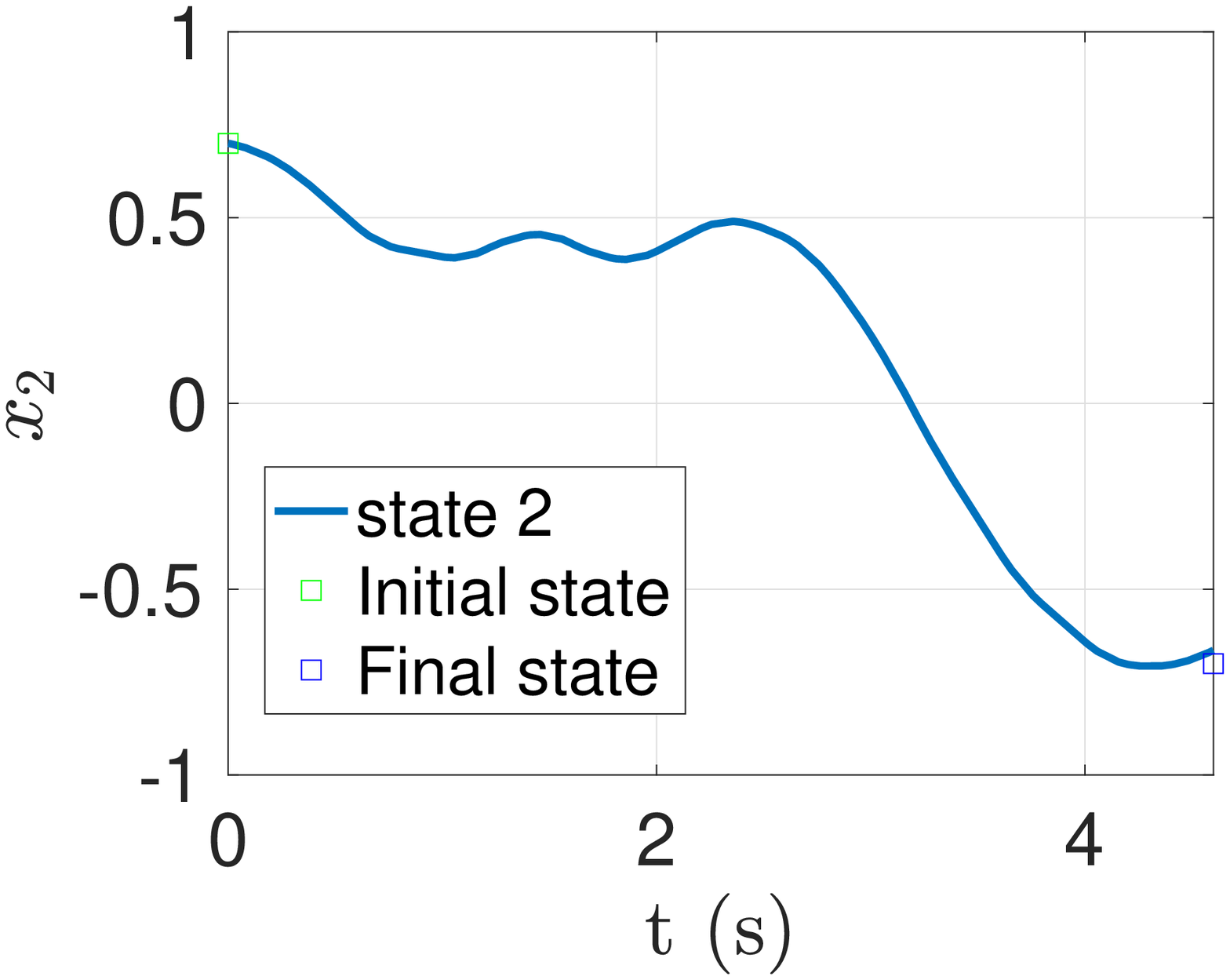}}\hspace{-0.36cm}
		\subfigure{\includegraphics[width=0.5\textwidth]{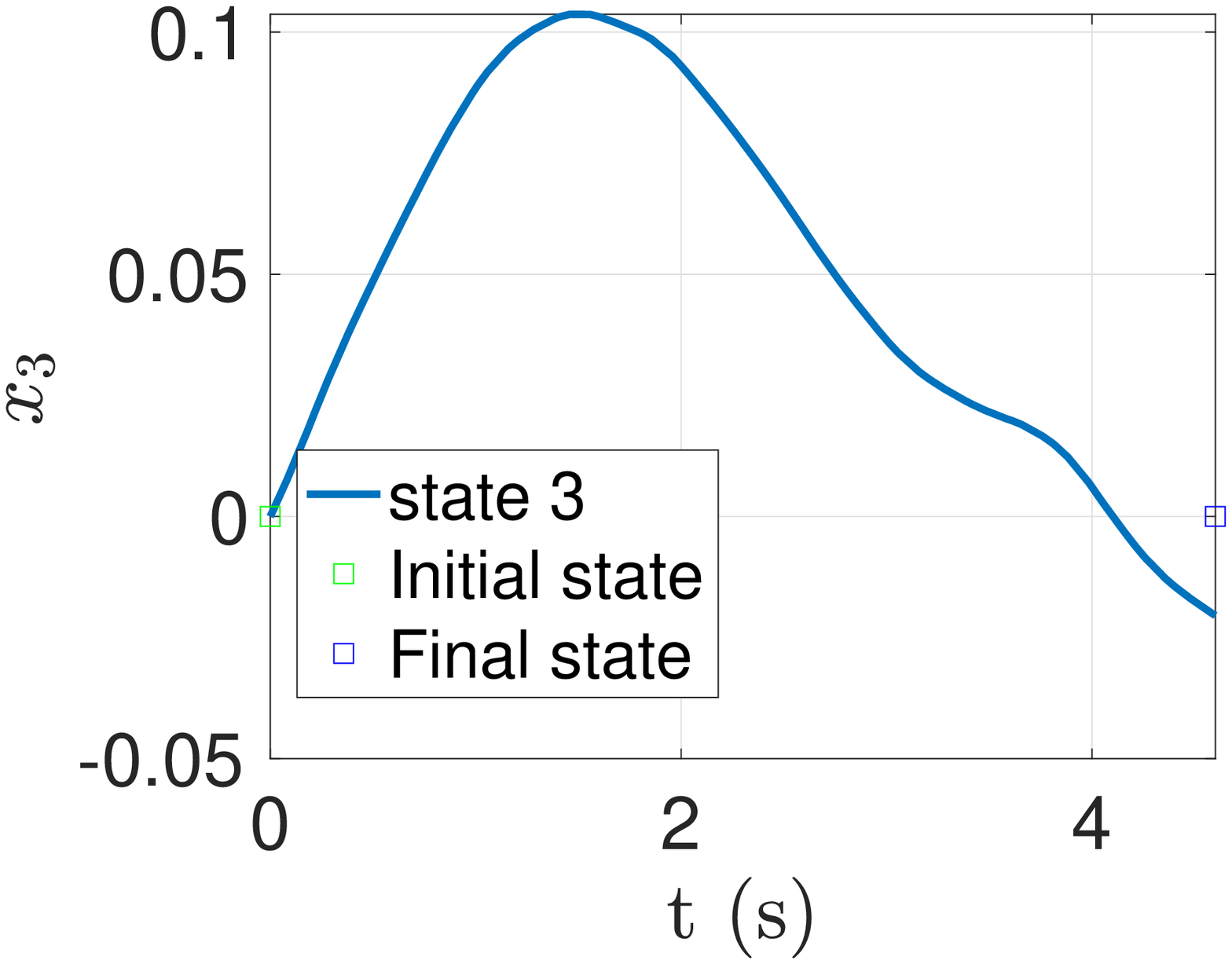}}
		\ifbool{conf}{\vspace{-0.5cm}}{}
		\caption{Selected state trajectories of the generated motion plan for the walking robot system. In each figure above, the green and blue squares denote the corresponding initial and final state components, respectively.}
		\label{fig:illustrationbiped}
\end{figure}}{
\begin{figure}[htbp]
	\centering
	\subfigure[The trajectory of $x_{1}$ component of the generated motion plan.]{\includegraphics[width=0.15\textwidth]{figures/bipedsystem/rrt/state1.eps}}
	\subfigure[The trajectory of $x_{2}$ component of the generated motion plan.]{\includegraphics[width=0.15\textwidth]{figures/bipedsystem/rrt/state2.eps}}
	\subfigure[The trajectory of $x_{3}$ component of the generated motion plan.]{\includegraphics[width=0.15\textwidth]{figures/bipedsystem/rrt/state3.eps}}
	\subfigure[The trajectory of $x_{4}$ component of the generated motion plan.]{\includegraphics[width=0.15\textwidth]{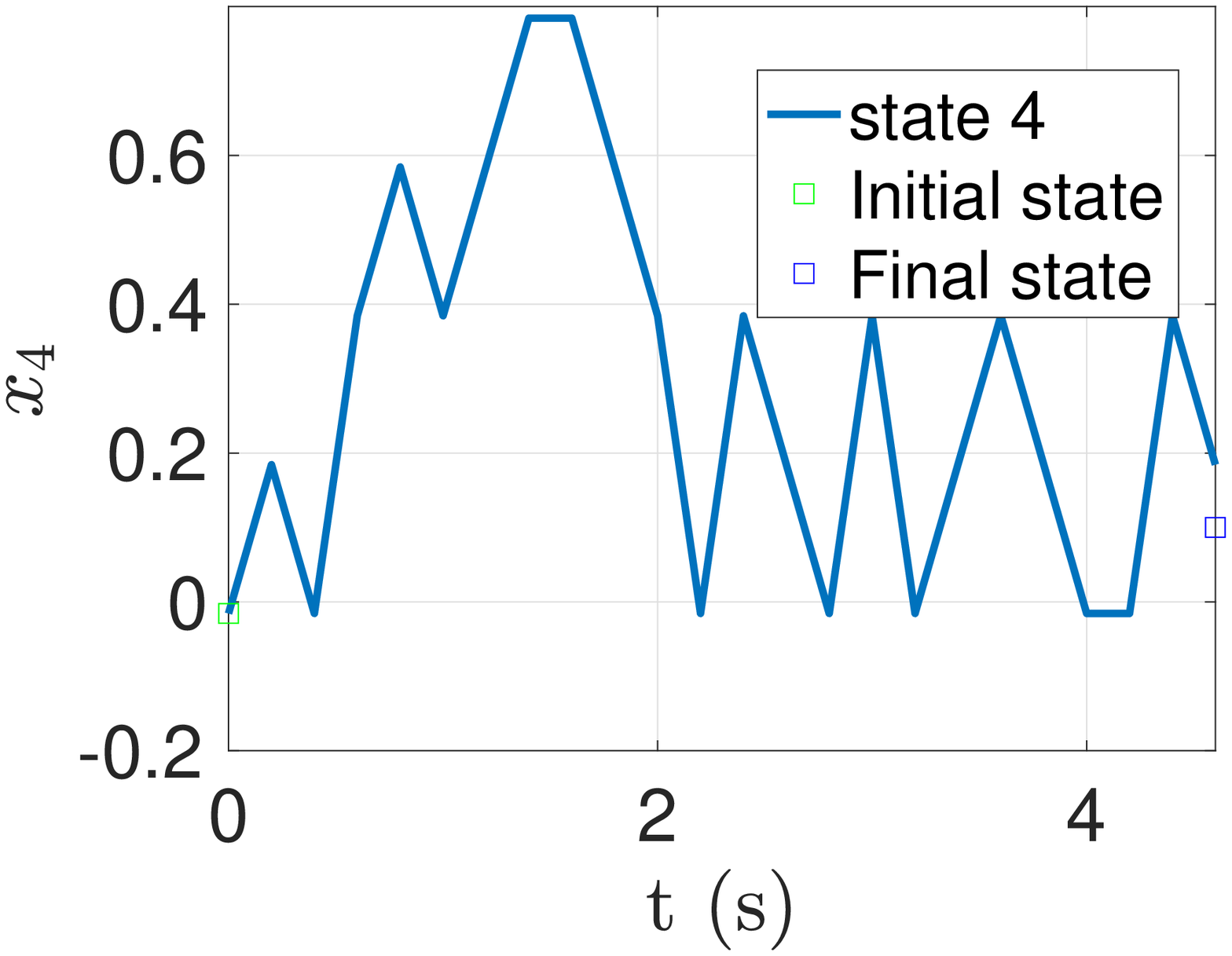}}
	\subfigure[The trajectory of $x_{5}$ component of the generated motion plan.]{\includegraphics[width=0.15\textwidth]{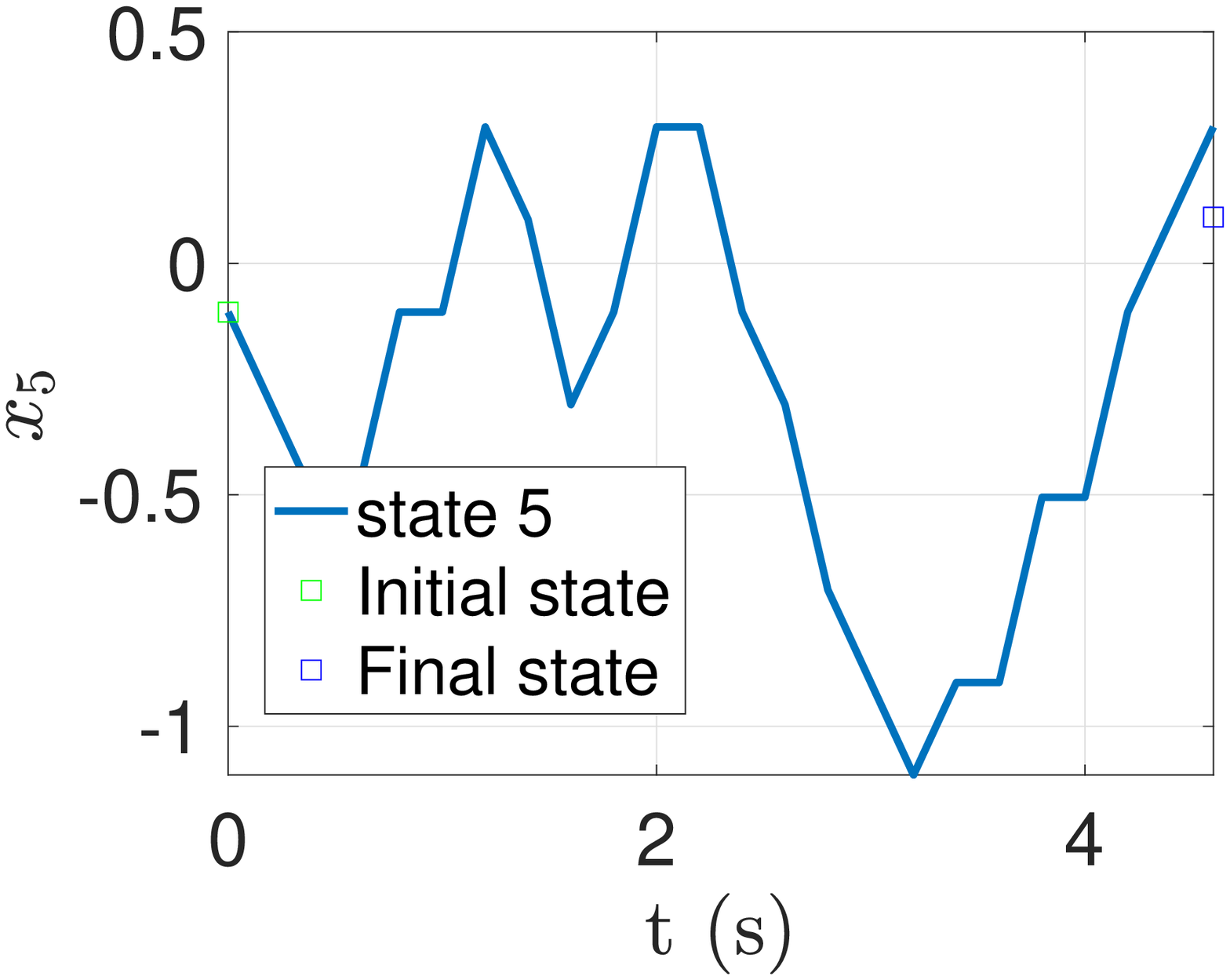}}
	\subfigure[The trajectory of $x_{6}$ component of the generated motion plan.]{\includegraphics[width=0.15\textwidth]{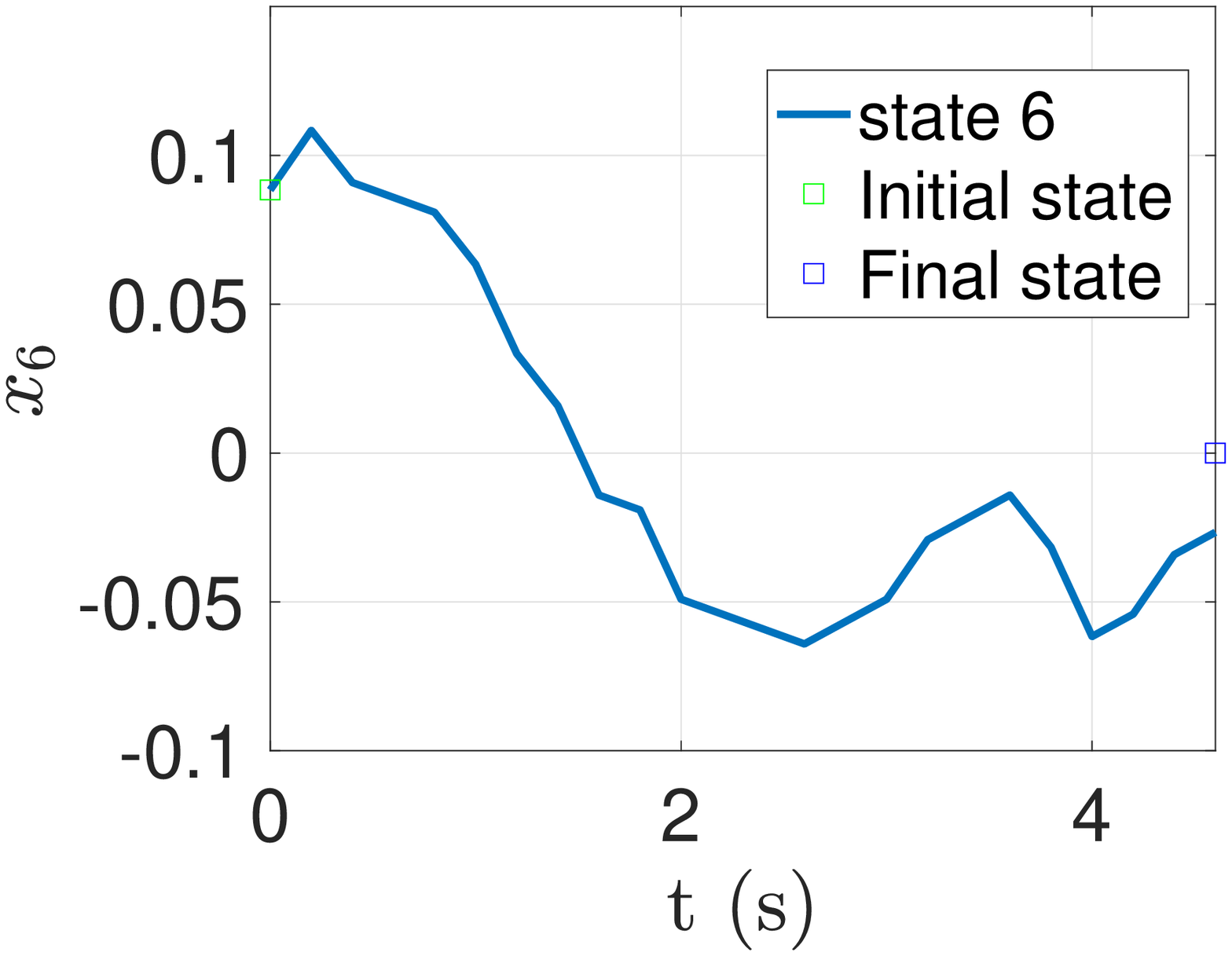}}
	\caption{The above shows the trajectories of each state components of the generated motion plan to the sample motion planning problem for biped system. In each figure in Figure \ref{fig:illustrationbiped}, green square denotes the corresponding component of $X_{0}$ and blue square denotes the corresponding component of $X_{f}$.}
	\label{fig:illustrationbiped}
\end{figure}}
\ifbool{conf}{\vspace{-0.3cm}}{}
%\begin{table}[H]
%	\centering
%	\caption{\label{table:timecost}}
%	\ifbool{conf}{\vspace{-0.3cm}}{}
%	\begin{tabular}{ ccc }
%		& Time Consumption (seconds)& Vertices \\ 
%		\hline
%		HyRRT & 57.6  \htc{[from 71.5 to 242.78 if both adjustments are applied]}& 2357 \\ 
%		FBP & 1608.2 & 3796 \\ 
%	\end{tabular}
%\end{table}
%\begin{table}[H]
%	\centering
%	\caption{\label{table:timecost}}
%	\ifbool{conf}{\vspace{-0.3cm}}{}
%	\begin{tabular}{ ccc }
%		& Time Consumption (seconds)\\ 
%		\hline
%		HyRRT & 57.6  \htc{[from 71.5 to 242.78 if both adjustments are applied]}\\ 
%		FBP & 1608.2 \\ 
%	\end{tabular}
%\end{table}
}{
	\begin{example}(Actuated bouncing ball system in Example \ref{example:bouncingball}, revisited)\label{example:illustrationbb}
	This part illustrates that HyRRT algorithm is able to solve the instance of motion planning problem in Example \ref{example:bouncingball}.
	The inputs fed to the proposed algorithm are given as follows.
	\begin{enumerate}
	\item The input library $(\mathcal{U}_{C}, \mathcal{U}_{D})$ is constructed as follows. In this simulation, $\mathcal{U}_{C}$ only contains one input signal $[0, 0.1] \to \{0\}$. In this illustration, $\mathcal{U}_{D}$ is constructed as $\{0, 1, 2, 3, 4\}$. 
		%	\item In this illustration, sets $\overline{C'} = \{x\in\mathbb{R}^{2}: x_{1}\geq 0\}$ and $g(D) = \{x\in \mathbb{R}^{2}: x_{1} = 0, x_{2}\geq 0\}$ are unbounded. Function $random\_state(\overline{C'}\cup g(D))$ is implemented by randomly selecting an element from $\{x\in \mathbb{R}^{2}: 0\leq x_{1}\leq 25,  -25\leq x_{2}\leq 25\}$.
		\item The tolerance $\epsilon$ in (\ref{equation:tolerance}) is set to $0.2$, $K$ is set as infinity. The parameter $p_{n}$ is set as $0.5$.
	\end{enumerate}
	
	The simulation result is shown in Figure \ref{fig:hybridrrtresultbb}. The simulation is implemented in MATLAB software and processed by a $2.2$ GHz Intel Core i7 processor. The simulation takes $0.34$ seconds to finish.
	\begin{figure}[htbp]
		\centering
		\includegraphics[width = 0.23\textwidth]{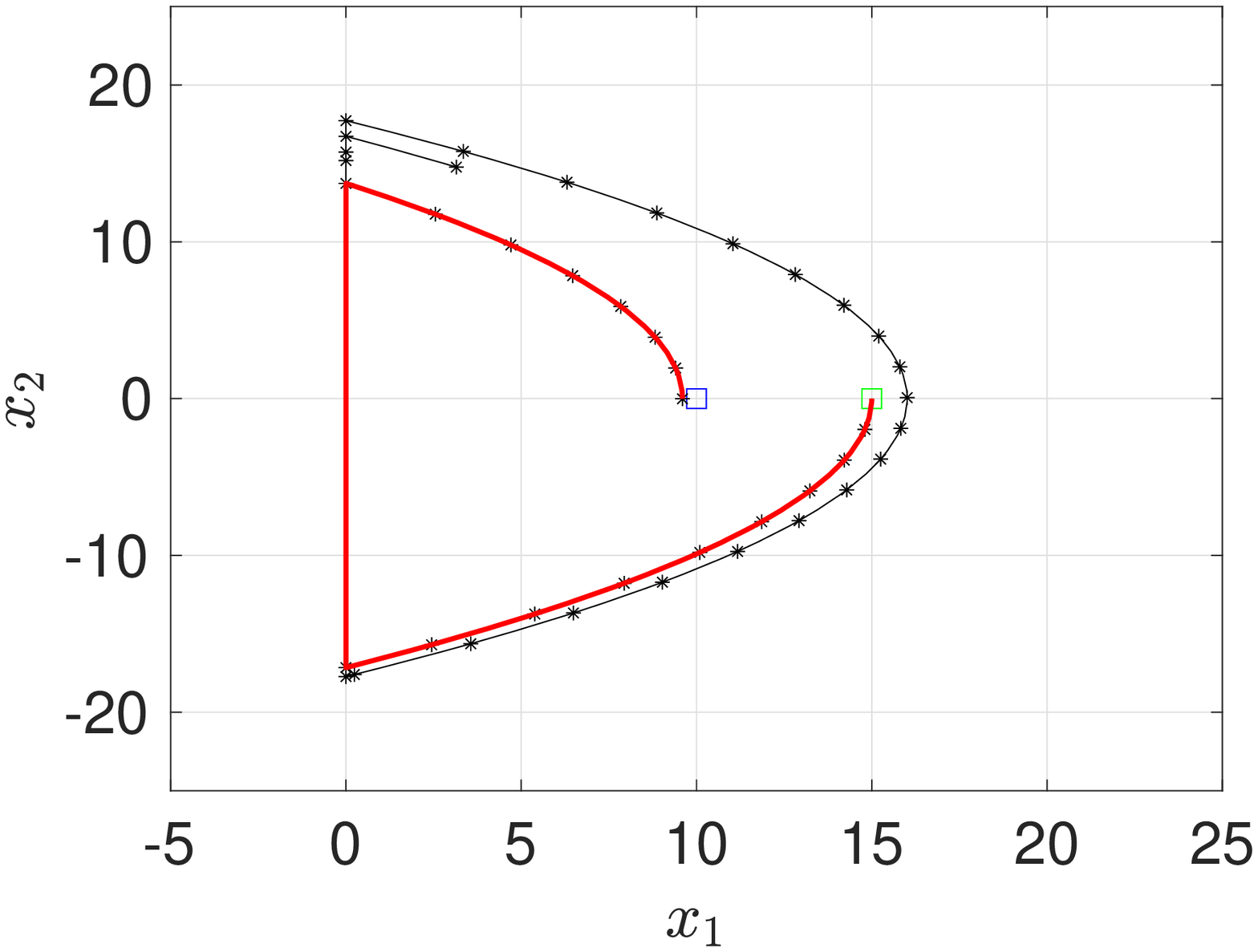}
		\caption{The above shows the search results of HyRRT algorithm to solve the sample motion planning problem in Example \ref{example:bouncingball}. Green square denotes $X_{0}$ and blue square denotes $X_{f}$. The states represented by vertices in the search tree are denoted by *'s. The lines between *'s denote the state trajectories of the solution pairs associated with edges in the search tree. The red trajectory denotes the state trajectory of a motion plan to the given motion planning problem.}\label{fig:hybridrrtresultbb}
	\end{figure}
\end{example}
	
}
The software tool also succeeds in finding motion plans for the actuated bouncing ball and point-mass robotics manipulator systems.

\section{Conclusion and Future Work}\label{section:conclusion}
In this paper, a HyRRT algorithm is proposed to solve motion planning problems for hybrid systems. The proposed algorithm is illustrated in the \ifbool{conf}{walking robot example}{bouncing ball example and walking robot example} and the results show its capacity to solve the \ifbool{conf}{problem}{problems}. In addition, this paper provides a result showing HyRRT algorithm is probabilistically complete under mild assumptions. 
Future research direction includes the optimal motion planning.
%\ifbool{acm}{
%\section*{Acknowledgement}
%This research has been partially supported by the National Science Foundation under Grant no. ECS-1710621, Grant no. CNS-1544396, and Grant no. CNS-2039054, by the Air Force Office of Scientific Research under Grant no. FA9550-19-1-0053, Grant no. FA9550-19-1-0169, and Grant no. FA9550-20-1-0238, and by the Army Research Office under Grant no. W911NF-20-1-0253.
%}{}

%\ifbool{conf}{false}{
%	\bibliographystyle{IEEEtran}
%}
\ifbool{conf}{\vspace{-0.17cm}}{}
\bibliography{references.bib}
\bibliographystyle{IEEEtran}

\ifbool{conf}{}{
\appendix
\section{Appendix}
Lemma 2 in \cite{kleinbort2018probabilistic} is given as follows.
\begin{lemma}
	Let $\pi$, $\pi'$ be two trajectories, with the corresponding control functions $ \Upsilon(t)$, $\Upsilon'(t)$. Suppose that $x_{0} = \pi(0)$, $x'_{0} = \pi'(0)$ and $||x_{0} - x'_{0}|| \leq \delta$, for some constant $\delta > 0$. Let $T > 0$ be a time duration such that for all $t\in [0, T]$ it holds that $\Upsilon(t) = u$, $\Upsilon'(t)= u'$. That is, $\Upsilon$, $\Upsilon'$ remain fixed throughout $[0, T]$. Then
	$$
		||\pi(T) - \pi'(T)|| \leq e^{K_{x}T}\delta + K_{u}Te^{K_{x}T}\Delta u
	$$
	where $\Delta u = ||u - u'||$.
\end{lemma}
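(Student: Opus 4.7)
The plan is a classical Gronwall-type argument applied to the difference of the two integral curves, using the two Lipschitz estimates supplied by Assumption \ref{assumption:flowlipschitz} (with constants renamed as $K_x$ and $K_u$ in the statement). The result is stated for absolutely continuous trajectories driven by constant controls on a single interval, so no hybrid machinery is needed.

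First, since $\pi$ and $\pi'$ satisfy $\dot\pi(t)=f(\pi(t),u)$ and $\dot{\pi'}(t)=f(\pi'(t),u')$ almost everywhere on $[0,T]$, I would pass to integral form and subtract to obtain, for each $t\in[0,T]$,
\[
\pi(t)-\pi'(t) = (x_0-x_0') + \int_0^t \bigl(f(\pi(s),u) - f(\pi'(s),u')\bigr)\, ds.
\]
Adding and subtracting $f(\pi'(s),u)$ inside the integrand and applying the two Lipschitz bounds yields
\[
|f(\pi(s),u) - f(\pi'(s),u')| \leq K_x\,|\pi(s)-\pi'(s)| + K_u\,\Delta u.
\]
Setting $y(t):=|\pi(t)-\pi'(t)|$ and using $y(0)\leq \delta$, this produces the scalar integral inequality
\[
y(t) \leq \delta + K_u\,\Delta u\cdot t + K_x \int_0^t y(s)\, ds, \qquad t\in[0,T].
\]

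Next, I would apply the standard Gronwall comparison: $y$ is dominated by the solution $\bar y$ of the linear ODE $\bar y'(t)=K_x\bar y(t)+K_u\Delta u$ with $\bar y(0)=\delta$, whose closed form is
\[
\bar y(t) = \delta\, e^{K_x t} + \frac{K_u\,\Delta u}{K_x}\bigl(e^{K_x t}-1\bigr).
\]
Finally, using the elementary inequality $e^{K_x T}-1 \leq K_x T\, e^{K_x T}$ (immediate from $1\leq e^{K_x(T-s)}$ for $s\in[0,T]$, integrated over $[0,T]$) gives
\[
y(T) \leq e^{K_x T}\delta + K_u T\, e^{K_x T}\,\Delta u,
\]
which is the claimed bound. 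No serious obstacle is anticipated; the only routine points to verify are the absolute continuity of $y$ (inherited from $\pi,\pi'$) that legitimizes the integral inequality, and the fact that the Lipschitz hypothesis from Assumption \ref{assumption:flowlipschitz} can be applied along both trajectories since $u$ and $u'$ are admissible controls held constant throughout $[0,T]$.
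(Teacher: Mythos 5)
Your argument is correct: the integral-form subtraction, the splitting of the integrand via $f(\pi'(s),u)$, the Gr\"onwall comparison, and the elementary bound $e^{K_xT}-1\le K_xT\,e^{K_xT}$ all check out, and together they yield exactly the stated estimate. Note that the paper itself gives no proof of this lemma --- it is quoted verbatim in the appendix as Lemma 2 of the cited reference --- but your Gr\"onwall-type derivation is the standard one for this result (indeed, one can shortcut the ODE comparison by applying Gr\"onwall directly with the non-decreasing forcing term $a(t)=\delta+K_u\,\Delta u\,t$, which gives $y(t)\le a(t)e^{K_xt}$ and hence the claimed bound immediately), so there is nothing to flag beyond the routine regularity points you already identified.
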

}
\end{document}